\documentclass{article}

\usepackage[preprint]{neurips_2026}

\usepackage[utf8]{inputenc} % allow utf-8 input
\usepackage[T1]{fontenc}    % use 8-bit T1 fonts
\usepackage{hyperref}       % hyperlinks
\usepackage{url}            % simple URL typesetting
\usepackage{booktabs}       % professional-quality tables
\usepackage{amsfonts}       % blackboard math symbols
\usepackage{nicefrac}       % compact symbols for 1/2, etc.
\usepackage{microtype}      % microtypography
\usepackage{xcolor}         % colors

\usepackage{graphicx}
\input{macros}

\usepackage{algorithm}
\usepackage{algorithmic}
\usepackage{enumitem}
\usepackage{listings}
\usepackage{caption}

\title{Delightful Gradients Accelerate Corner Escape}

\author{%
  Jincheng Mei \\
  Google DeepMind\\
  \texttt{jcmei@google.com} \\
  \And
  Ian Osband \\
  Google DeepMind\\
  \texttt{iosband@google.com}  \\
}

\begin{document}

\maketitle

\begin{abstract}
Softmax policy gradient converges at $O(1/t)$, but its transient behavior near sub-optimal corners of the simplex can be exponentially slow.
The bottleneck is self-trapping: negative-advantage actions reinforce the corner policy and can initially push the optimal action backward.
We study \emph{Delightful Policy Gradient} (DG), which gates each policy-gradient term by the product of advantage and action surprisal.
For $K$-armed bandits, we prove that the zero-temperature limit of DG removes this corner-trapping mechanism on a quantitative sector near any sub-optimal corner, yielding a first-exit escape bound logarithmic in the initial probability ratio.
At every fixed temperature, the same local mechanism persists because harmful actions are polynomially suppressed as they become rare.
A key structural insight is that every action better than the corner action is an \emph{ally}: its contribution to escape is non-negative.
Combining corner instability with a monotonic value improvement identity, we prove that DG converges globally to the optimal policy in both bandits and tabular MDPs at an asymptotic $O(1/t)$ rate.
We also show, via an exact counterexample, that this tabular mechanism can fail under shared function approximation.
In MNIST contextual bandits with a shared-parameter neural network, DG nevertheless recovers from bad initializations faster than standard policy gradient, suggesting that the counterexample marks a boundary of the theory rather than a practical prohibition.
\end{abstract}

%%%%%%%%%%%%%%%%%%%%%%%%%%%%%%%%%%%%%%%%%%%%%%%%%%%%%%%%%%%%%%%%%%%%%%%%%%%%%%%%
%%%%%%%%%%%%%%%%%%%%%%%%%%%%%%%%%%%%%%%%%%%%%%%%%%%%%%%%%%%%%%%%%%%%%%%%%%%%%%%%
\section{Introduction}
\label{sec:intro}

Policy gradient is central to modern reinforcement learning~\citep{sutton1999policy} and underpins large-language-model alignment~\citep{ouyang2022training}.
With a softmax parameterization, policy gradient guarantees $O(1/t)$ convergence to globally optimal policies~\citep{mei2020convergence, agarwal2021theory}.
However, this guarantee hides a serious transient pathology: even with exact gradients, softmax policy gradient can take exponential time to escape sub-optimal corners~\citep{li2023softmax}.
Can a simple modification to the gradient update remove this corner trap, without changing the policy class or the optimization landscape?

The mechanism is self-trapping.
Near a sub-optimal corner where $\pi(j) \approx 1$, the softmax gradient flow for any arm $a$ satisfies $\dot\theta(a) = \pi(a)\,U(a)$.
The optimal arm's logit always increases, but its growth rate $\pi(1)\,U(1)$ is throttled by $\pi(1) \approx 0$.
Meanwhile, arms worse than $j$ pull the mean reward below $r(j)$, giving the corner arm a positive advantage; since $\pi(j) \approx 1$, its logit grows at nearly full scale.
The corner arm therefore gains on the optimal arm, creating a region of the simplex where the sub-optimal arm's log-probability grows faster than the optimal arm's~\citep{mei2020convergence}.
The result is a self-reinforcing trap: the more concentrated the policy becomes on the wrong arm, the harder it is to escape.

Entropy regularization addresses this by changing the optimization landscape, yielding linear convergence to a biased optimum~\citep{mei2020convergence, cen2022fast, lan2023policy, ahmed2019entropy}.
Natural policy gradient and policy mirror descent improve the geometry through Fisher preconditioning~\citep{kakade2001natural, zhan2023accelerating}, but the corner pathology persists for methods that follow the vanilla gradient direction~\citep{li2023softmax}.

The \emph{Delightful Policy Gradient} (DG) takes a different route~\citep{osband2026delightfulpolicygradient, osband2026delightfuldistributedpolicygradient, osband2026doesgradientsparkjoy}.
DG gates each policy-gradient term by \emph{delight}, the product of advantage and action surprisal.
In the zero-temperature limit, this removes every negative-advantage term entirely.
At any fixed temperature, the surprisal factor still suppresses harmful rare actions polynomially as they become rarer.
A structural insight of the analysis is that every action better than the corner action is an \emph{ally}: its contribution to escape is non-negative, so allies help the optimal arm escape rather than competing with it.

We formalize this picture for general $K$-armed bandits and tabular MDPs.
First, on a quantitative sector near any sub-optimal corner, the zero-temperature limit of DG ensures that the optimal arm's logit grows faster than the corner arm's, and finite-temperature DG preserves the same local sector gap (Theorems~\ref{thm:corner_escape}--\ref{thm:sigmoid_escape}).
Second, in the zero-temperature limit this yields a first-exit escape bound logarithmic in the initial probability ratio, replacing PG's potentially exponential corner transient with a logarithmic one (Theorem~\ref{thm:escape_time}). 
Third, we prove that EG converges globally to the optimal policy in both bandits
and tabular MDPs: monotonic value improvement, a Bellman-optimality characterization
of limit points, and corner instability combine to give global convergence at
an asymptotic $O(1/t)$ rate
(Theorems~\ref{thm:global_conv}--\ref{thm:mdp_convergence},
Corollaries~\ref{cor:rate}--\ref{cor:mdp_rate}).
Notably, because sub-optimal corners are repellers under EG/DG rather than positive-measure attractors as under PG, the global convergence proofs are substantially simpler, no indirect escape argument is needed (Remark~\ref{rem:proof_comparison}).
Fourth, the surprisal factor is essential for this rarity-sensitive suppression: gating by advantage alone assigns harmful actions a non-vanishing weight even when they are extremely rare (Remark~\ref{rem:surprisal}).
Finally, we show that this mechanism need not survive shared function approximation.
With shared parameters and conflicting advantages across states, DG can admit a sub-optimal interior fixed point, and we give an exact counterexample (Section~\ref{sec:counterexample}).
On MNIST contextual bandits with a shared-parameter MLP, however, DG still recovers from bad initializations faster than PG, suggesting the counterexample identifies a boundary of the theory rather than a practical prohibition (Section~\ref{sec:experiments}).

\paragraph{Relation to prior work.}
\citet{mei2020convergence} prove $O(1/t)$ convergence for softmax PG, with constants that can be exponentially small near corners, and \citet{li2023softmax} make this lower bound precise.
Closest in spirit is \citet{mei2020gravitational}, who propose \emph{escort policy gradient} by changing the policy transform itself.
We keep the softmax policy class fixed and instead modify the gradient update through delight gating.
The contribution is a mechanism-level analysis of how DG removes the corner trap: zero temperature eliminates harmful terms entirely, finite temperature suppresses them polynomially, and the resulting sector gap yields a logarithmic first-exit bound.

%%%%%%%%%%%%%%%%%%%%%%%%%%%%%%%%%%%%%%%%%%%%%%%%%%%%%%%%%%%%%%%%%%%%%%%%%%%%%%%%
\section{Settings and Algorithms}
\label{sec:algorithms}

We consider a finite Markov Decision Process (MDP) defined by the tuple $(\mathcal{S}, \mathcal{A}, P, r, \gamma, \rho)$. Here, $\mathcal{S}$ and $\mathcal{A}$ are finite state and action spaces, $P(s'|s, a)$ is the transition probability from state $s$ to $s'$ under action $a$, $r(s, a) \in [0, 1]$ is the reward function, $\gamma \in [0, 1)$ is the discount factor, and $\rho$ is the initial state distribution. The agent follows a stochastic policy $\pi(a|s)$, which we parameterize using the softmax function,
\begin{equation}
\pi_{\theta}(a|s) = \frac{\exp(\theta(s, a))}{\sum_{a' \in \mathcal{A}} \exp(\theta(s, a'))},
\end{equation}
where $\theta \in \mathbb{R}^{|\mathcal{S}| \times |\mathcal{A}|}$. For a given policy $\pi$, the state-value function $V^{\pi}(s)$ and action-value function $Q^{\pi}(s, a)$ are defined as the expected discounted return,
\begin{equation*}
V^{\pi}(s) = \mathbb{E}_{\pi} \left[ \sum_{t=0}^{\infty} \gamma^t r(s_t, a_t) \mid s_0 = s \right], \quad Q^{\pi}(s, a) = \mathbb{E}_{\pi} \left[ \sum_{t=0}^{\infty} \gamma^t r(s_t, a_t) \mid s_0 = s, a_0 = a \right],
\end{equation*}
The goal is to maximize the expected discounted reward $V(\pi) = \mathbb{E}_{s \sim \rho} [V^{\pi}(s)]$. We denote the discounted state visitation distribution as $d_{\rho}^{\pi}(s) = (1-\gamma) \sum_{t=0}^{\infty} \gamma^t \text{Pr}(s_t = s | \rho, \pi)$.

For illustration of intuition purpose, we will consider the simplest bandit case.

\begin{definition}[$K$-armed bandit]
\label{assump:bandit}
$K \ge 2$ arms with reward vector $r \in [0,1]^K$.
Unique optimal arm $a^* = \argmax_a r(a)$.
The policy is parameterized by logits $\theta \in \R^K$ through the softmax map $\pi_\theta(a) = e^{\theta(a)} / \sum_{a'} e^{\theta(a')}$.
Without loss of generality (by relabeling arms), we order rewards $r(1) > r(2) > \cdots > r(K)$, so $a^* = 1$.
\end{definition}

\begin{minipage}[t]{0.49\textwidth} % Added [t]
    \vspace{-15pt} % Forces baseline to the top
    \begin{algorithm}[H]
    \caption{Softmax Policy Gradient (PG)}
    \label{alg:pg_mdp}
    \begin{algorithmic}
        \STATE {\bf Input:} Learning rate $\alpha > 0$.
        \STATE {\bf Initialize:} $\theta_1(s, a)$ for all $s, a$.
        \WHILE{$t \ge 1$}
            \FOR{each state $s \in \mathcal{S}$}
                \FOR{each action $a \in \mathcal{A}$}
                    \STATE $U(s,a) \leftarrow Q^{\pi_t}(s, a) - V^{\pi_t}(s)$.
                \ENDFOR
                \STATE \scalebox{0.9}{$\Delta \theta_t(s, \cdot) \leftarrow \sum\limits_{a \in \mathcal{A}} \pi_{\theta_t}(a|s) \;\nabla_\theta \log \pi_{\theta_t}(a|s)\; U(s,a)$}.
            \ENDFOR
            \STATE $\theta_{t+1} \leftarrow \theta_t + \alpha\, \Delta \theta_t$.
        \ENDWHILE
    \end{algorithmic}
    \end{algorithm}
\end{minipage}
\hfill
\begin{minipage}[t]{0.48\textwidth} % Added [t]
    \vspace{-15pt} % Forces baseline to the top
    \begin{algorithm}[H]
    \caption{Delightful Policy Gradient (DG)}
    \label{alg:dg_mdp}
    \begin{algorithmic}
        \STATE {\bf Input:} Stepsize $\alpha > 0$, temperature $\eta > 0$.
        \STATE {\bf Initialize:} $\theta_1(s, a)$ for all $s, a$.
        \WHILE{$t \ge 1$}
            \FOR{each state $s \in \mathcal{S}$}
                \FOR{each action $a \in \mathcal{A}$}
                    \STATE $U(s,a) \leftarrow Q^{\pi_t}(s, a) - V^{\pi_t}(s)$.
                    \STATE $\ell(s,a) \leftarrow -\log \pi_{\theta_t}(a|s)$.
                    \STATE $w(s,a) \leftarrow \sigma\!\bigl(U(s,a)\;\ell(s,a)\,/\,\eta\bigr)$.
                \ENDFOR
                \STATE \scalebox{0.9}{$\Delta \theta_t(s, \cdot) \leftarrow \sum\limits_{a \in \mathcal{A}} w(s,a)\;\pi_{\theta_t}(a|s) \;\nabla_\theta \log \pi_{\theta_t}(a|s)\; U(s,a)$}.
            \ENDFOR
            \STATE $\theta_{t+1} \leftarrow \theta_t + \alpha\, \Delta \theta_t$.
        \ENDWHILE
    \end{algorithmic}
    \end{algorithm}
\end{minipage}

% \subsection{Standard Softmax Policy Gradient}
According to the policy gradient theorem~\citep{sutton1999policy}, the gradient of the objective $V(\pi_{\theta})$ with respect to the parameters $\theta$ is given by
\begin{equation}
\nabla_{\theta} V(\pi_{\theta}) = \frac{1}{1-\gamma} \sum_{s \in \mathcal{S}} d_{\rho}^{\pi}(s) \sum_{a \in \mathcal{A}} \nabla_{\theta} \pi_{\theta}(a|s) Q^{\pi}(s, a).
\end{equation}
Using the score function identity $\nabla_{\theta} \pi_{\theta} = \pi_{\theta} \nabla_{\theta} \log \pi_{\theta}$ and defining the advantage $U(s,a) = Q^{\pi}(s, a) - V^{\pi}(s)$, the standard update rule is summarized in Algorithm~\ref{alg:pg_mdp}. In the bandit case, the advantage of action $a$ is $U(a) := r(a) - \pi_\theta^\top r$.
We write $\Delta_{ab} := r(a) - r(b)$ for pairwise gaps and $\Delta := \Delta_{12} > 0$ for the optimality gap.

% \subsection{Delightful Policy Gradient (DG)}

Delightful Policy Gradient (DG) modifies the update by gating each action's contribution according to \emph{delight}~\citep{osband2026delightfulpolicygradient}.
The \emph{surprisal} $\ell(a) := -\log \pi(a)$ measures how unexpected action $a$ is.
The \emph{delight} is the product $U(a)\,\ell(a)$.
The DG gate is
\[
w(a) := \sigma\!\bigl(U(a)\,\ell(a)\,/\,\eta\bigr)
\]
for temperature $\eta > 0$, giving the gated drift:
\begin{equation}
\label{eq:dg_drift}
\dot\theta(a)
=
\sum_{a'=1}^{K} w(a')\,\pi(a')\,U(a')\,(\Ind\{a=a'\} - \pi(a)).
\end{equation}
In the limit $\eta \to 0$, the gate becomes a hard threshold: $w(a) \to \Ind\{U(a) > 0\}$.
This removes every negative-advantage term entirely.
We call the result the \emph{enlightened gradient} (EG): it retains every delightful action, no matter how rare, and ignores the rest.

%%%%%%%%%%%%%%%%%%%%%%%%%%%%%%%%%%%%%%%%%%%%%%%%%%%%%%%%%%%%%%%%%%%%%%%%%%%%%%%%
\section{Geometric Illustration in Bandits}
\label{sec:bandit_convergence}

The behavior of DG is best illustrated in the $K$-armed bandit simplex. Standard PG exhibits ``sticky'' sub-optimal corners where $\pi(j) \approx 1$, leading to exponential escape times~\citep{mei2020convergence,li2023softmax}. Figure~\ref{fig:adversarial_initialization} shows how DG modifies the gradient flow to eliminate these bad regions.

\begin{figure*}[h] % Use [t] for top of page alignment in figure*
\centering
\begin{subfigure}[b]{.27\linewidth}
    \includegraphics[width=\linewidth]{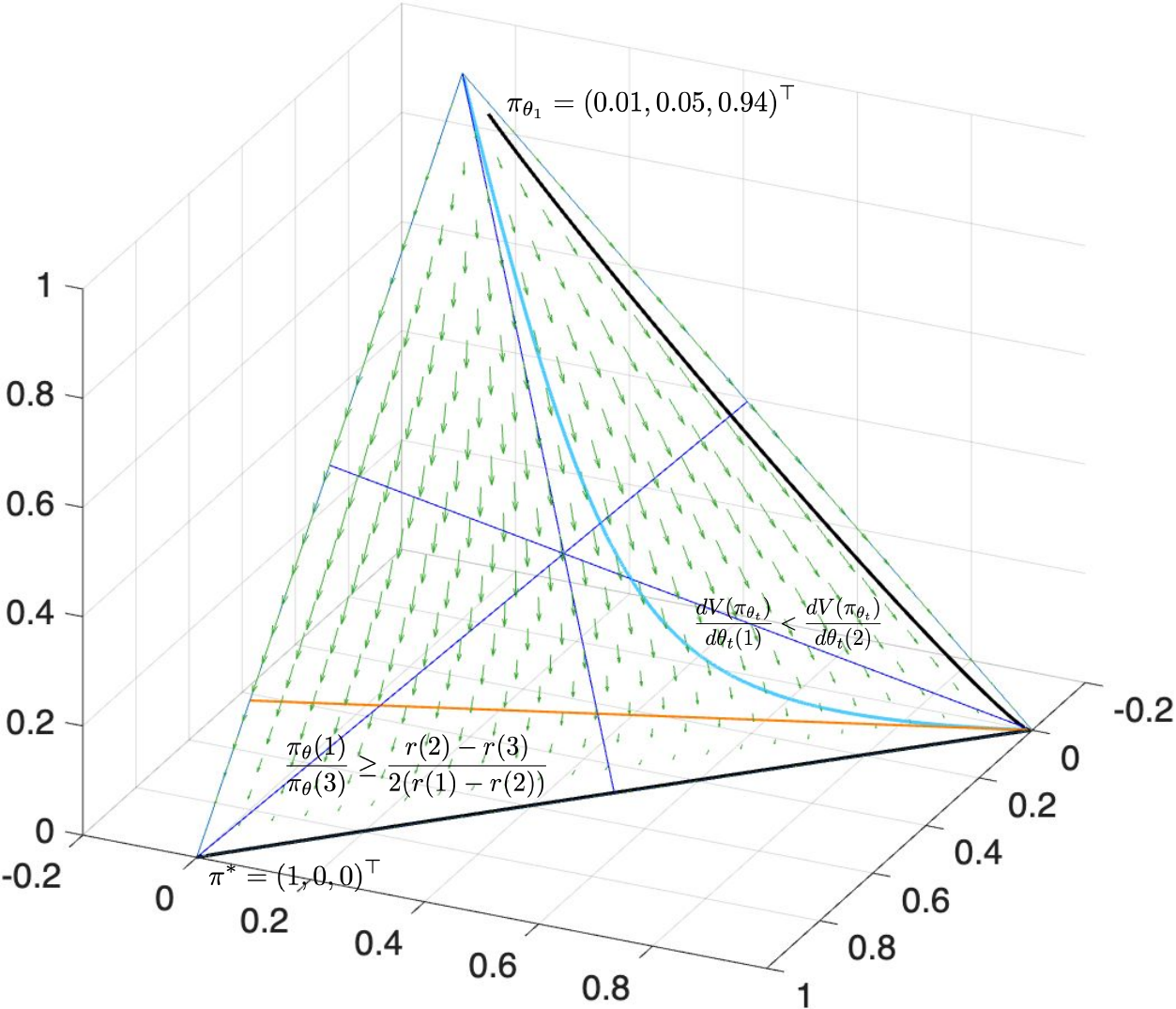}
    \caption{PG gradient flow.}\label{fig:fig1a}
\end{subfigure}\hfill % Use \hfill to distribute space evenly
\begin{subfigure}[b]{.27\linewidth}
    \includegraphics[width=\linewidth]{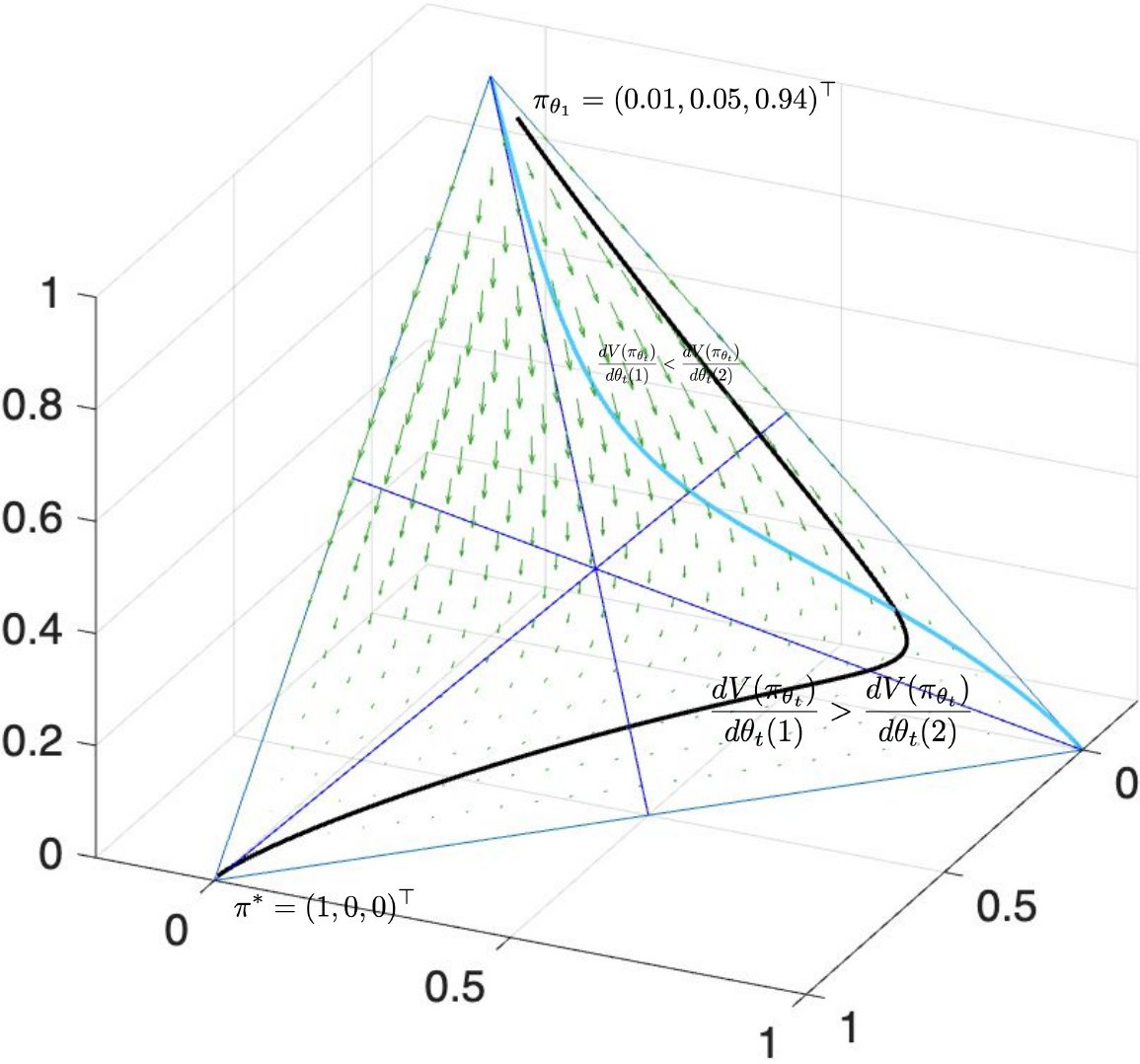}
    \caption{DG gradient flow.}\label{fig:fig1b}
\end{subfigure}\hfill
\begin{subfigure}[b]{.23\linewidth}
    \includegraphics[width=\linewidth]{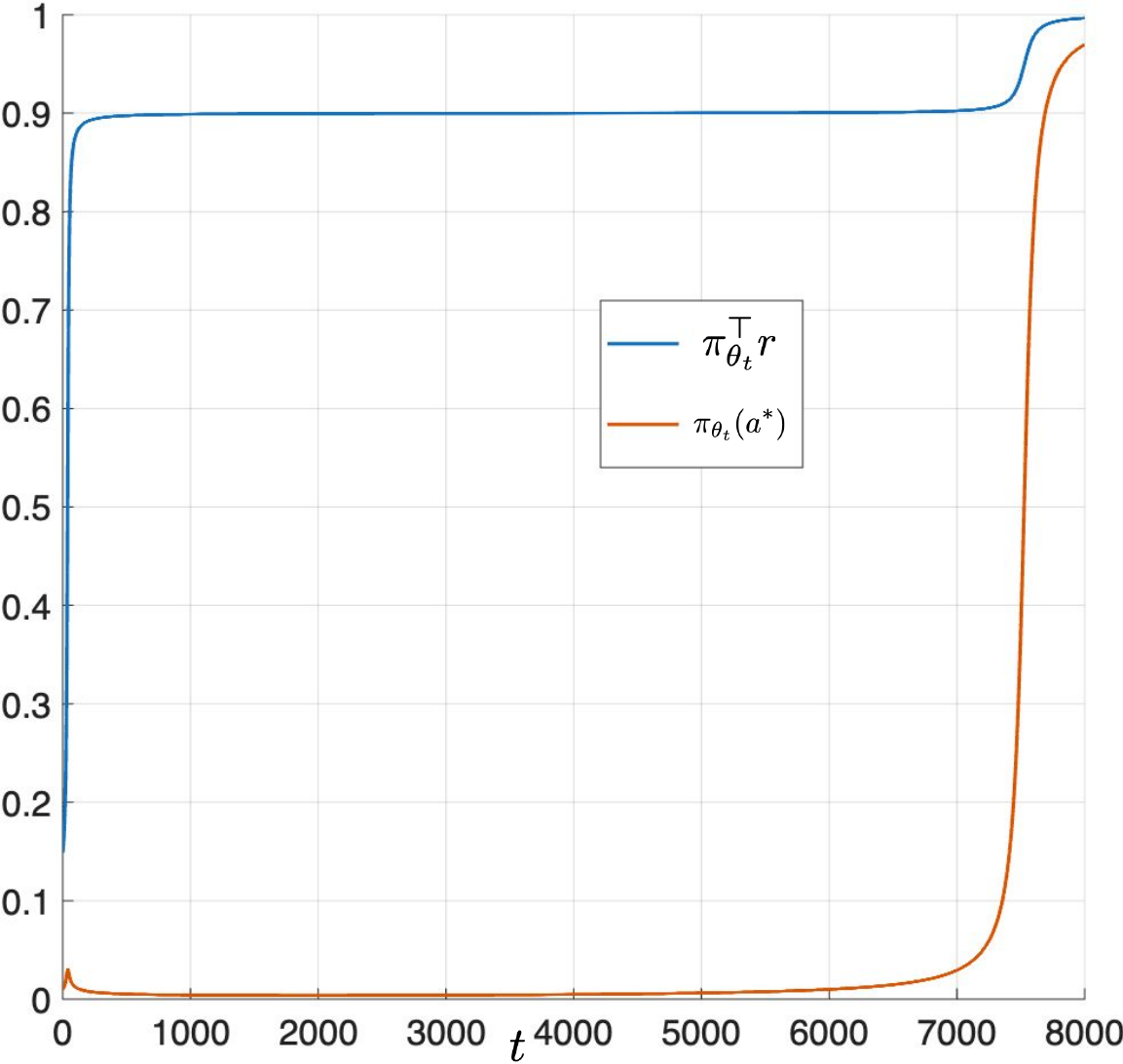}
    \caption{PG bad initialization.}\label{fig:fig1c}
\end{subfigure}\hfill
\begin{subfigure}[b]{.23\linewidth}
    \includegraphics[width=\linewidth]{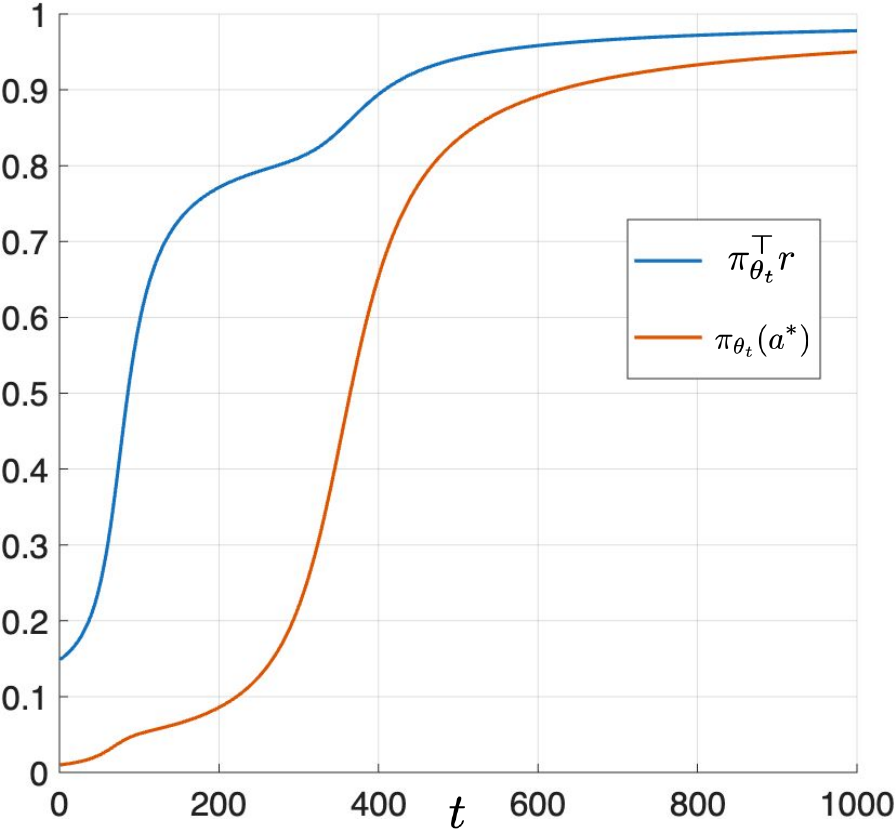}
    \caption{DG bad initialization.}\label{fig:fig1d}
\end{subfigure}
\caption{Visualization of DG gradient fields ($\eta = 1.0$) and comparison with standard PG. The light blue curves in (a) and (b) indicate where $\frac{ d V(\pi_{\theta_t}) }{d \theta_t(1)}  = \frac{ d V(\pi_{\theta_t}) }{d \theta_t(2)}$.}
\label{fig:adversarial_initialization}
\end{figure*}

\subsection{Visualization of Gradient Fields}
\label{subsec:visual}

To motivate the structural advantages of the Delightful Policy Gradient (DG), we visualize the gradient fields on the probability simplex for a 3-armed bandit with rewards $r = [1.0, 0.9, 0.1]^\top$~\citep{mei2020convergence}. We specifically examine the behavior near sub-optimal corners to identify the ``bad regions'' where the optimal action $a^*=1$ fails to receive the largest parameter update.

\paragraph{Observations from standard PG (Figure 1a).}
Standard softmax PG exhibits a ``bad region'' near sub-optimal corners~\citep{mei2020convergence}. In Figure 1a, the area below the red line is where a sub-optimal action's parameter increases faster than the optimal action's. Near the corner, the optimal arm dominates only if
\begin{equation}
\label{eq:pg_ratio_bound}
\frac{\pi_{\theta}(1)}{\pi_{\theta}(3)} \ge \frac{r(2) - r(3)}{2(r(1) - r(2))}.
\end{equation}
When the gap $r(1) - r(2)$ is small, this bad region expands.

\paragraph{Observations from DG (Figure 1b).}
Under identical rewards, the bad region disappears near corners (Figure 1b). The optimal action $a^*$ \textbf{unconditionally} receives the largest gradient update whenever the policy is concentrated near a sub-optimal corner.

\paragraph{Iteration Comparison (Figures 1c and 1d).}
From the same bad initialization $\pi_{\theta_1} = (0.01, 0.05, 0.94)^\top$, PG is attracted toward the sub-optimal corner and requires over 7000 iterations to escape. DG escapes much faster, maintaining a direct trajectory toward the optimum.

\subsection{Why DG Escapes Corners Faster}
\label{subsec:intuition}
The visualization reveals two facts: DG eliminates the bad region near
sub-optimal corners, and it escapes those corners orders of magnitude faster
than PG.
Both follow from a single mechanism, the removal of harmful arms, which we now explain before formalizing in Section~\ref{sec:corner_escape}.

Under continuous-time gradient flow, the softmax logit of each arm evolves as
\begin{equation}
\label{eq:pg_drift}
\dot\theta(a)
= \sum_{i=1}^{K} \pi(i)\,U(i)\,\bigl(\Ind\{a{=}i\} - \pi(a)\bigr).
\end{equation}
\paragraph{Self-trapping mechanism.}
Since advantages are centered ($\sum_i \pi(i)\,U(i) = 0$), this simplifies to
\begin{equation}
\label{eq:pg_simplified}
\dot\theta(a) = \pi(a)\,U(a).
\end{equation}
Each arm's logit moves in the direction of its own advantage,
at a rate proportional to its probability.
When $\pi(j) \approx 1$ for a sub-optimal arm~$j$, the corner arm's logit
grows at rate $\pi(j)\,U(j) \approx U(j)$,
while the optimal arm's logit grows at the throttled rate
$\pi(1)\,U(1) \approx \pi(1)$.
As long as $U(j) > 0$, which occurs whenever arms worse than~$j$
pull the mean reward below~$r(j)$, the corner arm gains on the
optimal arm, and the trap deepens.

\paragraph{Arms as allies and enemies.}
To understand \emph{which} arms cause the trap, fix a corner arm
$j \in \{2,\ldots,K\}$ and write $\varepsilon := 1-\pi(j)$.
Near the corner, every arm falls into one of three classes:
\begin{itemize}[leftmargin=*]
\item \textbf{Allies}\; $S^+ := \{i : r(i) > r(j)\}$:
  advantage $U(i) = \Delta_{ij} + O(\varepsilon) > 0$.
  These arms pull the mean reward \emph{up} past~$r(j)$.
\item \textbf{Harmful}\; $S^- := \{i : r(i) < r(j)\}$:
  advantage $U(i) = -\Delta_{ji} + O(\varepsilon) < 0$.
  These arms pull the mean reward \emph{down} below~$r(j)$,
  inflating the corner arm's advantage and reinforcing the trap.
\item \textbf{Ambiguous}\; $\{j\}$: the corner arm itself,
  with advantage $U(j) = O(\varepsilon)$.
\end{itemize}
PG's bad region exists precisely because harmful arms give the corner
arm a positive advantage.
Without harmful arms, the corner arm's advantage would be near zero,
and the optimal arm would dominate.

\paragraph{EG removes harmful arms.}
This is exactly what EG does: it gates out every arm with negative advantage.
Near the corner, the harmful arms in $S^-$ have $U(i) < 0$ and are
removed entirely.
Only allies (which help the optimal arm) and the ambiguous corner arm
(whose contribution is $O(\varepsilon^2)$) remain.
With the harmful arms gone:
\begin{enumerate}[leftmargin=*]
\item The optimal arm's logit receives a direct push proportional
      to $\pi(1)\,U(1) \ge \Omega(\varepsilon/K)$
      (from the sector condition $\pi(1) \ge \varepsilon/K$).
\item Every other ally $i \in S^+ \setminus \{1\}$ contributes
      \emph{non-negatively} to the optimal arm's logit gap, i.e., allies
      help escape. They never hinder it.
\item The corner arm's residual advantage is only $O(\varepsilon)$,
      and its contribution to the logit gap is $O(\varepsilon^2)$.
\end{enumerate}
The $\Omega(\varepsilon/K)$ push from arm~1 dominates the
$O(\varepsilon^2)$ residual, so the optimal arm's logit
\emph{unconditionally} grows faster than the corner arm's logit.
The bad region vanishes. Section~\ref{sec:corner_escape} formalizes this picture into
a logarithmic escape bound and a global convergence guarantee
with an asymptotic $O(1/t)$ rate.

%%%%%%%%%%%%%%%%%%%%%%%%%%%%%%%%%%%%%%%%%%%%%%%%%%%%%%%%%%%%%%%%%%%%%%%%%%%%%%%%
%%%%%%%%%%%%%%%%%%%%%%%%%%%%%%%%%%%%%%%%%%%%%%%%%%%%%%%%%%%%%%%%%%%%%%%%%%%%%%%%
\section{Corner Escape}
\label{sec:corner_escape}

We now formalize the intuition from Section~\ref{subsec:intuition}.
The logit-gap identity (Section~\ref{subsec:logit_gap}) decomposes
$\dot\theta(1)-\dot\theta(j)$ into the arm-specific and
population-level contributions identified above.
The sector bounds (Section~\ref{subsec:sector}) show that this
decomposition yields
$\dot\theta(1)-\dot\theta(j) \ge \Omega(\varepsilon/K)$
near any sub-optimal corner, under both EG and finite-temperature DG.
Integrating the resulting differential inequality
(Section~\ref{subsec:escape}) gives a first-exit escape time
logarithmic in $\pi_0(j)/\pi_0(1)$.
We then establish that EG converges globally to the optimal
policy (Section~\ref{subsec:convergence}): monotonic value improvement,
a measure-zero characterization of the bad region, and corner
instability combine to give global convergence at an asymptotic
$O(1/t)$ rate. Full proofs are deferred to Appendix~\ref{app:proofs}.

%%%%%%%%%%%%%%%%%%%%%%%%%%%%%%%%%%%%%%%%%%%%%%%%%%%%%%%%%%%%%%%%%%%%%%%%%%%%%%%%
\subsection{Logit-Gap Decomposition}
\label{subsec:logit_gap}
The central analytical tool is the following identity, which isolates
arm-specific terms from the population-level drift in any gated policy
gradient.
It is verified by expanding and subtracting.
\begin{lemma}[Logit-gap identity]
\label{lem:logit_gap}
For any gated policy gradient with per-arm weights $w_i \ge 0$,
define the weighted drift $S := \sum_{i=1}^K w_i\,\pi(i)\,U(i)$. Then
\begin{equation}
\label{eq:logit_gap}
\dot\theta(a) - \dot\theta(b)
=
\underbrace{\sum_{i=1}^K w_i\,\pi(i)\,U(i)\bigl(\Ind\{a{=}i\} - \Ind\{b{=}i\}\bigr)}_{\text{direct: arm-specific contributions}}
\;+\;
\underbrace{(\pi(b)-\pi(a))\,S}_{\text{indirect: population drift}}.
\end{equation}
\end{lemma}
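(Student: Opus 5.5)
The plan is a direct computation from the gated drift \eqref{eq:dg_drift}, with the gate $w(a')$ replaced by an arbitrary non-negative weight $w_i$. The identity is purely algebraic: it never uses the sign of $w_i$, the form of $U$, or the constraint $\sum_i \pi(i) U(i)=0$. So the proof is just ``expand and subtract,'' and the plan is to organize that expansion so the two labelled pieces drop out.

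First, we rewrite the drift for a single arm $a$ by distributing the factor $(\Ind\{a=i\}-\pi(a))$ across the sum:
\begin{equation*}
\dot\theta(a) = \sum_{i=1}^K w_i\,\pi(i)\,U(i)\,\Ind\{a=i\} \;-\; \pi(a)\sum_{i=1}^K w_i\,\pi(i)\,U(i) = \sum_{i=1}^K w_i\,\pi(i)\,U(i)\,\Ind\{a=i\} \;-\; \pi(a)\,S.
\end{equation*}
Here we use the definition of $S$ and the fact that $\pi(a)$ does not depend on the summation index. The first sum collapses to $w_a\pi(a)U(a)$, but we keep it in indicator form so that it matches the stated ``direct'' term.

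Second, we write the same expression with $b$ in place of $a$ and subtract. The two indicator sums combine termwise into $\sum_i w_i\pi(i)U(i)(\Ind\{a=i\}-\Ind\{b=i\})$. The population terms give $-\pi(a)S+\pi(b)S=(\pi(b)-\pi(a))S$. Together these are exactly \eqref{eq:logit_gap}.

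There is no real obstacle. The only point worth flagging is the contrast with PG. For PG, where $w_i\equiv 1$, centering gives $S=0$, and the identity reduces to $\dot\theta(a)-\dot\theta(b)=\pi(a)U(a)-\pi(b)U(b)$, consistent with \eqref{eq:pg_simplified}. For general gates $S\neq 0$, so the indirect term must be retained. For this reason we will not invoke centering anywhere in the argument.
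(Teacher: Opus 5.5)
Your proof is correct and matches the paper's argument (``expanding and subtracting''): you distribute $(\Ind\{a=i\}-\pi(a))$ in the gated drift to get $\dot\theta(a)=w_a\pi(a)U(a)-\pi(a)S$, then subtract the same expression for $b$. Your side remarks are also accurate: neither $w_i\ge 0$ nor centering is used, and $w_i\equiv 1$ recovers \eqref{eq:pg_simplified}.
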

Setting $a=1$ (optimal) and $b=j$ (corner), the identity reveals
two routes to escape.
The \emph{direct} term captures arm-specific contributions:
arm~1 pushes its own logit up, while arm~$j$ pushes its own logit up.
The \emph{indirect} term captures the population drift~$S$, amplified by
$\pi(j)-\pi(1) \approx 1$ near the corner.
For vanilla PG ($w_i = 1$), $S = 0$ because advantages are centered.
Under EG ($w_i = \Ind\{U(i)>0\}$), all harmful arms are removed while
at least one ally remains active, so $S > 0$ and the population drift
actively aids escape.

%%%%%%%%%%%%%%%%%%%%%%%%%%%%%%%%%%%%%%%%%%%%%%%%%%%%%%%%%%%%%%%%%%%%%%%%%%%%%%%%
\subsection{Sector Bounds}
\label{subsec:sector}
We now apply the logit-gap identity to prove that the optimal arm
dominates the corner arm on a quantitative sector near any sub-optimal
corner, first for EG, then for finite-temperature DG.

% \subsubsection{Enlightened Gradient}
The sector condition $\pi(1) \ge \varepsilon/K$ ensures arm~1 has enough
mass to contribute. The $1/K$ factor accounts for the worst-case
distribution of the remaining mass across $K-1$ arms.
\begin{theorem}[EG sector bound, general $K$]
\label{thm:corner_escape}
Under Assumption~\ref{assump:bandit}, fix any sub-optimal corner arm
$j \in \{2, \ldots, K\}$.
Write $\varepsilon = 1 - \pi(j)$ and let $p_a := \pi(a)$.
There exists $\varepsilon_0 > 0$ depending on the reward gaps and $K$
such that, for all $\varepsilon \in (0, \varepsilon_0)$,
if $p_1 \ge \varepsilon/K$, then under EG:
\[
\dot\theta(1) - \dot\theta(j) \ge \frac{\Delta_{1j}}{4K}\,\varepsilon.
\]
\end{theorem}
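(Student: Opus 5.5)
The plan is to apply the logit-gap identity (Lemma~\ref{lem:logit_gap}) with $a=1$, $b=j$ and EG weights $w_i=\Ind\{U(i)>0\}$. This gives
\[
\dot\theta(1)-\dot\theta(j) = w_1 p_1 U(1) - w_j p_j U(j) + (p_j-p_1)\,S,
\qquad S=\sum_{i:U(i)>0} p_i U(i) \ge 0 .
\]
I will bound each of the three pieces in the corner regime $\varepsilon<\varepsilon_0$.

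First I need the advantage estimates. Write $\bar r := \pi^\top r = r(j) + \sum_{i\ne j} p_i\,(r(i)-r(j))$. Since rewards lie in $[0,1]$ and $\sum_{i\neq j}p_i=\varepsilon$, we get $|\bar r - r(j)|\le \varepsilon$. Hence $U(i)=\Delta_{ij}+O(\varepsilon)$ for every $i$, and in particular $|U(j)|\le\varepsilon$. Now choose $\varepsilon_0\le \tfrac12\min_{i\ne j}|\Delta_{ij}|$. Then every ally $i\in S^+$ has $U(i)\ge \Delta_{ij}/2>0$, so it is active, and every harmful arm has $U(i)<0$, so it is gated out. In particular $w_1=1$ and $U(1)\ge \Delta_{1j}-\varepsilon\ge \Delta_{1j}/2$. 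Only the corner arm $j$ has an ambiguous gate.

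Next I bound the terms. The direct term from arm $1$ is $p_1U(1)\ge (\varepsilon/K)(\Delta_{1j}-\varepsilon)$, using the sector condition. The indirect term is nonnegative whenever $p_j\ge p_1$, which holds for small $\varepsilon$ since $p_1\le\varepsilon<1-\varepsilon=p_j$. Dropping it is a valid lower bound; it is the population drift that only helps.

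The remaining piece, $-w_jp_jU(j)$, is the step I expect to be delicate. If $U(j)\le 0$ the gate kills it and it contributes nothing. If $U(j)>0$ it is negative, with magnitude at most $U(j)\le\varepsilon$, which is only $O(\varepsilon)$ and not $O(\varepsilon^2)$ as claimed informally. So I cannot simply drop it, and I will combine it with the indirect term. When $w_j=1$,
\[
(p_j-p_1)S - p_jU(j) \ge (p_j-p_1)\bigl(p_jU(j)+p_1U(1)\bigr) - p_jU(j) = -p_jU(j)(1-p_j+p_1) + (p_j-p_1)p_1U(1).
\]
The first term is $\ge -\varepsilon\cdot 2\varepsilon = -2\varepsilon^2$, which is genuinely second order. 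The second term adds roughly another $p_1U(1)$, so the $U(j)$ contribution is absorbed. Collecting the two cases gives
\[
\dot\theta(1)-\dot\theta(j)\ \ge\ \frac{\varepsilon}{K}\Bigl(\Delta_{1j}-\varepsilon\Bigr) - 2\varepsilon^2 .
\]

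Finally, I tighten $\varepsilon_0$ so that $\varepsilon\le\Delta_{1j}/4$ and $2\varepsilon^2\le \Delta_{1j}\varepsilon/(2K)$, i.e.\ $\varepsilon_0\le \Delta_{1j}/(4K)$. Then the right-hand side is $\ge \frac{\varepsilon}{K}\cdot\frac{3\Delta_{1j}}{4} - \frac{\Delta_{1j}\varepsilon}{2K} = \frac{\Delta_{1j}}{4K}\varepsilon$, which is the stated bound. Throughout, the allies other than arm $1$ enter only through $S\ge0$ with a nonnegative coefficient, which is the ``ally'' structure the theorem relies on. The resulting $\varepsilon_0$ depends only on the gaps $\min_{i\ne j}|\Delta_{ij}|$, $\Delta_{1j}$ and on $K$, as the statement requires.
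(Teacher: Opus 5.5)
Your proof is correct and is essentially the paper's argument. Absorbing the raw $-p_jU(j)$ term into the matching piece of $(p_j-p_1)S$ is exactly the paper's rearrangement into three pieces: the arm-$1$ term $p_1U(1)(1+p_j-p_1)$, the nonnegative ally terms, and the residual $-p_jU(j)(1+p_1-p_j)$ on $\{U(j)>0\}$, which is what the sketch's ``$O(\varepsilon^2)$'' refers to. Your only departures are cosmetic: you take the explicit constant $C_U=1$ from $r\in[0,1]^K$, and you pick slightly different thresholds in $\varepsilon_0$.
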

\paragraph{Proof sketch.}
Apply Lemma~\ref{lem:logit_gap} with $a=1$, $b=j$, and
$w_i = \Ind\{U(i)>0\}$.
As outlined in Section~\ref{subsec:intuition}, the logit gap decomposes
into three groups:
\[
\dot\theta(1)-\dot\theta(j)
=
\underbrace{p_1U(1)(1+p_j-p_1)}_{\ge\,\Delta_{1j}\varepsilon/(2K)}
\;+\;
\underbrace{\sum_{i\in S^+\setminus\{1\}} p_iU(i)(p_j-p_1)}_{\ge\,0\;\text{(allies)}}
\;-\;
\underbrace{\Ind\{U(j){>}0\}\,p_jU(j)(1+p_1-p_j)}_{O(\varepsilon^2)\;\text{(residual)}}.
\]
The arm-$1$ contribution $\Omega(\varepsilon/K)$ dominates
the $O(\varepsilon^2)$ residual, proving the sector gap.
\begin{remark}[All better-than-$j$ arms are allies]
\label{rem:allies}
Each arm $i \in S^+ \setminus \{1\}$ contributes
$p_i U(i)(p_j - p_1) \ge 0$, since $U(i) > 0$ and $p_j - p_1 \approx 1$.
The number of allies does not degrade the bound, and they enter only through
a non-negative sum that we drop.
The $1/K$ factor comes solely from the sector condition $p_1 \ge \varepsilon/K$.
\end{remark}

% \subsubsection{Finite Temperature}
EG removes harmful arms entirely.
At finite temperature $\eta > 0$, DG instead suppresses them polynomially:
the surprisal factor makes the gate weight on a rare arm vanish as that
arm becomes rarer.
\begin{lemma}[Polynomial suppression]
\label{lem:poly_suppress}
For any arm $k$ with $U(k) < 0$, the DG gate satisfies
$w(k) \le \pi(k)^{|U(k)|/\eta}$.
Near the corner, for $k \in S^-$ with $|U(k)| \ge \Delta_{jk}/2$:
\begin{equation}
\label{eq:poly_suppress}
w(k)\,\pi(k) \le \pi(k)^{1 + \Delta_{jk}/(2\eta)}.
\end{equation}
\end{lemma}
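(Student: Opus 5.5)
The whole lemma follows from the elementary inequality $\sigma(x)\le e^{x}$, valid for all real $x$: since $1+e^{-x}\ge e^{-x}$, we get $\sigma(x)=1/(1+e^{-x})\le e^{x}$. Fix an arm $k$ with $U(k)<0$ and write $\ell(k)=-\log\pi(k)\ge 0$. Then the gate argument is
\[
U(k)\ell(k)/\eta = -|U(k)|\,\ell(k)/\eta = \frac{|U(k)|}{\eta}\log\pi(k),
\]
so
\[
w(k)=\sigma\bigl(U(k)\ell(k)/\eta\bigr)\le \exp\bigl(\tfrac{|U(k)|}{\eta}\log\pi(k)\bigr)=\pi(k)^{|U(k)|/\eta}.
\]
This is the first claim.

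For the second claim, I need the near-corner advantage estimate for harmful arms. Fix the corner arm $j$, put $\varepsilon=1-\pi(j)$, and take $k\in S^-$. Write the mean reward as $\pi^\top r=r(j)-\sum_{i\ne j}\pi(i)\bigl(r(j)-r(i)\bigr)$. Rewards lie in $[0,1]$, so $|r(j)-r(i)|\le 1$ for every $i$, and the remaining mass is $\sum_{i\ne j}\pi(i)=\varepsilon$. Hence $|\pi^\top r-r(j)|\le\varepsilon$, and therefore
\[
U(k)=r(k)-\pi^\top r=-\Delta_{jk}+\bigl(r(j)-\pi^\top r\bigr)\le -\Delta_{jk}+\varepsilon .
\]
Choose $\varepsilon\le\min_{k\in S^-}\Delta_{jk}/2$, which is positive because rewards are distinct. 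Then $|U(k)|\ge \Delta_{jk}/2$ for every $k\in S^-$. This makes precise the "near the corner" hypothesis; it is implied by the stated condition $|U(k)|\ge\Delta_{jk}/2$.

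It remains to turn the lower bound on $|U(k)|$ into an upper bound on $w(k)$. The key observation is monotonicity in the exponent: since $\pi(k)\in(0,1]$, the map $c\mapsto \pi(k)^{c}$ is non-increasing in $c\ge 0$. So $|U(k)|/\eta\ge\Delta_{jk}/(2\eta)$ gives $w(k)\le\pi(k)^{\Delta_{jk}/(2\eta)}$. Multiplying by $\pi(k)$ yields~\eqref{eq:poly_suppress}.

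The one step needing care is this direction of monotonicity, together with the edge case $\pi(k)=1$. That case cannot occur near a corner $j\ne k$, and in any event the bound is trivial there. The argument otherwise has no real obstacle; the only substantive input is the $O(\varepsilon)$ control of $U(k)$ established above.
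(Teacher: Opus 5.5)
Your proof is correct and follows the same route as the paper: the bound $\sigma(x)\le e^{x}$ gives $w(k)\le\pi(k)^{|U(k)|/\eta}$, and monotonicity of $c\mapsto\pi(k)^{c}$ for $\pi(k)\in(0,1]$ then gives the second claim. Your derivation that $\varepsilon\le\min_{k\in S^-}\Delta_{jk}/2$ forces $|U(k)|\ge\Delta_{jk}/2$ is a correct extra detail that the paper leaves implicit; your phrase ``it is implied by the stated condition'' has the direction reversed (small $\varepsilon$ implies the inequality), but that inequality is already a hypothesis of the lemma, so nothing depends on it.
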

Because $\pi(k) \le \varepsilon$ for $k \in S^-$, the product
$w(k)\pi(k)$ is $o(\varepsilon)$: it vanishes faster than the
beneficial arm-$1$ term grows.
Summing over all harmful arms, the total contamination is negligible
compared to arm~1's contribution.
\begin{theorem}[DG sector bound, general $K$]
\label{thm:sigmoid_escape}
Under Assumption~\ref{assump:bandit}, fix any sub-optimal corner arm $j$
and any temperature $\eta > 0$.
There exists $\varepsilon_0(\eta) > 0$ such that,
for all $\varepsilon \in (0, \varepsilon_0(\eta))$,
if $p_1 \ge \varepsilon/K$, then DG satisfies
\[
\dot\theta(1) - \dot\theta(j) \ge \frac{\Delta_{1j}}{8K}\,\varepsilon.
\]
\end{theorem}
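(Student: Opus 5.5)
We apply Lemma~\ref{lem:logit_gap} with $a=1$, $b=j$ and the DG weights $w_i=\sigma(U(i)\ell(i)/\eta)\in(0,1)$, and split the arms into the same three groups used for Theorem~\ref{thm:corner_escape}: allies $S^+$, harmful arms $S^-$, and the corner arm $j$. Writing $S=\sum_i w_i p_i U(i)$, the identity gives
\[
\dot\theta(1)-\dot\theta(j) = w_1p_1U(1)(1+p_j-p_1) + \sum_{i\in S^+\setminus\{1\}} w_ip_iU(i)(p_j-p_1) + \sum_{k\in S^-} w_kp_kU(k)(p_j-p_1) - w_jp_jU(j)(1+p_1-p_j).
\]
The plan is to bound each group against the leading arm-$1$ term and show that the total loss is at most $\Delta_{1j}\varepsilon/(8K)$ once $\varepsilon$ is small enough.

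\textbf{Leading term.} Near the corner, $U(1)=\Delta_{1j}+O(\varepsilon)\ge \Delta_{1j}/2$. Since $U(1)>0$ and $\ell(1)\ge 0$, the gate satisfies $w_1\ge \sigma(0)=1/2$. Also $1+p_j-p_1\ge 1$ once $p_1\le\varepsilon<1/2$. With $p_1\ge\varepsilon/K$, this gives $w_1p_1U(1)(1+p_j-p_1)\ge \Delta_{1j}\varepsilon/(4K)$. The loss of a factor $2$ relative to EG comes only from $w_1\ge 1/2$ instead of $w_1=1$. This is why the constant is $1/(8K)$: we need to reserve $\Delta_{1j}\varepsilon/(8K)$ of slack to absorb the remaining terms.

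\textbf{Allies and the corner residual.} For $i\in S^+\setminus\{1\}$ we have $U(i)>0$, $w_i>0$, and $p_j-p_1>0$, so each term is nonnegative and we drop it, as in Remark~\ref{rem:allies}. For the corner arm, $|U(j)| = |r(j)-\pi^\top r| \le \varepsilon$ since rewards lie in $[0,1]$. We also have $1+p_1-p_j\le 2\varepsilon$ and $w_j\le 1$. So in absolute value the residual is at most $2\varepsilon^2$, whatever the sign of $U(j)$. This is where finite temperature differs from EG: a negative $U(j)$ now contributes a term with a helpful sign, and a positive one is bounded as before.

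\textbf{Harmful arms (main obstacle).} For $k\in S^-$, $U(k)<0$, so the term $w_kp_kU(k)(p_j-p_1)$ is negative and must be controlled. Near the corner, $|U(k)|\ge\Delta_{jk}/2$, so Lemma~\ref{lem:poly_suppress} gives $w_kp_k\le p_k^{1+\delta}$ with $\delta:=\min_{k\in S^-}\Delta_{jk}/(2\eta)>0$. Using $|U(k)|\le 1$ and $p_k\le\varepsilon$, the total harmful contribution is at most $K\varepsilon^{1+\delta}$. The difficulty is that $\delta$ can be tiny when $\eta$ is large, so this bound beats $\Delta_{1j}\varepsilon/(16K)$ only for $\varepsilon^{\delta}\le \Delta_{1j}/(16K^2)$, i.e.
\[
\varepsilon\le\bigl(\Delta_{1j}/(16K^2)\bigr)^{2\eta/\min_k\Delta_{jk}}.
\]
This threshold is exactly why $\varepsilon_0$ must depend on $\eta$.

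\textbf{Choosing $\varepsilon_0(\eta)$.} We take $\varepsilon_0(\eta)$ to be the minimum of three quantities: the threshold above; the radius at which the near-corner approximations $U(1)\ge\Delta_{1j}/2$ and $|U(k)|\ge\Delta_{jk}/2$ hold (any $\varepsilon$ below half the smallest relevant gap works); and $\Delta_{1j}/(32K)$, which makes $2\varepsilon^2\le\Delta_{1j}\varepsilon/(16K)$. Summing the bounds then gives
\[
\dot\theta(1)-\dot\theta(j)\ge \frac{\Delta_{1j}\varepsilon}{4K}-\frac{\Delta_{1j}\varepsilon}{16K}-\frac{\Delta_{1j}\varepsilon}{16K}=\frac{\Delta_{1j}\varepsilon}{8K}.
\]
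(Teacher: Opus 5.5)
Your proposal is correct and follows the paper's proof essentially step for step. It uses the same four-group split from Lemma~\ref{lem:logit_gap}: the arm-$1$ term with $w_1\ge 1/2$, nonnegative allies that are dropped, an $O(\varepsilon^2)$ corner residual, and harmful arms suppressed via Lemma~\ref{lem:poly_suppress} to order $\varepsilon^{1+\delta_\eta}$, followed by an $\eta$-dependent choice of $\varepsilon_0$. The differences are cosmetic: your explicit bounds $|U(j)|\le\varepsilon$, $|U(k)|\le 1$ and $w_1\ge\sigma(0)$ replace the paper's generic constants $C_U$, $R$ and its limiting argument for the gate, which makes $\varepsilon_0(\eta)$ explicit. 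Your list of conditions on $\varepsilon_0$ should also include $\varepsilon<\Delta_{ij}$ for every ally $i\in S^+$, since that is what guarantees $U(i)>0$.
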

\paragraph{Proof sketch.}
The EG proof carries through unchanged for the beneficial terms.
The only new ingredient is bounding the harmful-arm contamination:
for each $k \in S^-$,
$w(k)\pi(k) \le \pi(k)^{1+\Delta_{jk}/(2\eta)}$,
which is higher order than $\varepsilon$.
The harmful terms become negligible for sufficiently small $\varepsilon$,
preserving the sector gap.
\begin{remark}[Surprisal is structurally necessary]
\label{rem:surprisal}
Without surprisal, $w_k = \sigma(-\Delta_{jk}/\eta)$ is a nonzero constant.
The $S^-$ contribution is $\Theta(\varepsilon)$, the same order as the
beneficial term, and the sector bound breaks.
The product $U \cdot \surp$ is required for the rarity-sensitive
suppression: without it, harmful arms do not become negligible
as they become rare.
\end{remark}

%%%%%%%%%%%%%%%%%%%%%%%%%%%%%%%%%%%%%%%%%%%%%%%%%%%%%%%%%%%%%%%%%%%%%%%%%%%%%%%%
\subsection{Escape Time}
\label{subsec:escape}
The sector bounds guarantee that the log-ratio $\log(\pi(1)/\pi(j))$
grows at rate $\Omega(\varepsilon)$ near any sub-optimal corner.
To convert this into an escape time, we need $\varepsilon$ to be
non-decreasing while the trajectory remains in the sector,
so the differential inequality can be integrated.
\begin{lemma}[Sector monotonicity]
\label{lem:sector_mono}
Under Assumption~\ref{assump:bandit} with EG, fix any sub-optimal
corner arm~$j$.
There exists $\bar\varepsilon > 0$ such that on the sector
$\Sc_{\bar\varepsilon} := \{\pi : 0 < \varepsilon < \bar\varepsilon,\;
\pi(1) \ge \varepsilon/K\}$:
\begin{enumerate}
\item $\frac{d}{dt}\log\frac{\pi(1)}{\pi(j)}
      \ge \frac{\Delta_{1j}}{4K}\,\varepsilon$,
\item $\dot\varepsilon > 0$.
\end{enumerate}
\end{lemma}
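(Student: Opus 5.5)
Part~1 is a restatement of Theorem~\ref{thm:corner_escape}. Part~2 I would reduce to pairwise logit gaps, using the fact that on the sector the optimal arm's push dominates every residual. Let $\varepsilon_0$ be the threshold from Theorem~\ref{thm:corner_escape}. I will choose $\bar\varepsilon \le \varepsilon_0$ and shrink it further only as needed for Part~2.

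\emph{Part 1.} Under softmax, $\log(\pi(1)/\pi(j)) = \theta(1)-\theta(j)$. Hence
\[
\tfrac{d}{dt}\log(\pi(1)/\pi(j)) = \dot\theta(1)-\dot\theta(j),
\]
and on $\Sc_{\bar\varepsilon}$ the hypotheses of Theorem~\ref{thm:corner_escape} hold. This gives the bound $\frac{\Delta_{1j}}{4K}\varepsilon$ immediately.

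\emph{Part 2.} Since $\varepsilon = 1-p_j$, we have $\dot\varepsilon = -\dot p_j$. The softmax chain rule gives
\[
\dot p_j = p_j\Bigl(\dot\theta(j)-\sum_a p_a\dot\theta(a)\Bigr) = p_j\sum_{a\neq j} p_a\bigl(\dot\theta(j)-\dot\theta(a)\bigr).
\]
So it suffices to show $\sum_{a\ne j} p_a(\dot\theta(j)-\dot\theta(a)) < 0$ on the sector. I would bound each pairwise gap using Lemma~\ref{lem:logit_gap} with EG weights $w_i=\Ind\{U(i)>0\}$, splitting $S = w_jp_jU(j) + S_{-j}$. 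Here
\[
S_{-j} := \sum_{i\in S^+,\,U(i)>0} p_iU(i) \ge p_1U(1) \ge \tfrac{\Delta_{1j}}{2}\cdot\tfrac{\varepsilon}{K}
\]
for small $\varepsilon$, because $U(1)=\Delta_{1j}+O(\varepsilon)$. Also for small $\varepsilon$ every harmful arm has $U<0$, so it is gated out.

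\emph{Pairwise gaps.} For any $a\ne j$, Lemma~\ref{lem:logit_gap} gives
\[
\dot\theta(j)-\dot\theta(a) = w_jp_jU(j)(1-p_j+p_a) - w_ap_aU(a)(1+p_j-p_a) - (p_j-p_a)S_{-j}.
\]
The first term is $O(\varepsilon^2)$, because $U(j)=\sum_{a\ne j}p_a(r(j)-r(a)) = O(\varepsilon)$ and $1-p_j+p_a\le 2\varepsilon$. The second term is $\le 0$: for harmful arms $w_a=0$, and for allies $U(a)>0$. The third term is at most $-(1-2\varepsilon)\,\Delta_{1j}\varepsilon/(2K)$. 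Hence, for $\varepsilon$ small enough,
\[
\dot\theta(j)-\dot\theta(a) \le -\Delta_{1j}\varepsilon/(4K) + O(\varepsilon^2) < 0
\]
for every $a\ne j$. The term $a=1$ is also covered directly by Part~1. Summing with weights $p_a$, where at least $p_1\ge\varepsilon/K>0$, gives a strictly negative total of order at most $-\Delta_{1j}\varepsilon^2/(8K^2)$. Therefore $\dot p_j<0$ and $\dot\varepsilon>0$.

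The main obstacle is making the $O(\varepsilon^2)$ residual from the corner arm's own gate uniform. When $U(j)>0$, arm $j$ is active and its contribution enters $S$ as well as the direct term. I would handle this by the explicit split $S = w_jp_jU(j)+S_{-j}$ above and the crude bound $|U(j)|\le\varepsilon$ (rewards lie in $[0,1]$). Then $\bar\varepsilon$ can be chosen explicitly, for instance $\bar\varepsilon \le \min\{\varepsilon_0,\ \Delta_{1j}/(16K),\ \min_{k\in S^-}\Delta_{jk}/2,\ \min_{i \in S^+}\Delta_{ij}/2\}$, depending only on the reward gaps and $K$. The last two conditions make the signs of allies' and harmful arms' advantages fixed throughout $\Sc_{\bar\varepsilon}$.
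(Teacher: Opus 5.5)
Your proposal is correct and follows essentially the paper's route: Part~1 is read off Theorem~\ref{thm:corner_escape}, and Part~2 writes $\dot\varepsilon = p_j\sum_{a\neq j}p_a(\dot\theta(a)-\dot\theta(j))$ and shows each pairwise gap is positive via Lemma~\ref{lem:logit_gap}, with the population drift $S \ge p_1U(1) = \Omega(\varepsilon/K)$ beating the $O(\varepsilon^2)$ arm-$j$ residual (you merge the paper's three cases $a\in S^-$, $a=1$, $a\in S^+\setminus\{1\}$ into a single bound). One bookkeeping slip: in your displayed pairwise identity, arm $a$'s own term should carry coefficient $1$ rather than $(1+p_j-p_a)$, because your $S_{-j}$ already contains $p_aU(a)$ when $a$ is an active ally; the corrected term $-w_ap_aU(a)$ is still $\le 0$, so your bound and conclusion are unchanged.
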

Because $\varepsilon$ is non-decreasing, we can lower-bound the
integrand by $\varepsilon_0 = 1 - \pi_0(j)$, yielding a bound
logarithmic in the initial probability ratio, an exponential
improvement over PG's corner transient.
\begin{theorem}[First-exit escape time]
\label{thm:escape_time}
Under Assumption~\ref{assump:bandit} with EG,
let $\bar\varepsilon$ be as in Lemma~\ref{lem:sector_mono} and
$\pi_0 \in \Sc_{\bar\varepsilon}$.
Write $\varepsilon_0 := 1-\pi_0(j)$.
Let $\Wc := \{\pi : \pi(1)/\pi(j) \ge 1\}$ be the parity set,
$\tau_{\Wc}$ the first time the trajectory enters $\Wc$,
and $\tau_{\mathrm{exit}}$ the first time it leaves $\Sc_{\bar\varepsilon}$.
Then
\[
\tau_{\Wc} \wedge \tau_{\mathrm{exit}}
\le
\frac{4K}{\Delta_{1j}\,\varepsilon_0}
\log\!\left(
\frac{\pi_0(j)}{\pi_0(1)}
\right).
\]
\end{theorem}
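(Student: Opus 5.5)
The plan is to integrate the differential inequality of Lemma~\ref{lem:sector_mono} along the trajectory up to the stopping time $T := \tau_{\Wc} \wedge \tau_{\mathrm{exit}}$. Define the potential $L(t) := \log\bigl(\pi_t(j)/\pi_t(1)\bigr)$. Since $\pi_0 \in \Sc_{\bar\varepsilon}$, we need $\pi_0 \notin \Wc$ for the claim to be nontrivial, so $L(0) = \log(\pi_0(j)/\pi_0(1)) > 0$. (If $L(0)\le 0$, then $\tau_{\Wc}=0$ and the bound holds trivially, with the right-hand side read as $0$ or as negative, which is consistent with $\tau_{\Wc}=0$.) Entering $\Wc$ means exactly that $L \le 0$.

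For every $t \in [0, T)$ the trajectory lies in $\Sc_{\bar\varepsilon}$; this is the definition of $\tau_{\mathrm{exit}}$, and the trajectory is continuous because the EG flow is continuous in $\pi$ as long as no advantage crosses zero. On that interval, part~1 of Lemma~\ref{lem:sector_mono} gives $\dot L(t) \le -\frac{\Delta_{1j}}{4K}\varepsilon(t)$. Part~2 gives $\dot\varepsilon > 0$, so $\varepsilon(t) \ge \varepsilon(0) = \varepsilon_0$ on $[0,T)$. Combining the two yields
\[
\dot L(t) \le -\frac{\Delta_{1j}\,\varepsilon_0}{4K} \qquad \text{for all } t\in[0,T).
\]
Integrating, $L(t) \le L(0) - \frac{\Delta_{1j}\varepsilon_0}{4K}\,t$ on $[0,T)$.

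Now argue by contradiction. Set $t^\star := \frac{4K}{\Delta_{1j}\varepsilon_0} L(0)$ and suppose $T > t^\star$. Then $t^\star \in [0,T)$, so $L(t^\star) \le 0$, meaning $\pi(1)\ge\pi(j)$ at time $t^\star$. Hence $\tau_{\Wc} \le t^\star < T$, contradicting $T \le \tau_{\Wc}$. Therefore $T \le t^\star$, which is the stated bound.

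The main obstacle is regularity. The EG vector field $w_i=\Ind\{U(i)>0\}$ is discontinuous where some $U(i)$ changes sign, so the existence of a trajectory and the validity of the chain rule for $L$ and $\varepsilon$ need justification. I would handle this by noting that inside $\Sc_{\bar\varepsilon}$ (for $\bar\varepsilon$ small) the signs of $U(i)$ for $i\ne j$ are fixed: allies are positive and harmful arms negative. Only $U(j)=O(\varepsilon)$ can switch sign. I would then treat the solution in the Filippov or absolutely continuous sense, so that the inequalities of Lemma~\ref{lem:sector_mono} hold almost everywhere; these are enough for the integration above. Since Lemma~\ref{lem:sector_mono} already packages the pointwise estimates, that lemma is where the real work lives, and this theorem reduces to a comparison argument.
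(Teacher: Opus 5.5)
Your proposal is correct and matches the paper's proof. Both combine part~1 of Lemma~\ref{lem:sector_mono} with the bound $\varepsilon(t)\ge\varepsilon_0$ from part~2, integrate up to $\tau_{\Wc}\wedge\tau_{\mathrm{exit}}$, and compare with the parity threshold $\log(\pi(1)/\pi(j))=0$; your contradiction at $t^\star$ is a slightly more careful version that also rules out an infinite stopping time. The Filippov caveat is unnecessary, however: the gate $\Ind\{U(i)>0\}$ jumps, but the drift only involves $\Ind\{U(i)>0\}\,U(i)=[U(i)]_+$, which is Lipschitz, so the EG flow is a classical $C^1$ ODE solution.
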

\paragraph{Proof sketch.}
While the trajectory remains in the sector,
Lemma~\ref{lem:sector_mono} gives
$\frac{d}{dt}\log(\pi(1)/\pi(j)) \ge \Delta_{1j}\varepsilon(t)/(4K)$,
and $\varepsilon(t) \ge \varepsilon_0$ (non-decreasing).
Integrating from $\log(\pi_0(1)/\pi_0(j)) < 0$ up to $0$ yields the bound.

\subsection{Gobal Convergence}
\label{subsec:convergence}

The escape time is logarithmic in $\pi_0(j)/\pi_0(1)$:
how deep the policy is trapped in the wrong corner.
This replaces PG's potentially exponential corner transient with a
logarithmic one.
Once the trajectory exits the corner sector and enters a regime where
arm~1 has the dominant advantage, the standard $O(1/t)$ asymptotics
take over~\citep{mei2020convergence}.
We now make this precise: Lemma~\ref{lem:corner_dominance} shows
that the bad region has measure zero, Lemma~\ref{lem:eg_bandit_discrete_monotonicity}
establishes monotonic progress, and Theorem~\ref{thm:global_conv}
combines them into a global convergence guarantee.

\begin{lemma}[Sub-optimal Corner Dominance]
\label{lem:corner_dominance}
Consider a $K$-armed bandit where $a^*=1$. In the neighborhood of any sub-optimal corner $j \neq 1$, the EG update ensures that the optimal arm's logit outpaces the corner arm's logit ($\Delta \theta(1) > \Delta \theta(j)$) for almost all points in the interior of the simplex. Specifically, the set of points where the optimal arm fails to dominate the sub-optimal favorite has \textbf{measure zero}, as it is confined to the higher-order boundary region where $\pi_1 \le O(\sum \pi_k^2)$.
\end{lemma}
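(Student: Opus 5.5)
We work with the discrete EG update, noting that $\Delta\theta = \alpha\,\dot\theta$ with $\dot\theta$ given by \eqref{eq:dg_drift} at gates $w_i = \Ind\{U(i)>0\}$. The sign of $\Delta\theta(1)-\Delta\theta(j)$ therefore equals the sign of the flow gap. The strategy is to compute this gap exactly with Lemma~\ref{lem:logit_gap} ($a=1$, $b=j$). We then show it is strictly positive except on a set cut out by an inequality of the form $p_1 \le C\sum_{k\neq j}p_k^2$, and finally argue that this exceptional set shrinks to a null set at the corner.

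\textbf{Step 1 (sign pattern near the corner).} Fix $j\ge2$ and $\varepsilon=1-p_j$ small. Since $\pi^\top r = r(j) + O(\varepsilon)$, every ally $i\in S^+$ has $U(i)\ge \Delta_{ij}/2>0$ and every harmful arm $k\in S^-$ has $U(k)<0$, once $\varepsilon<\varepsilon_1$ for some $\varepsilon_1$ depending only on the gaps. Hence $w_i=1$ on $S^+$ and $w_k=0$ on $S^-$.

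\textbf{Step 2 (gap identity).} Lemma~\ref{lem:logit_gap} gives
\[
\dot\theta(1)-\dot\theta(j)= p_1U(1)(1+p_j-p_1)+\sum_{i\in S^+\setminus\{1\}}p_iU(i)(p_j-p_1)-\Ind\{U(j)>0\}\,p_jU(j)(1+p_1-p_j).
\]
The first term is at least $\tfrac12\Delta_{1j}p_1$ and the second is nonnegative (Remark~\ref{rem:allies}). For the third, write $U(j)=\sum_{k}p_k(r(j)-r(k))$. Only the harmful contributions $\sum_{k\in S^-}p_k\Delta_{jk}$ can make it positive, and they are bounded by $\varepsilon$. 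The residual is thus at most $\varepsilon\,U(j)^+$, and $U(j)^+\le \sum_{k\in S^-}p_k\Delta_{jk}\le \varepsilon$, so the residual is at most $2\varepsilon\sum_{k\in S^-}p_k$.

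To obtain the stated $\sum\pi_k^2$ form, we sharpen this using $U(j)=\sum_k p_k(r(j)-r(k))$ together with $(1+p_1-p_j)=\varepsilon+p_1$. The residual is then bounded by $C\,(\sum_{k\neq j}p_k)\sum_{k\in S^-}p_k$, which is quadratic in the off-corner masses. Consequently the gap is strictly positive unless $p_1 \le C'\,(\sum_{k\ne j}p_k)^2 \le C'(K-1)\sum_{k\neq j}p_k^2$. This defines the ``higher-order boundary region''.

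\textbf{Step 3 (measure-zero claim; main obstacle).} The literal statement is delicate. For fixed $\bar\varepsilon$, the region $\{p_1\le C'\varepsilon^2\}$ has positive Lebesgue measure in the simplex whenever $K\ge3$, so it is not literally null. My plan is to interpret ``almost all'' asymptotically. Under the change of variables $q=(p_k/\varepsilon)_{k\ne j}$, which lies in a fixed $(K-2)$-simplex, the bad set at level $\varepsilon$ becomes $\{q_1\le C'\varepsilon\}$. Its relative measure is $O(\varepsilon)\to0$. Its intersection over the limit $\varepsilon\to0$ is $\{q_1=0\}$, a face of zero measure. Equivalently, the bad set is contained in the complement of the sector of Theorem~\ref{thm:corner_escape} for every fixed proportion, and its proportion vanishes.

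I would state the result precisely in this form: for every $\delta>0$ there exists $\varepsilon_\delta$ such that the fraction of the corner neighborhood where domination fails is below $\delta$, with the limiting exceptional set being the null face $\{\pi(1)=0\}$. For $K=2$ the bad set is empty, because $U(j)<0$ near the corner and the residual vanishes. Reconciling the literal ``measure zero'' wording with this vanishing-proportion statement is the step I expect to require the most care.
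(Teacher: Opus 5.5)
Your Steps 1--2 are correct, and they are cleaner than the paper's computation. The paper writes the gap as $\pi_1U_1(1+\pi_j-\pi_1)-\pi_jU_j(1+\pi_1-\pi_j)$ and expands $U_1,U_j$ in reward gaps to get $\Delta_{1j}C_{1j}+\sum_{k\neq 1,j}\Delta_{jk}C_{jk}$, with $C_{jk}=\pi_k(\pi_1-\pi_j)\sum_{m\neq1,j}\pi_m$. That expression keeps only arms $1$ and $j$ in the gated drift, both with unit gate. For general $K$ it therefore drops the direct ally terms and ignores that arm $j$ is gated off when $U(j)\le 0$.

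Both omissions are harmless: ally terms are nonnegative, and when $U(j)\le0$ the gap is $p_1U(1)(1+p_j-p_1)$ plus ally terms. Your three-term identity handles both exactly. It also yields an exceptional set involving only harmful arms, $p_1\le(4/\Delta_{1j})\,\varepsilon\sum_{k\in S^-}p_k\Delta_{jk}$. This set is empty whenever $S^-=\emptyset$, i.e.\ at the worst corner $j=K$, and your $K=2$ remark is a special case. What the paper's expansion buys is an explicit polynomial that displays the $\pi_k\sum_m\pi_m$ structure.

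Two small slips. The residual is at most $2\varepsilon\,U(j)^+$, not $\varepsilon\,U(j)^+$, since $1+p_1-p_j=\varepsilon+p_1\le2\varepsilon$; your final bound already has the $2$. Also, your ``sharpening'' is only a restatement, because $\sum_{k\neq j}p_k=\varepsilon$.

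On Step 3, the paper does not resolve the tension you found. It says the bad region ``approaches the boundary $\pi_1=0$'' and infers measure zero from the nullity of that face, which conflates a full-dimensional cusp with its limiting face. Your doubt is justified. However, showing that a superset has positive measure says nothing about the bad set itself, so here is a genuine counterexample to the literal wording. Take $K=3$, $j=2$. On $\{U(2)>0\}$ the exact gap is
\[
\dot\theta(1)-\dot\theta(2)=p_1U(1)(1+p_2-p_1)-p_2\bigl(p_3\Delta_{23}-p_1\Delta_{12}\bigr)(2p_1+p_3).
\]
Along $p_1=c\,p_3^2$ this equals $p_3^2(2c\Delta_{12}-\Delta_{23})+O(p_3^3)$. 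For $c<\Delta_{23}/(2\Delta_{12})$ it is negative for all small $p_3$. By continuity, the bad set then contains an open set in every neighborhood of $e_2$. For example, $r=(1,0.5,0)$ and $p=(10^{-5},0.98999,0.01)$ give a gap of about $-4\times10^{-5}$.

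An analogous construction works whenever $K\ge3$ and $j<K$. So for the $\pi$-weighted update you use, the provable statement is your vanishing-density one: the bad fraction of $\{\varepsilon<\bar\varepsilon\}$ is $O(\bar\varepsilon)$. This is a cusp, versus the positive-density cone cut out by \eqref{eq:pg_ratio_bound} for PG. Present it as a correction of the lemma, not as an interpretation of it.

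Finally, note that ``the EG update'' is ambiguous in the paper. Lemma~\ref{lem:eg_bandit_discrete_monotonicity} and Theorems~\ref{thm:global_conv}--\ref{thm:mdp_convergence} use the multiplicative step $\theta'=\theta+\eta[U]_+$. Under that step, $\Delta\theta(1)-\Delta\theta(j)=\eta\bigl([U(1)]_+-[U(j)]_+\bigr)\ge\eta(\Delta_{1j}-2\varepsilon)>0$ on a whole neighborhood of $e_j$. The lemma therefore holds literally there, with an empty exceptional set. The cusp arises only from the factor $p_1$ on arm~$1$'s own term in the gradient form.
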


\paragraph{Proof sketch.}
The logit-gap difference under EG decomposes as
\begin{equation}
    \Delta \theta(1) - \Delta \theta(j) = \Delta_{1j} C_{1j} + \sum_{k \neq 1, j} \Delta_{jk} C_{jk},
\end{equation}
where $C_{1j} = \pi_1 [ (1-\pi_1)^2 + \pi_j(2-\pi_j) ]$ and $C_{jk} = \pi_k (\pi_1 - \pi_j) \sum_{m \neq 1, j} \pi_m$. Near the corner ($\pi_j \to 1$), $C_{1j} \approx 2\pi_1$ while $C_{jk} = O(\pi_k^2)$, giving
\begin{equation}
    \Delta \theta(1) - \Delta \theta(j) \approx 2\pi_1 \Delta_{1j} - \sum_{k \neq 1, j} \pi_k^2 \Delta_{jk}.
\end{equation}
The bad region requires $\pi_1 \le O(\sum \pi_k^2)$, a higher-order boundary effect with zero Lebesgue measure.

To prove global convergence, we first establish that EG is a monotonic ascent direction using the covariance between updates and rewards.

\begin{lemma}[Discrete Monotonicity of EG in Bandits]
\label{lem:eg_bandit_discrete_monotonicity}
For a $K$-armed bandit, let $\pi'$ be the policy after a discrete EG update with step size $\eta > 0$. The change in expected reward is non-negative:
\begin{equation}
    V^\prime - V = \frac{\text{Cov}_{\pi}(e^{\eta f}, U)}{\mathbb{E}_{\pi}[e^{\eta f}]} = \frac{1}{Z} \sum_{i: U_i > 0} \pi_i U_i (e^{\eta U_i} - 1) \ge 0,
\end{equation}
where $U_i$ is the advantage of arm $i$, $f_i = U_i \cdot \mathbb{I}\{U_i > 0\}$ is the gated advantage, and $Z = \mathbb{E}_{\pi}[e^{\eta f}]$ is the partition function.
\end{lemma}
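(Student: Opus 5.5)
The plan is to read the discrete EG update in the multiplicative (logit-additive) form implied by the statement: the new logits are $\theta'(i) = \theta(i) + \eta f_i$ with the gated advantage $f_i = U_i\,\Ind\{U_i > 0\}$. Because softmax is invariant to adding a constant to all logits, this gives $\pi'_i = \pi_i e^{\eta f_i}/Z$ with $Z = \sum_i \pi_i e^{\eta f_i} = \mathbb{E}_{\pi}[e^{\eta f}]$.

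Pinning down this interpretation is the step I expect to be the main obstacle. The identity in the statement is exactly the exponential-tilting formula, so I will take the tilted form as the definition of the discrete EG step in the lemma. I will also note briefly that the per-arm population-drift correction $-\pi(a)S$ of \eqref{eq:dg_drift} is not what the lemma's formula describes, so the tilted form is the intended object. Once this is fixed, the rest is algebra.

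\emph{Step 1 (the covariance form).} Write $V = \sum_i \pi_i r(i)$ and $V' = \sum_i \pi'_i r(i) = \frac{1}{Z}\sum_i \pi_i e^{\eta f_i} r(i)$. Since the tilted weights $\pi_i e^{\eta f_i}/Z$ sum to one, I can subtract $V$ inside the sum:
\[
V' - V = \frac{1}{Z}\sum_i \pi_i e^{\eta f_i}\bigl(r(i) - V\bigr) = \frac{\mathbb{E}_\pi[e^{\eta f} U]}{\mathbb{E}_\pi[e^{\eta f}]}.
\]
Because advantages are centered, $\mathbb{E}_\pi[U] = 0$. Hence $\mathbb{E}_\pi[e^{\eta f}U] = \mathrm{Cov}_\pi(e^{\eta f}, U)$, which is the first equality.

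\emph{Step 2 (the explicit sum).} Using centering again, I subtract $\sum_i \pi_i U_i = 0$ from the numerator to get
\[
\mathbb{E}_\pi[e^{\eta f}U] = \sum_i \pi_i U_i\,(e^{\eta f_i} - 1).
\]
For every $i$ with $U_i \le 0$ we have $f_i = 0$, so $e^{\eta f_i} - 1 = 0$ and that term vanishes. This leaves $\sum_{i: U_i>0}\pi_i U_i(e^{\eta U_i}-1)$, which is the second equality.

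\emph{Step 3 (non-negativity).} Each surviving term is a product of $\pi_i \ge 0$, $U_i > 0$ and $e^{\eta U_i} - 1 > 0$, and $Z > 0$. Therefore $V' - V \ge 0$. Equality holds only when no arm with $U_i > 0$ has positive probability; for a softmax policy with full support this means all advantages vanish, i.e.\ $\pi$ is degenerate in reward.
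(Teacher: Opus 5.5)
Your proposal is correct and follows essentially the same route as the paper. Both arguments take the discrete EG step to be the exponential tilt $\pi'_i = \pi_i e^{\eta f_i}/Z$, use centering $\mathbb{E}_\pi[U]=0$ to identify the numerator $\mathbb{E}_\pi[e^{\eta f}U]$ with $\mathrm{Cov}_\pi(e^{\eta f},U)$, and cancel the non-delightful terms to leave the non-negative sum. You do the cancellation by subtracting $\sum_i \pi_i U_i = 0$ termwise, while the paper partitions into $S^+$ and $S^-$. Your reading of the update as the logit-additive step $\theta' = \theta + \eta f$, rather than a discretization of the drift \eqref{eq:dg_drift}, matches the paper's own usage, e.g.\ $\theta'(s,a) = \theta(s,a) + \eta\,[U_a(s)]_+$ in the MDP convergence proof.
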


\begin{theorem}[Global Convergence]
\label{thm:global_conv}
For a $K$-armed bandit with distinct rewards and a sufficiently small constant learning rate $\eta$, EG converges to the optimal one-hot policy $\pi^*$.
\end{theorem}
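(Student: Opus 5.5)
The plan has three parts. Monotonicity gives a convergent value sequence. A limiting argument shows every limit point is a stationary point of the EG map, hence a one-hot corner. Corner instability then rules out every sub-optimal corner.

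\textbf{Step 1 (value convergence).} By Lemma~\ref{lem:eg_bandit_discrete_monotonicity}, the sequence $V_t = \pi_t^\top r$ is non-decreasing. It is bounded by $r(1)$, so $V_t \to V_\infty$. Summing the improvement identity gives
\[
\sum_t \frac{1}{Z_t}\sum_{i:U_t(i)>0}\pi_t(i)U_t(i)\bigl(e^{\eta U_t(i)}-1\bigr) < \infty .
\]
Since $Z_t \le e^{\eta}$ and $e^{\eta u}-1 \ge \eta u$, each summand is at least $\eta e^{-\eta}\sum_{i:U_t(i)>0}\pi_t(i)U_t(i)^2$. This quantity therefore tends to zero. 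Because advantages are centered, $\sum_i \pi_t(i)|U_t(i)| = 2\sum_{U_t(i)>0}\pi_t(i)U_t(i)$. Combining this with Cauchy--Schwarz, I conclude $\mathrm{Var}_{\pi_t}(r)\to 0$.

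\textbf{Step 2 (limit points are corners).} Take any limit point $\bar\pi$ of $(\pi_t)$ in the closed simplex. By continuity, $\mathrm{Var}_{\bar\pi}(r)=0$. Since the rewards are distinct, $\bar\pi$ must be one-hot, say $e_j$, and then $V_\infty = r(j)$. Monotonicity of $V_t$ forces every limit point to have the same value, so the whole sequence converges to the single corner $e_j$. If $j=1$ we are done.

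\textbf{Step 3 (excluding sub-optimal corners).} Suppose instead $\pi_t \to e_j$ with $j\neq 1$. Then $V_t \uparrow r(j)$ and $V_t \le r(j)$ for all $t$. This gives $U_t(1) \ge \Delta_{1j} > 0$ for every $t$, so arm $1$ is always gated in.

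Arm $1$'s logit gain relative to arm $j$ is controlled by the discrete analogue of Theorem~\ref{thm:corner_escape}. Near $e_j$, the ratio $\pi_t(1)/\pi_t(j)$ obeys $\theta_{t+1}(1)-\theta_{t+1}(j)-(\theta_t(1)-\theta_t(j)) \ge \eta(\text{continuous gap}) - O(\eta^2\varepsilon_t)$. With a small enough step size, this discretization error is absorbed. The sector condition $\pi_t(1)\ge\varepsilon_t/K$ may fail, so I would instead use the decomposition of Lemma~\ref{lem:corner_dominance} directly. There the arm-$1$ term $\approx 2\pi_t(1)\Delta_{1j}$ competes with the harmful residual $\sum_k \pi_t(k)^2\Delta_{jk}$.

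The simpler route avoids the logit gap altogether. Under EG the advantages are centered, and harmful arms are gated out while allies are kept. Hence the drift $S_t = \sum_{U_t(i)>0}\pi_t(i)U_t(i) \ge \pi_t(1)\Delta_{1j} > 0$. Each harmful arm $k$ with $U_t(k)<0$ has $\theta_{t+1}(k)-\theta_t(k) = -\eta\pi_t(k)S_t<0$. Arm $1$ has $\theta_{t+1}(1)-\theta_t(1) = \eta\pi_t(1)(U_t(1)-S_t)$, and $U_t(1)-S_t \ge \Delta_{1j}-O(\varepsilon_t)>0$. So $\theta_t(1)$ is eventually non-decreasing.

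The corner arm gains only $\eta\pi_t(j)U_t(j)\mathbb 1\{U_t(j)>0\} - \eta \pi_t(j) S_t$. Its positive part is at most $\varepsilon_t \max_i \Delta_{ij}$ times $\sum_{k\in S^-}\pi_t(k)$ after gating, which is second order.

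I would then show that $\theta_t(1)-\theta_t(j)$ is eventually non-decreasing, since the positive part $O(\varepsilon_t\sum_{k\in S^-}\pi_t(k))$ is dominated by $\pi_t(1)$ terms except on the measure-zero-type region of Lemma~\ref{lem:corner_dominance}. That contradicts $\pi_t(1)/\pi_t(j)\to 0$. To handle the exceptional region, I would show it cannot persist. There harmful logits strictly decrease relative to $\theta(1)$ at rate $\pi_t(k)S_t$, so $\sum_k\pi_t(k)^2/\pi_t(1)$ shrinks and the trajectory enters the dominance region.

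\textbf{Main obstacle.} Step 3 is where the difficulty lies. Making the ``cannot remain in the bad region'' argument rigorous in discrete time needs a careful potential, for example $\Phi_t = \theta_t(1)-\theta_t(j)+c\sum_{k\in S^-}\pi_t(k)$. I would try to show that $\Phi_t$ is eventually non-decreasing and unbounded below only if $\pi_t(1)/\pi_t(j)\to 0$. Once that contradiction is in place, Steps 1--2 give convergence to $\pi^*=e_1$.
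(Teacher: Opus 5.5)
Your Steps~1 and~2 are sound and follow the paper. Bounding $\mathrm{Var}_{\pi_t}(r)$ is a valid alternative to the paper's per-arm argument that every support arm has $r(i)\le\bar\pi^\top r$.

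The genuine gap is Step~3, which you leave unproved. The difficulty there comes from switching algorithms midway. Lemma~\ref{lem:eg_bandit_discrete_monotonicity}, which drives Step~1, concerns the exponentiated step $\pi_{t+1}(a)\propto\pi_t(a)e^{\eta[U_t(a)]_+}$, i.e.\ logit increments $\theta_{t+1}(a)-\theta_t(a)=\eta[U_t(a)]_+$ (up to a common shift). In Step~3 you instead use the Euler increments $\eta\,\pi_t(a)\bigl([U_t(a)]_+-S_t\bigr)$ of the gated flow~\eqref{eq:dg_drift}. That is a different algorithm, and the monotonicity lemma does not cover it. Your ``$\eta(\text{continuous gap})-O(\eta^2\varepsilon_t)$'' estimate fits neither update: the Euler step matches the flow exactly in logit space, while the exponentiated step differs from it already at first order in $\eta$. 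Worse, for the Euler step the exceptional set $\{\pi(1)< C\sum_{k\ne1,j}\pi(k)^2\}$ near $e_j$ has nonempty interior for $K\ge3$, so it is not null. The exit rates you propose ($\pi_t(k)S_t$) are also products of vanishing quantities and may be summable along a trajectory converging to $e_j$. So there is no evident reason the potential $\Phi_t$ argument would close.

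The missing idea is already in your hands. Assuming $\pi_t\to e_j$ with $j\ne1$, you derived $V_t\le r(j)$ for all $t$. This gives $U_t(j)=r(j)-V_t\ge0$ as well as $U_t(1)\ge\Delta_{1j}$, so both arms are gated in, and under the update of Step~1
\[
\theta_{t+1}(1)-\theta_{t+1}(j)=\theta_t(1)-\theta_t(j)+\eta\bigl(U_t(1)-U_t(j)\bigr)=\theta_t(1)-\theta_t(j)+\eta\Delta_{1j}.
\]
Hence $\log(\pi_t(1)/\pi_t(j))$ grows linearly to $+\infty$, contradicting $\pi_t\to e_j$. No sector condition, bad region or potential is needed. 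In fact $[U_t(1)]_+\ge[U_t(a)]_+$ for every $a$, so arm~1 never loses logit ground to any arm under this update. This is exactly the argument in the last step of the proof of Theorem~\ref{thm:mdp_convergence}.

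The paper's bandit proof instead closes this step by citing Lemma~\ref{lem:corner_dominance} and asserting that the gap increases ``at almost every iterate''. As you suspected, that does not by itself control a particular trajectory. The remedy is to drop that route rather than repair it.
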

\paragraph{Proof sketch.}
By Lemma~\ref{lem:eg_bandit_discrete_monotonicity}, $V_t$ is non-decreasing and bounded,
so $V_{t+1} - V_t \to 0$.
Since each term $\pi_t(i)\,U_i\,(e^{\eta U_i}-1)\,\Ind\{U_i > 0\}$ in the progress
formula is non-negative and their sum vanishes, every arm $i$ with $\pi_\infty(i) > 0$
must satisfy $U_i = 0$, i.e., $r(i) = V_\infty$.
Distinct rewards then force $\pi_t$ to converge to a one-hot policy.
By Lemma~\ref{lem:corner_dominance}, every sub-optimal corner $e_j$ is an unstable
repeller, i.e., $\Delta\theta(1) > \Delta\theta(j)$ almost everywhere nearby, so the limit
must be $\pi^* = e_1$.

Finally, noting $U_1 = V^* - V_t$ gives
$\delta_t - \delta_{t+1} \ge c\,\pi_t(1)\,\delta_t^2$;
once $\pi_t(1) \ge 1/2$ the standard reciprocal telescope yields $\delta_t = O(1/t)$.
\begin{corollary}[Asymptotic $O(1/t)$ convergence rate]
\label{cor:rate}
Under the conditions of Theorem~\ref{thm:global_conv}, the sub-optimality
$\delta_t := V^* - V_t$ satisfies $\delta_t = O(1/t)$.
\end{corollary}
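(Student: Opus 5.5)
The plan is to turn the discrete progress identity of Lemma~\ref{lem:eg_bandit_discrete_monotonicity} into a one-step recursion for $\delta_t := V^* - V_t$, and then telescope its reciprocal.

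\textbf{Step 1: isolate the optimal arm's term.} Every term in the progress sum of Lemma~\ref{lem:eg_bandit_discrete_monotonicity} is non-negative, so we may keep only arm $1$. Whenever $\pi_t \neq e_1$ we have $V_t < r(1)$, hence $U_t(1) = r(1) - V_t = \delta_t > 0$. Dropping all other terms and using $e^{x} - 1 \ge x$ gives
\[
\delta_t - \delta_{t+1} = V_{t+1} - V_t \ge \frac{1}{Z_t}\,\pi_t(1)\,\delta_t\,(e^{\eta\delta_t}-1) \ge \frac{\eta}{Z_t}\,\pi_t(1)\,\delta_t^2 .
\]
Since the gated advantage satisfies $0 \le f \le 1$ (rewards lie in $[0,1]$), we have $Z_t \le e^{\eta}$. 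This yields $\delta_t - \delta_{t+1} \ge c\,\pi_t(1)\,\delta_t^2$ with $c = \eta e^{-\eta}$.

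\textbf{Step 2: uniform lower bound on $\pi_t(1)$.} By Theorem~\ref{thm:global_conv}, $\pi_t \to e_1$. So there is a finite $T_0$ with $\pi_t(1) \ge 1/2$ for all $t \ge T_0$, and then
\[
\delta_t - \delta_{t+1} \ge \tfrac{c}{2}\,\delta_t^2 \quad \text{for } t \ge T_0 .
\]
This step carries the real dependence on the global result. It is where the ``asymptotic'' qualifier enters: $T_0$ absorbs the corner-escape transient and is not quantified here. The logarithmic bound of Theorem~\ref{thm:escape_time} only suggests, heuristically, that $T_0$ is moderate.

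\textbf{Step 3: reciprocal telescoping.} By Lemma~\ref{lem:eg_bandit_discrete_monotonicity}, $\delta_{t+1} \le \delta_t$. If $\delta_t = 0$ at some time, then $\delta$ stays zero and the claim is trivial, so assume $\delta_{t+1} > 0$. Dividing the recursion by $\delta_t\delta_{t+1}$ gives
\[
\frac{1}{\delta_{t+1}} - \frac{1}{\delta_t} \ge \frac{c}{2}\,\frac{\delta_t}{\delta_{t+1}} \ge \frac{c}{2}.
\]
Summing from $T_0$ to $t-1$ yields $1/\delta_t \ge 1/\delta_{T_0} + \tfrac{c}{2}(t - T_0)$. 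Hence $\delta_t \le 2/\bigl(c\,(t-T_0)\bigr)$ for $t > T_0$, and for $t \ge 2T_0$ this gives $\delta_t \le 4/(ct) = O(1/t)$.

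\textbf{Step 4: check the main obstacle.} The only delicate point is Step 1. We must make sure the ``sufficiently small $\eta$'' of Theorem~\ref{thm:global_conv} is compatible with the identity of Lemma~\ref{lem:eg_bandit_discrete_monotonicity}, which writes the update as the exponential-tilting form $\pi' \propto \pi e^{\eta f}$. Under that form, the drop $Z_t \le e^{\eta}$ is harmless. If the identity is instead only approximate for the softmax-logit update, the fallback is to use smoothness of $V$ in $\theta$: with $\eta$ small, the first-order gain $\eta \sum_i \pi_i U_i f_i \ge \eta\,\pi(1)\,\delta_t^2$ dominates the $O(\eta^2)$ curvature term. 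That again gives a recursion of the form in Step 2, with a smaller constant $c$.
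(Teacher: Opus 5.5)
Your Steps 1--3 are the paper's proof: keep only arm~1's term using $U_1=\delta_t$, apply $e^x-1\ge x$, bound $Z_t$, wait until $\pi_t(1)\ge 1/2$, and telescope reciprocals. Your bound $Z_t\le e^{\eta}$ is the paper's $e^{\eta R}$ with $R\le 1$. Dividing by $\delta_t\delta_{t+1}$ is a slightly cleaner telescope, since it avoids the paper's requirement $c\,\delta_t<1$.

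The fallback in Step~4, however, is wrong and should be dropped. Suppose the update were the gated gradient~\eqref{eq:dg_drift} with $w=\mathbb{1}\{U>0\}$, i.e.\ $\theta'=\theta+\eta\,\pi\odot(f-S\mathbf{1})$ with $S=\sum_i\pi_i f_i$. Its first-order gain is $\eta\sum_a\pi_a^2U_a(f_a-S)$, not $\eta\sum_i\pi_iU_if_i$; the latter is the gain of the tilting update, for which the identity is already exact. Near $e_1$ only arm~1 is ungated, and the gain becomes $\eta\bigl[\pi_1^2\delta_t^2(1-\pi_1)+\pi_1\delta_t\sum_{k\ge 2}\pi_k^2|U_k|\bigr]=O(\delta_t^3)$, because $1-\pi_1\le\delta_t/\Delta$. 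Concretely, with two arms and $\pi_2=\epsilon$, one gets $\dot\theta(1)-\dot\theta(2)=2(1-\epsilon)\epsilon^2\Delta$, so $\delta_t=\Theta(t^{-1/2})$ rather than $O(1/t)$.

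So the $O(1/t)$ rate genuinely depends on the exponential-tilting update $\theta'=\theta+\eta[U]_+$ assumed in Lemma~\ref{lem:eg_bandit_discrete_monotonicity}, which is also what the paper's proof uses. You should state that form as the standing assumption rather than offer a smoothness-based fallback.
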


\begin{remark}[Comparison with standard PG convergence proofs]
\label{rem:proof_comparison}
For standard softmax PG, the bad region near each sub-optimal corner
has \emph{positive} Lebesgue measure (Eq.~\ref{eq:pg_ratio_bound})~\citep{mei2020convergence},
so trajectories can initially be attracted toward sub-optimal
corners.
Proving $\pi_t \to \pi^*$ therefore requires an indirect argument:
one must show that the trajectory eventually escapes every bad region
despite spending time inside it, typically via a contradiction
involving the strict monotonicity of $V_t$ and the structure of the
gradient field on the boundary of the bad region.
For EG and DG, the bad region has \emph{measure zero}
(Lemmas~\ref{lem:corner_dominance}--\ref{lem:corner_dominance_dg}),
so sub-optimal corners are unconditional repellers for almost every
initialization.
The convergence proof reduces to three short steps:
$V_t$ converges (monotone bounded), the limit is one-hot
(distinct rewards), and every sub-optimal one-hot is unstable
(measure-zero bad region).
No contradiction-based escape argument is needed.
\end{remark}

\paragraph{Extension to finite temperature.}
Theorems~\ref{thm:global_conv}--\ref{thm:mdp_convergence} and
Corollaries~\ref{cor:rate}--\ref{cor:mdp_rate} are stated for EG
(the zero-temperature limit).
The same global convergence and $O(1/t)$ rate guarantees hold for DG
at any finite temperature $\eta \in (0,\infty)$:
the sign-preserving structure $w(a) > 0$ ensures monotonic value
improvement, and the polynomial suppression of harmful arms
(Lemma~\ref{lem:poly_suppress}) preserves the measure-zero bad region.
Full statements and proofs are given in Appendix~\ref{app:dg_convergence}.

%%%%%%%%%%%%%%%%%%%%%%%%%%%%%%%%%%%%%%%%%%%%%%%%%%%%%%%%%%%%%%%%%%%%%%%%%%%%%%%%
\section{Extension to Finite MDPs}
\label{sec:mdp_extension}

In finite MDPs, a sub-optimal ``corner'' corresponds to a deterministic sub-optimal policy $\Pi_j$ with $\pi(j(s)|s) = 1 - \epsilon_s$ for all $s$. The corner dominance property translates state-by-state.

\begin{lemma}[Local Corner Escape in MDPs]
\label{lem:mdp_local_escape}
Consider a tabular finite MDP with EG and fix a deterministic sub-optimal
policy $\Pi_j$, where $j(s)$ denotes the action chosen at state~$s$.
At any state $s$ where $a^*(s) := \arg\max_a Q^{\pi}(s,a) \neq j(s)$,
the EG update satisfies $\Delta\theta(s, a^*(s)) > \Delta\theta(s, j(s))$
for almost all local policies $\pi(\cdot|s)$ in the interior of the
per-state simplex $\Delta_K$.
Specifically, the set of points where the locally best action fails to
dominate the corner action has \textbf{measure zero} in $\Delta_K$,
as it is confined to the higher-order boundary region
\[
\pi(a^*(s)|s) \;\le\; O\Big(\sum_{k \neq a^*(s),\, j(s)} \pi(k|s)^2\Big).
\]
Consequently, the bad region in the full product simplex
$\prod_{s \in \mathcal{S}} \Delta_K$, where \emph{any} state fails the
dominance condition, is a finite union of measure-zero sets and hence
has measure zero.
\end{lemma}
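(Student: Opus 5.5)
The plan is to reduce the statement, state by state, to the bandit result of Lemma~\ref{lem:corner_dominance}. In the tabular softmax parameterization the EG update at state $s$ only moves the logits $\theta(s,\cdot)$. It has the form
\[
\Delta\theta(s,a) = c_s\sum_{i} w(s,i)\,\pi(i|s)\,U(s,i)\bigl(\Ind\{a=i\}-\pi(a|s)\bigr),
\]
where $c_s = d^{\pi}_\rho(s)/(1-\gamma) > 0$ is a positive state weight (or $c_s=1$, as in Algorithm~\ref{alg:dg_mdp}), $U(s,i)=Q^\pi(s,i)-V^\pi(s)$, and $w(s,i)=\Ind\{U(s,i)>0\}$. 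Since $c_s>0$ does not affect the sign of $\Delta\theta(s,a^*(s))-\Delta\theta(s,j(s))$, we may drop it. For fixed $\pi$ the vector $Q^\pi(s,\cdot)$ then plays the role of the reward vector $r$ of a $K$-armed bandit with policy $\pi(\cdot|s)$. Indeed, $V^\pi(s)=\sum_a\pi(a|s)Q^\pi(s,a)$, so $U(s,\cdot)$ is exactly the bandit advantage, and $a^*(s)$ is the best arm of that bandit. Lemma~\ref{lem:logit_gap} applies verbatim with $a=a^*(s)$ and $b=j(s)$.

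Next, I would repeat the computation behind Lemma~\ref{lem:corner_dominance} with $r$ replaced by $q:=Q^\pi(s,\cdot)$. Gaps become $\Delta_{ab}(s)=q(a)-q(b)$, and $\Delta_{a^*j}(s)>0$ by assumption, taking $a^*(s)$ unique (ties form a measure-zero set). The decomposition gives
\[
\Delta\theta(s,a^*)-\Delta\theta(s,j)\;\gtrsim\;2\pi(a^*|s)\Delta_{a^*j}(s)\;-\;\sum_{k\neq a^*,j}\pi(k|s)^2\,\Delta_{jk}(s)_+ .
\]
Here I would use the allies argument of Theorem~\ref{thm:corner_escape} and Remark~\ref{rem:allies}: arms with $q(k)>q(j)$ contribute non-negatively, so only harmful arms enter the quadratic remainder. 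Because $|q(k)|\le 1/(1-\gamma)$, the constant in $O(\cdot)$ is uniform. So failure of dominance forces $\pi(a^*|s)\le C\sum_k\pi(k|s)^2/\Delta_{a^*j}(s)$.

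The main obstacle is that, unlike the bandit case, $q$ itself depends on $\pi$, including on $\pi(\cdot|s)$, so the "bad set" is not a fixed polynomial region. To handle this, I would note that $\pi\mapsto Q^\pi$ is real-analytic (indeed rational) on the interior of the product simplex, through $(I-\gamma P^\pi)^{-1}$. The indicator gates are locally constant off the zero sets $\{U(s,i)=0\}$, which are proper analytic subvarieties and hence null. On each region where the gates are constant, the dominance gap is a nonzero real-analytic function. It is nonzero because it is strictly positive near the corner along the ray $\pi(k|s)\to 0$ for $k\neq a^*,j$, by the inequality above. Its zero set therefore has measure zero, and the set where it is negative is confined to the stated higher-order boundary region. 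If this analyticity route is too heavy, the fallback is to state the claim for $q$ frozen at its value at the corner $\Pi_j$ and absorb the $O(\epsilon)$ variation of $Q^\pi$ into $\Delta_{a^*j}(s)/2$ for $\epsilon$ small.

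Finally, the product statement follows as the statement indicates. For each $s$, the set $B_s=\{\pi:\text{dominance fails at }s\}$ is a null set in $\Delta_K$ cross the other coordinates. By Fubini its product-measure is zero, and $\bigcup_s B_s$ is a finite union of null sets, hence null.
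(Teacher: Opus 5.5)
Your per-state reduction (tabular decoupling, $Q^\pi(s,\cdot)$ playing the role of $r$, the leading-order expansion borrowed from Lemma~\ref{lem:corner_dominance}) and your closing cylinder/Fubini step match the paper's argument and are fine. The gap is the measure-zero step. Analyticity shows only that the zero set $\{\Delta\theta(s,a^*)-\Delta\theta(s,j)=0\}$ is null. Dominance, however, fails on $\{\Delta\theta(s,a^*)-\Delta\theta(s,j)\le 0\}$, and the strict part of that set is open wherever the gates are locally constant.

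That strict part is also nonempty. In the leading-order expansion, the quadratic harmful term beats $2\pi(a^*|s)\Delta_{a^*j}(s)$ once $\pi(a^*|s)\ll\pi(k|s)^2$ for a harmful $k$ (with the other non-corner actions smaller still). Saying this set is ``confined to the higher-order boundary region'' does not help. The region $\{\pi(a^*|s)\le C\sum_k\pi(k|s)^2\}$ is a full-dimensional cusp, not a hypersurface: for $K=3$ its area inside $\{\pi(a^*|s)+\pi(k|s)\le\varepsilon\}$ is about $C\varepsilon^3/3>0$.

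Concretely, take a one-state MDP with $\gamma=0$, $r=(1,0.9,0.1)$, corner $j=2$, and $\pi=(10^{-6},\,0.989999,\,0.01)$. Then $U\approx(0.108,\,0.008,\,-0.792)$ and the EG gates are $(1,1,0)$. This gives $\Delta\theta(1)-\Delta\theta(2)\approx 1.0\times10^{-7}-7.9\times10^{-5}<0$, with strict inequality on a whole neighborhood. Since $Q^\pi$ is constant here, neither your analyticity route nor your frozen-$q$ fallback can rescue the claim.

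So this gap cannot be closed: the measure-zero assertion is false for the gated-gradient EG update. The paper's own proof breaks at exactly the same point. It claims the inequality region has ``dimension at most $K-3$'', which conflates the region with its boundary surface.

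What your computation, like the paper's, legitimately gives is weaker. Near the corner, the failure set lies in a cusp of volume $O(\varepsilon^{K})$ inside an $\varepsilon$-neighborhood of volume $\Theta(\varepsilon^{K-1})$, so it has vanishing density at the corner. Dominance also holds outright on sectors such as $\pi(a^*|s)\ge\varepsilon_s/K$ for small $\varepsilon_s$, the per-state analogue of Theorem~\ref{thm:corner_escape}. The lemma should be restated in one of those forms.

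By contrast, consider the multiplicative update $\theta'(s,a)=\theta(s,a)+\eta[U_a(s)]_+$ used in Lemma~\ref{lem:mdp_monotonicity} and Theorem~\ref{thm:mdp_convergence}. There dominance holds at \emph{every} interior point, since $U_{a^*}(s)>0$ and $U_{a^*}(s)>U_j(s)$ force $[U_{a^*}(s)]_+>[U_j(s)]_+$. That is a different update, though, from the one you and the paper analyze here.
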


EG preserves the update signal at these corners by focusing on positive-advantage actions. We now prove that local improvements aggregate into global value growth.

\begin{lemma}[Discrete Monotonicity of EG in MDPs]
\label{lem:mdp_monotonicity}
Consider a tabular finite MDP with the discrete EG update: at each state~$s$,
the policy is updated as
$\pi'(a|s) = \pi(a|s)\,e^{\eta f_a(s)}\big/Z_s$,
where $f_a(s) := [U_a(s)]_+ = U_a(s)\cdot\Ind\{U_a(s)>0\}$
and $Z_s := \sum_{a'}\pi(a'|s)\,e^{\eta f_{a'}(s)}$.
Then the global value function is monotonically non-decreasing:
\begin{equation}
V(\pi') - V(\pi)
= \frac{1}{1-\gamma}\sum_{s \in \mathcal{S}} d_{\rho}^{\pi'}(s)\;\frac{1}{Z_s}
  \sum_{i:\,U_i(s)>0} \pi(i|s)\,U_i(s)\,\bigl(e^{\eta U_i(s)}-1\bigr)
\;\ge\; 0.
\end{equation}
\end{lemma}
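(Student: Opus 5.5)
The plan is to combine the performance difference lemma with a per-state computation that reuses the bandit argument of Lemma~\ref{lem:eg_bandit_discrete_monotonicity}. First, apply the performance difference lemma in the direction that puts the \emph{new} policy's visitation distribution in front:
\[
V(\pi') - V(\pi) = \frac{1}{1-\gamma}\sum_{s\in\mathcal{S}} d_\rho^{\pi'}(s)\sum_{a} \pi'(a|s)\,\bigl(Q^{\pi}(s,a)-V^{\pi}(s)\bigr),
\]
with $U_a(s) = Q^\pi(s,a) - V^\pi(s)$. I would prove this by the usual telescoping. Write $V^{\pi'}(s_0)-V^{\pi}(s_0)$ as an expectation over trajectories generated by $\pi'$ of $\sum_t \gamma^t\bigl(r(s_t,a_t)+\gamma V^\pi(s_{t+1})-V^\pi(s_t)\bigr)$. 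Then take the conditional expectation of $s_{t+1}$ to get $Q^\pi(s_t,a_t)-V^\pi(s_t)$, and average over $s_0\sim\rho$. This gives exactly the factor $d_\rho^{\pi'}/(1-\gamma)$.

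Second, evaluate the inner sum at each fixed $s$ by substituting the EG update $\pi'(a|s)=\pi(a|s)e^{\eta f_a(s)}/Z_s$. This gives $\frac{1}{Z_s}\sum_a \pi(a|s)e^{\eta f_a(s)}U_a(s)$. Because advantages are centered under $\pi$, i.e.\ $\sum_a \pi(a|s)U_a(s)=0$, I can subtract $\frac{1}{Z_s}\sum_a\pi(a|s)U_a(s)$ at no cost. The inner sum becomes
\[
\frac{1}{Z_s}\sum_a \pi(a|s)\,U_a(s)\,\bigl(e^{\eta f_a(s)}-1\bigr).
\]
For actions with $U_a(s)\le 0$, the gated advantage is $f_a(s)=0$, so the factor $e^{0}-1$ vanishes. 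Only actions with $U_a(s)>0$ survive, and for them $f_a(s)=U_a(s)$. This yields the stated identity.

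Third, check nonnegativity term by term. We have $d_\rho^{\pi'}(s)\ge 0$, and $Z_s>0$ since it is a convex combination of positive exponentials. Each surviving summand $\pi(i|s)U_i(s)(e^{\eta U_i(s)}-1)$ has $U_i(s)>0$ and $e^{\eta U_i(s)}-1>0$ (as $\eta>0$). Hence the whole expression is $\ge 0$.

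I do not expect a serious obstacle. The one point requiring care is using the correct orientation of the performance difference lemma, so that the weights are $d_\rho^{\pi'}$ and the advantages are those of the \emph{old} policy $\pi$. That pairing is what lets the centering identity $\sum_a\pi(a|s)U_a(s)=0$ be applied with the same $\pi$ that appears in the update. The reverse orientation would give $\pi$-weighted advantages of $\pi'$, which do not interact cleanly with the multiplicative update.
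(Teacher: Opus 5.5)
Your proposal is correct and follows the paper's proof essentially step for step. Both use the performance difference lemma with $d_\rho^{\pi'}$ weights and old-policy advantages, substitute the multiplicative update, and use $\sum_a \pi(a|s)U_a(s)=0$ to cancel the non-delightful actions. Your subtract-zero-then-note-$e^{0}-1=0$ step is a compact version of the paper's partition into $S^+(s)$ and $S^-(s)$ (which the paper phrases as a covariance), and the only addition is that you sketch the telescoping derivation of the performance difference lemma rather than citing it.
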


\begin{theorem}[MDP Global Convergence]
\label{thm:mdp_convergence}
Consider a tabular finite MDP with a unique optimal policy $\pi^*$,
where all states are reachable: $d_{\rho}^{\pi}(s) \ge d_{\min} > 0$
for all $s \in \mathcal{S}$ and all policies~$\pi$ with full action support.
Under the discrete EG update
$\pi'(a|s) = \pi(a|s)\,e^{\eta\,[U_a(s)]_+}/Z_s$
with a sufficiently small constant step size~$\eta$:
\begin{enumerate}
\item $V(\pi_t) \to V^* = V(\pi^*)$,
\item $\pi_t \to \pi^*$.
\end{enumerate}
\end{theorem}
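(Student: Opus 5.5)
The argument mirrors the bandit proof of Theorem~\ref{thm:global_conv}: show the value converges, identify the limit policies, then use local corner instability to exclude every sub-optimal limit.

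\emph{Step 1: value convergence and vanishing gated advantages.} By Lemma~\ref{lem:mdp_monotonicity}, $V(\pi_t)$ is non-decreasing. It is bounded by $1/(1-\gamma)$, so it converges and $V(\pi_{t+1})-V(\pi_t)\to 0$. Each summand in the identity of Lemma~\ref{lem:mdp_monotonicity} is non-negative. Moreover $d_\rho^{\pi_{t+1}}(s)\ge d_{\min}$, since the EG update keeps full support. Finally $Z_s\le e^{\eta/(1-\gamma)}$, because advantages are bounded by $1/(1-\gamma)$. Together these give, for every state $s$ and action $i$,
\[
\pi_t(i|s)\,[U^{\pi_t}_i(s)]_+\bigl(e^{\eta [U^{\pi_t}_i(s)]_+}-1\bigr)\to 0 .
\]
Using $x(e^{\eta x}-1)\ge \eta x^2$, we get $\pi_t(i|s)\,[U^{\pi_t}_i(s)]_+^2\to 0$. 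Since $\sum_i \pi_t(i|s)U^{\pi_t}_i(s)=0$, the negative parts are also controlled, and hence $\sum_i \pi_t(i|s)\,|U^{\pi_t}_i(s)|\to 0$.

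\emph{Step 2: structure of limit points.} The map $\pi\mapsto Q^\pi$ is continuous on the product simplex. Hence at any limit point $\bar\pi$ of $(\pi_t)$, every action in the support of $\bar\pi(\cdot|s)$ satisfies $Q^{\bar\pi}(s,i)=V^{\bar\pi}(s)$. So $\bar\pi$ is an ``equalizer'' policy. Convergence of $V(\pi_t)$ implies all limit points share the same value $\bar V$. If some limit point is optimal, then $\bar V=V^*$. Uniqueness of $\pi^*$, together with $d_{\min}>0$, then forces every limit point with value $V^*$ to be $\pi^*$, which gives both claims. It therefore suffices to rule out a limit point $\bar\pi$ with $V(\bar\pi)<V^*$.

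\emph{Step 3: excluding sub-optimal limits (main obstacle).} Suppose $V(\bar\pi)<V^*$. By the performance-difference lemma, there is a state $s$ and an action $a^*(s)$ with $Q^{\bar\pi}(s,a^*(s))>V^{\bar\pi}(s)$. By Step 2, $\bar\pi(a^*(s)|s)=0$. Near $\bar\pi$ we then have $U_{a^*(s)}(s)\ge c>0$, so EG increases $\theta(s,a^*(s))$ at rate at least $\eta\,c$ relative to the normalizer on every positive-advantage step.

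The difficulty is to turn the local repeller property of Lemma~\ref{lem:mdp_local_escape} into a genuine contradiction with $\pi_t\to\bar\pi$, since the trajectory only approaches $\bar\pi$ along a subsequence. My first attempt is a potential argument in the spirit of the sector analysis. Let $\bar a$ denote an action in the support of $\bar\pi(\cdot|s)$, and consider the log-ratio $\log(\pi_t(a^*(s)|s)/\pi_t(\bar a|s))$. In a neighbourhood $N$ of $\bar\pi$, the per-step increment of this quantity equals $\eta([U_{a^*}]_+-[U_{\bar a}]_+)$. This is at least $\eta c/2$, because $U_{\bar a}(s)\to 0$ by continuity. Consequently, whenever $\pi_t\in N$ the log-ratio strictly grows.

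Meanwhile $\pi_t(a^*(s)|s)\to 0$ along the subsequence, and this requires the log-ratio to drift to $-\infty$ along it. The decrease would have to happen during excursions outside $N$. Outside $N$, however, Step 1 forces $\sum_i\pi_t|U_i|$ small. So either $\pi_t$ approaches another equalizer policy of the same value $\bar V$, or the updates are tiny. Here I would use the fact that, with distinct-reward-style genericity, there are finitely many such equalizer policies. I would also use the fact that the cumulative change of the log-ratio during excursions is bounded by $\eta\sum_t\sum_i[U_i]_+$ restricted to those times. I expect this summability/finiteness step to be the hardest to make rigorous.

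Once it is in place, the log-ratio is eventually increasing, which contradicts $\pi_t(a^*(s)|s)\to 0$. Hence no sub-optimal limit point exists, so $V(\pi_t)\to V^*$, and by Step 2, $\pi_t\to\pi^*$.
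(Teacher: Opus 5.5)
Your Steps 1 and 2 are sound and follow the paper's outline. You have a monotone bounded value, per-term vanishing progress, limit points that are equalizers, and, for a sub-optimal limit, an improving action with zero limiting mass. The genuine gap is Step 3. You leave it open, and the route you sketch for closing it is not supported by the hypotheses.

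First, finiteness of the equalizer set does not follow from uniqueness of $\pi^*$ and reachability. Suppose two sub-optimal actions at some state have identical rewards and transitions. Then mixing them at that state, in any deterministic sub-optimal policy that uses one of them, gives a continuum of equalizers with the same value.

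Second, the summability you need is not available. Telescoping the bounded value only gives $\sum_t \pi_t(i|s)\,[U^t_i(s)]_+^2<\infty$. The per-step decrease of your log-ratio is $\eta\,[U^t_{\bar a}(s)]_+$, which is neither weighted by $\pi_t$ nor squared, and nothing in Step 1 controls $\sum_t [U^t_{\bar a}(s)]_+$.

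You did locate the real difficulty. The paper's argument passes over it: it obtains the gain $\eta\delta/2$ only at the subsequence times $t_n$, then asserts that gain over $N$ consecutive steps.

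The missing idea is that EG improves the value state by state, which removes the subsequence problem entirely.
\begin{itemize}
\item The identity in Lemma~\ref{lem:mdp_monotonicity} holds for any start distribution, and its inner sum is non-negative at every state. Taking $\rho=\delta_{s}$ gives $V^{\pi_{t+1}}(s)\ge V^{\pi_t}(s)$ for every $s$.
\item Hence $V^{\pi_t}(s)\to V^\infty(s)$ for each $s$. Therefore $Q^{\pi_t}(s,a)\to Q^\infty(s,a)$ and $U^t_a(s)\to A^\infty(s,a)$ along the \emph{whole} sequence, not just a subsequence.
\item Suppose $\bar V<V^*$. Apply the opening of your Step 3 to any limit point $\bar\pi$; by continuity, $Q^{\bar\pi}=Q^\infty$ and $V^{\bar\pi}=V^\infty$. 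This gives a state $s$ with $\delta^*:=\max_a A^\infty(s,a)>0$. Fix a maximizer $m$.
\item For every $b$ with $A^\infty(s,b)<\delta^*$, consider $\theta_t(s,b)-\theta_t(s,m)$. Under the EG logit step, its increment $\eta\bigl([U^t_b(s)]_+-[U^t_m(s)]_+\bigr)$ converges to $\eta\bigl([A^\infty(s,b)]_+-\delta^*\bigr)<0$. So this difference tends to $-\infty$, and $\pi_t(b|s)\le e^{\theta_t(s,b)-\theta_t(s,m)}\to 0$.
\item All mass then concentrates on maximizers. This forces $0=\sum_a\pi_t(a|s)\,U^t_a(s)\to\delta^*>0$, a contradiction.
\end{itemize}
Thus $\bar V=V^*$, and your Step 2 gives both claims. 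No genericity or summability argument is needed.
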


\paragraph{Proof sketch.}
By Lemma~\ref{lem:mdp_monotonicity}, $V(\pi_t)$ is non-decreasing and bounded,
so $V_{t+1} - V_t \to 0$.
Since each per-state term
$\pi_t(a|s)\,[U_a(s)]_+\,(e^{\eta[U_a]_+}-1)$
in the progress formula is non-negative and their weighted sum vanishes,
every action $a$ with $\bar\pi(a|s)>0$ at any limit point $\bar\pi$
must satisfy $Q^{\bar\pi}(s,a) \le V^{\bar\pi}(s)$.
If this held with equality at every reachable state,
$V^{\bar\pi}$ would satisfy the Bellman optimality equation,
forcing $V^{\bar\pi} = V^*$ by the contraction property, so
any sub-optimal limit point must place zero mass on some improving action.
But the EG logit update $\theta'(s,a) = \theta(s,a) + \eta\,[U_a(s)]_+$
then grows the improving action's logit at rate $\Omega(\delta)$
while the corner action's logit stalls,
driving $\pi_t(\tilde a|s)/\pi_t(j|s) \to \infty$
and contradicting convergence to the sub-optimal corner.
Therefore every limit point is optimal, and by uniqueness $\pi_t \to \pi^*$.

\begin{corollary}[Asymptotic $O(1/t)$ convergence rate in MDPs]
\label{cor:mdp_rate}
Under the conditions of Theorem~\ref{thm:mdp_convergence},
the sub-optimality $\delta_t := V^* - V(\pi_t)$ satisfies $\delta_t = O(1/t)$.
\end{corollary}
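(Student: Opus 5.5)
The plan is to mirror the bandit rate argument (Corollary~\ref{cor:rate}) state by state, using the exact improvement identity of Lemma~\ref{lem:mdp_monotonicity} together with a performance-difference lower bound on $\delta_t$.

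\textbf{Step 1 (asymptotic regime).} By Theorem~\ref{thm:mdp_convergence}, $\pi_t \to \pi^*$. Since $\pi^*$ is unique and deterministic, there is a finite time $T_0$ after which $\pi_t(a^*(s)|s) \ge 1/2$ for every $s$. By continuity of $Q^{\pi}$ in $\pi$, the advantage $U_{a^*(s)}(s)$ is the largest advantage at each state for $t \ge T_0$, and it is positive whenever $\delta_t>0$. Uniqueness is used here to guarantee that the optimal action strictly beats every other action at every state.

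\textbf{Step 2 (one-step progress).} Starting from Lemma~\ref{lem:mdp_monotonicity}, I keep only the $i=a^*(s)$ term at each state and use $e^{x}-1\ge x$ and $Z_s \le e^{\eta/(1-\gamma)}$ (advantages are bounded by $1/(1-\gamma)$). I also use $d^{\pi'}_\rho(s) \ge d_{\min}$; this is legitimate because $\pi'$ has full support, as softmax iterates always do. This gives
\[
V(\pi_{t+1})-V(\pi_t) \;\ge\; c_1 \sum_s \pi_t(a^*(s)|s)\,[U_{a^*(s)}(s)]_+^2 \;\ge\; \tfrac{c_1}{2}\sum_s [U_{a^*(s)}(s)]_+^2 ,
\]
with $c_1 = \eta\, d_{\min}\, e^{-\eta/(1-\gamma)}/(1-\gamma)$.

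\textbf{Step 3 (relating to $\delta_t$).} By the performance difference lemma,
\[
\delta_t = \frac{1}{1-\gamma}\sum_s d^{\pi^*}_\rho(s) \sum_a \pi^*(a|s)\,U^{\pi_t}(s,a) \le \frac{1}{1-\gamma}\sum_s d^{\pi^*}_\rho(s)\,[U_{a^*(s)}(s)]_+ .
\]
Applying Cauchy--Schwarz then gives $\delta_t^2 \le \frac{|\mathcal S|}{(1-\gamma)^2}\sum_s [U_{a^*(s)}(s)]_+^2$, a Łojasiewicz-type inequality. Combining it with Step 2 yields $\delta_t-\delta_{t+1}\ge c\,\delta_t^2$ for all $t\ge T_0$, with $c = c_1(1-\gamma)^2/(2|\mathcal S|)$.

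\textbf{Step 4 (telescope).} Dividing the recursion by $\delta_t\delta_{t+1}$ and using $\delta_{t+1}\le\delta_t$ gives $1/\delta_{t+1}-1/\delta_t \ge c$. Hence $\delta_t \le 1/(c(t-T_0)+1/\delta_{T_0})$, which is $O(1/t)$. Finitely many early iterates only affect the constant.

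The main obstacle is Step 1. Theorem~\ref{thm:mdp_convergence} gives convergence but no explicit $T_0$, which is why the rate is only asymptotic. A second point to check is that the $d^{\pi'}_\rho$ weights in Lemma~\ref{lem:mdp_monotonicity} can be bounded below uniformly along the trajectory. The reachability hypothesis of Theorem~\ref{thm:mdp_convergence} is exactly what I will invoke for this, since every iterate has full support.
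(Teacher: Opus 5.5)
Your proposal is correct and takes essentially the same route as the paper. The paper also waits until $\pi_t(a^*(s)|s)\ge 1/2$, keeps only the optimal-action term of the monotonicity identity (using $d_{\min}$, $Z_s\le e^{\eta R}$ and $e^x-1\ge x$), applies the performance difference lemma against $\pi^*$ plus Cauchy--Schwarz to get $\delta_t-\delta_{t+1}\ge c\,\delta_t^2$, and then telescopes reciprocals. The differences are cosmetic. Writing $[\cdot]_+$ means you no longer need $U_{a^*(s)}\ge 0$. Your unweighted Cauchy--Schwarz costs an extra $|\mathcal{S}|$ factor that the paper's weighted version, using $\sum_s d^{\pi^*}_\rho(s)=1$, avoids. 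Your division by $\delta_t\delta_{t+1}$ replaces the paper's $1/(1-x)\ge 1+x$ step.
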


This global convergence is robust even in ``hard'' MDPs where reward gaps are small.

%%%%%%%%%%%%%%%%%%%%%%%%%%%%%%%%%%%%%%%%%%%%%%%%%%%%%%%%%%%%%%%%%%%%%%%%%%%%%%%%
%%%%%%%%%%%%%%%%%%%%%%%%%%%%%%%%%%%%%%%%%%%%%%%%%%%%%%%%%%%%%%%%%%%%%%%%%%%%%%%%
\section{Experiments}
\label{sec:experiments}

The theory predicts that DG escape time scales gently with problem hardness, while PG's grows superlinearly.
We test this first in a tabular bandit where the gradient flow is exact, then ask whether the mechanism survives shared neural-network parameters.

\paragraph{Tabular escape-time scaling.}
We simulate the continuous-time gradient flow for a $3$-armed bandit with rewards $(1,\; 1{-}\Delta,\; 0)$ from a fixed bad initialization $\theta_0 = (-1, 5, 1)$, so that $\pi_0(2) \approx 0.94$.
Sweeping $\Delta$ from $0.5$ to $0.01$, we measure the time for the optimal arm's logit to overtake the corner arm's.
PG escape time grows superlinearly in $1/\Delta$ (Figure~\ref{fig:escape_tabular}), consistent with the exponential lower bounds of~\citet{li2023softmax}.
DG remains near-linear: the delight gate removes the pathological dependence on problem hardness.

\paragraph{Neural corner recovery.}
We use the MNIST contextual bandit~\citep{osband2026delightfulpolicygradient}: each image is a context, the $10$ digit labels are arms, and reward is $1$ for correct classification.
The policy is an MLP with two hidden layers of width $100$ and shared parameters across all contexts.
To create a controlled corner, we pretrain with cross-entropy toward a random wrong digit for $\{1, 10, 10^2, 10^3, 10^4\}$ steps, then train for $T{=}10\text{k}$ steps with PG or DG.
DG classification error is flat across corner depth; PG degrades steadily (Figure~\ref{fig:corner_mnist}).
The mechanism transfers to shared-parameter networks, even though the tabular guarantees do not formally apply (Appendix~\ref{sec:counterexample}).
Full experimental details are in Appendix~\ref{app:experiments}.

\begin{figure}[ht!]
\centering
\begin{minipage}[t]{0.48\linewidth}
    \centering
    \includegraphics[width=\linewidth]{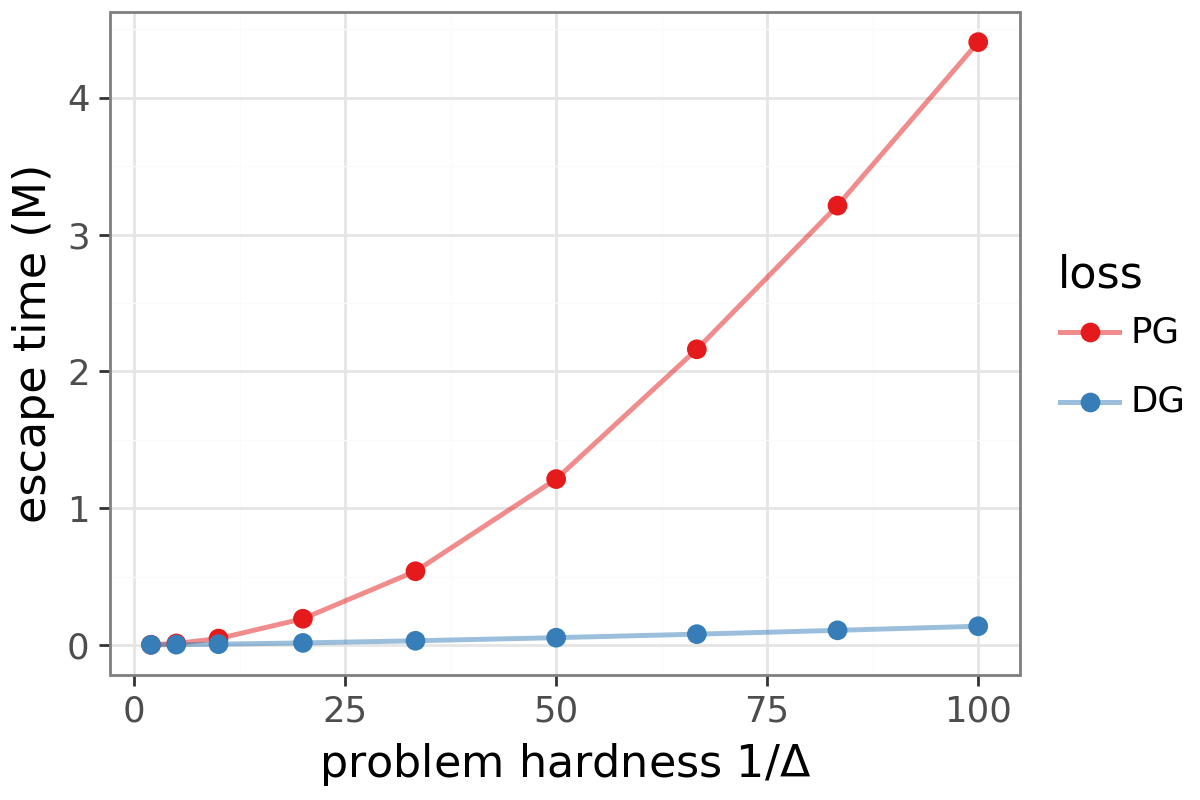}
    \captionof{figure}{Escape time vs.\ $1/\Delta$: PG grows superlinearly; DG stays near-linear.}
    \label{fig:escape_tabular}
\end{minipage}\hfill
\begin{minipage}[t]{0.48\linewidth}
    \centering
    \includegraphics[width=\linewidth]{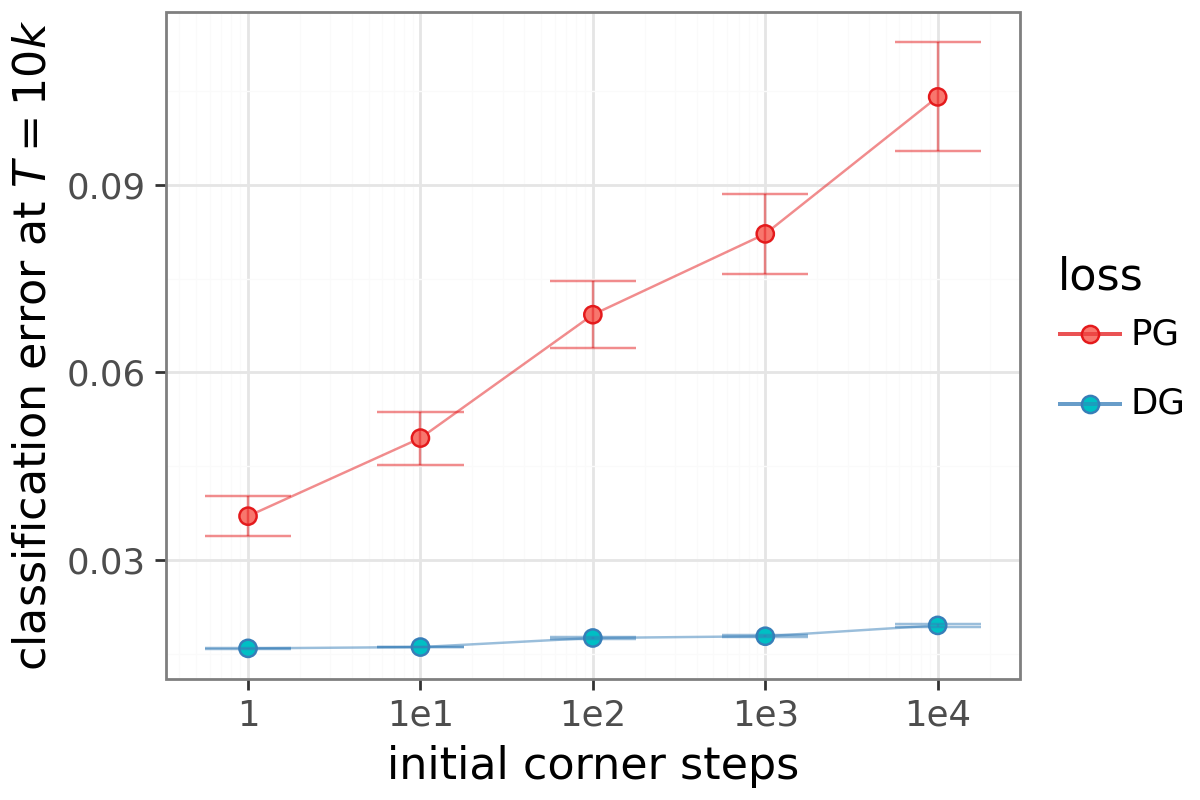}
    \captionof{figure}{MNIST error at $T{=}10\text{k}$ vs.\ corner depth. Bars: $\pm 1$ s.e.\ ($100$ runs).}
    \label{fig:corner_mnist}
\end{minipage}
\end{figure}

%%%%%%%%%%%%%%%%%%%%%%%%%%%%%%%%%%%%%%%%%%%%%%%%%%%%%%%%%%%%%%%%%%%%%%%%%%%%%%%%
%%%%%%%%%%%%%%%%%%%%%%%%%%%%%%%%%%%%%%%%%%%%%%%%%%%%%%%%%%%%%%%%%%%%%%%%%%%%%%%%
\section{Conclusion}
\label{sec:conclusion}

Softmax policy gradient self-traps near sub-optimal corners: the corner arm's logit grows faster than the optimal arm's because $\pi(j) \approx 1$ amplifies the corner arm's advantage while $\pi(1) \approx 0$ throttles the optimal arm's.
The enlightened gradient (the zero-temperature limit of DG) removes this trap.
We prove a sector bound showing the optimal arm's logit grows faster than the corner arm's near any sub-optimal corner, yielding a first-exit escape time logarithmic in the initial probability ratio.
At any finite temperature, DG retains the same local sector gap: the surprisal factor polynomially suppresses harmful arms, making their contribution negligible.
A structural insight underlies the proof: all arms better than the corner arm are allies whose contributions are non-negative, so the number of arms degrades constants only as $O(1/K)$.
Beyond the sector analysis, we establish global convergence of EG for both bandits and tabular MDPs, with an asymptotic $O(1/t)$ rate once the corner is escaped.

Under function approximation with shared parameters, the tabular mechanism need not survive: we isolate three conditions and provide an exact counterexample (Appendix~\ref{sec:counterexample}).
This should be read as a limitation of the theorem rather than a blanket negative claim about DG with shared representations.
Whether DG can overcome such coupling under structured function approximation, for instance, when the conflicting states are rarely visited, remains an open question.
The product of advantage and surprisal is necessary for this rarity-sensitive suppression; without it, the gate assigns harmful arms a constant weight and the mechanism fails.

%%%%%%%%%%%%%%%%%%%%%%%%%%%%%%%%%%%%%%%%%%%%%%%%%%%%%%%%%%%%%%%%%%%%%%%%%%%%%%%%
% BIBLIOGRAPHY
%%%%%%%%%%%%%%%%%%%%%%%%%%%%%%%%%%%%%%%%%%%%%%%%%%%%%%%%%%%%%%%%%%%%%%%%%%%%%%%%
{\small
\bibliographystyle{plainnat}
\bibliography{references}
}
\newpage

%%%%%%%%%%%%%%%%%%%%%%%%%%%%%%%%%%%%%%%%%%%%%%%%%%%%%%%%%%%%

\appendix

%%%%%%%%%%%%%%%%%%%%%%%%%%%%%%%%%%%%%%%%%%%%%%%%%%%%%%%%%%%%%%%%%%%%%%%%%%%%%%%%
%%%%%%%%%%%%%%%%%%%%%%%%%%%%%%%%%%%%%%%%%%%%%%%%%%%%%%%%%%%%%%%%%%%%%%%%%%%%%%%%
\section{Function Approximation Counterexample}
\label{sec:counterexample}

The corner escape theorems hold for tabular bandits, where each arm's logit can be adjusted independently.
Under function approximation with shared parameters, a single parameter update can improve the policy in one state while degrading it in another.
When the advantage of the same action has opposite signs in different states, the gate that protects the corner in one state can suppress the correction needed in the other.
We construct a minimal example where this coupling creates a sub-optimal interior fixed point for DG, while PG still converges correctly.

The example uses a two-state MDP with a single shared parameter controlling the probability of action $a_1$ in both states.
In state $s_1$, action $a_1$ is mildly beneficial; in state $s_2$, it is catastrophically harmful.

\begin{example}[Shared-parameter MDP]
\label{ex:counterexample}
State $s_0$ transitions to $s_1$ or $s_2$ with probability $1/2$ each, then terminates.
Rewards: $r(s_1, a_1) = +10$, $r(s_1, a_0) = 0$, $r(s_2, a_1) = -100$, $r(s_2, a_0) = 0$.
Shared parameter: $\pi_\theta(a_1|s) = \sigma(\theta)$ for both states.
\end{example}

This counterexample should be read as a limitation result, not as a claim that DG is ineffective with shared function approximation in general.
Its role is to show that once parameters are shared across states, the tabular corner-escape mechanism is no longer guaranteed.
This is analogous to classical counterexamples in temporal-difference learning with function approximation, which identify genuine failure modes without implying that function approximation should be avoided in practice~\citep{tsitsiklis1997analysis,baird1995residual}.

The best-in-class policy sets $p = \sigma(\theta)$ to minimize overall loss.
For this example, the shared-parameter drifts are explicit:
\[
F_{\mathrm{PG}}(\theta) = -45p(1-p),
\qquad
F_{\mathrm{EG}}(\theta) = 5p(1-p)(1-11p).
\]
PG correctly drives $p \to 0$ because $F_{\mathrm{PG}}(\theta) < 0$ for all $p \in (0,1)$, driven by the large negative reward in $s_2$.
Under EG, $F_{\mathrm{EG}}$ captures only the positive-advantage terms, yielding a stable root at $p^* = 1/11$.
Similarly under DG, the gate suppresses the $s_2$ correction, it has negative advantage and high surprisal, allowing the small positive advantage from $s_1$ to create an interior fixed point.

\begin{theorem}[A shared-parameter counterexample for DG]
\label{thm:counterexample}
In Example~\ref{ex:counterexample}, write $p := \sigma(\theta)$. \textbf{1) PG converges to best-in-class.} \textbf{2) EG has a stable interior fixed point at $p^* = 1/11$.} \textbf{3) DG ($\eta = 1$) admits an interior fixed point empirically.}
Numerically, the DG fixed point occurs near $p^* \approx 0.116$.
\end{theorem}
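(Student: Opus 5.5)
The plan is to compute the three shared-parameter drifts in closed form and then read off the claims using one-dimensional dynamics. Since the parameter is a scalar, $\dot\theta = F(\theta)$ with $p=\sigma(\theta)$ strictly increasing in $\theta$. So fixed points are zeros of $F$ in $p\in(0,1)$, and stability is decided by the sign change of $F$ across the zero.

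First I compute advantages state by state. In $s_1$: $V=10p$, so $U(a_1)=10(1-p)$ and $U(a_0)=-10p$. In $s_2$: $V=-100p$, so $U(a_1)=-100(1-p)$ and $U(a_0)=100p$. The score functions are $\partial_\theta\log\pi(a_1)=1-p$ and $\partial_\theta\log\pi(a_0)=-p$. Each of $s_1,s_2$ is visited with weight $1/2$. The four per-term contributions $\pi(a)\,\partial_\theta\log\pi(a)\,U(a)$ are:
\begin{itemize}
\item $10p(1-p)^2$ for $(s_1,a_1)$;
\item $10p^2(1-p)$ for $(s_1,a_0)$;
\item $-100p(1-p)^2$ for $(s_2,a_1)$;
\item $-100p^2(1-p)$ for $(s_2,a_0)$.
\end{itemize}
Hence
\[
F_{\mathrm{DG}}=\tfrac12 p(1-p)\,G(p),\qquad G(p)=10(1-p)w_{11}+10p\,w_{10}-100(1-p)w_{21}-100p\,w_{20},
\]
where $w_{sa}$ is the gate on $(s,a)$.

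\textbf{Part 1 (PG).} Setting all $w=1$ gives $F_{\mathrm{PG}}=-45p(1-p)<0$ on $(0,1)$. So $\theta_t$ is strictly decreasing. It cannot converge to a finite $\theta$, because $F$ is bounded away from $0$ on compacts of $p\in(0,1)$. Hence $\theta_t\to-\infty$ and $p_t\to0$. Since $V(p)=\tfrac12(10p)+\tfrac12(-100p)=-45p$, the value is maximized over $p\in[0,1]$ at $p=0$, so this limit is the best-in-class policy.

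\textbf{Part 2 (EG).} Keep only the two positive-advantage terms, $(s_1,a_1)$ and $(s_2,a_0)$. This gives $F_{\mathrm{EG}}=5p(1-p)^2-50p^2(1-p)=5p(1-p)(1-11p)$. Its unique interior zero is $p^*=1/11$. $F_{\mathrm{EG}}>0$ for $p<1/11$ and $F_{\mathrm{EG}}<0$ for $p>1/11$, so $p^*$ is asymptotically stable and in fact attracts every interior initialization. Equivalently, $F'_{\mathrm{EG}}(\theta^*)=-55\,p^*(1-p^*)^2<0$.

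\textbf{Part 3 (DG, $\eta=1$).} I would strengthen ``empirically'' to an existence proof via the intermediate value theorem. The gates are
\[
w_{11}=\sigma(-10(1-p)\log p),\quad w_{10}=\sigma(10p\log(1-p)),\quad w_{21}=\sigma(100(1-p)\log p),\quad w_{20}=\sigma(-100p\log(1-p)).
\]
\begin{itemize}
\item As $p\to0^+$: $w_{11}\to1$, $w_{21}\le p^{100(1-p)}\to0$ (Lemma~\ref{lem:poly_suppress}), and $100p\,w_{20}\to0$. Hence $G\to10>0$.
\item At $p=1/2$: $w_{20}=\sigma(50\ln2)\approx1$ and $w_{11}\le1$. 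Hence $G(1/2)\le 5+5-50<0$.
\end{itemize}
$G$ is continuous on $(0,1/2]$, so it has a zero $p^*\in(0,1/2)$. Taking the smallest such zero, $G>0$ immediately to its left. This gives at least one-sided attraction from below, and full stability wherever $G$ crosses transversally.

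The remaining point is locating $p^*\approx0.116$ and checking that the crossing there is transversal. This is the main obstacle, because the sigmoid-of-log gates resist a closed form. I would handle it by bisection on $G$, with interval bounds on each gate to certify the sign of $G$ at $0.115$ and $0.117$. Alternatively, I would show $G$ is decreasing on $[0.1,0.13]$ by bounding $G'$ termwise.
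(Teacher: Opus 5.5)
Your proposal is correct and follows essentially the same route as the paper. Both compute the drifts explicitly, use the sign of $F_{\mathrm{PG}}=-45p(1-p)$, factor $F_{\mathrm{EG}}=5p(1-p)(1-11p)$ with its sign change at $p^*=1/11$, and finish DG with an intermediate-value argument. Your version of that last step is a little cleaner than the paper's: the limit $G\to 10$ as $p\to 0^+$ replaces the paper's numerical evaluation at $p=0.05$. Your observation that $w_{20}=\sigma(50\ln 2)\approx 1$ at $p=1/2$ is also more accurate than the paper's remark that the gates there are near $1/2$.

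One harmless slip: at $p^*$, $dF_{\mathrm{EG}}/d\theta$ equals $-55\,(p^*)^2(1-p^*)^2$, not $-55\,p^*(1-p^*)^2$. This changes nothing, since only the sign matters.
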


The failure requires three conditions acting together.
First, shared parameters couple the policies across states, so a single update affects both.
Second, the same action has conflicting advantages: positive in $s_1$, negative in $s_2$.
Third, the gate suppresses the larger-magnitude correction because it has negative delight.
In tabular settings, the first condition never holds: each state's logits are independent, and the corner escape theorem applies state by state.

The point of the example is therefore not that shared parameters make DG unusable.
Rather, it shows that the tabular mechanism proved in Sections~\ref{sec:algorithms}--\ref{sec:corner_escape} does not extend automatically once states are coupled through a shared representation.
In many neural-network settings DG may still outperform PG in practice; the counterexample only shows that such behavior cannot be taken as a theorem of the present form.

%%%%%%%%%%%%%%%%%%%%%%%%%%%%%%%%%%%%%%%%%%%%%%%%%%%%%%%%%%%%

%%%%%%%%%%%%%%%%%%%%%%%%%%%%%%%%%%%%%%%%%%%%%%%%%%%%%%%%%%%%%%%%%%%%%%%%%%%%%%%%
%%%%%%%%%%%%%%%%%%%%%%%%%%%%%%%%%%%%%%%%%%%%%%%%%%%%%%%%%%%%%%%%%%%%%%%%%%%%%%%%
\section{Experimental Details}
\label{app:experiments}

We provide implementation details and reproducibility information for the experiments in Section~\ref{sec:experiments}.
All experiments use JAX~\citep{jax2018github} and run on a single CPU; each completes in minutes.

\paragraph{Tabular escape-time scaling.}

The tabular experiment integrates the continuous-time gradient flow ODE for a $K{=}3$ armed bandit using forward Euler with step size $\mathrm{dt} = 1.0$.
The rewards are $r = (1,\; 1{-}\Delta,\; 0)$ and the initial logits are $\theta_0 = (-1, 5, 1)$, giving $\pi_0 \approx (0.01, 0.94, 0.05)$.
We sweep $\Delta \in \{0.5, 0.2, 0.1, 0.05, 0.03, 0.02, 0.015, 0.012, 0.01\}$ and integrate for up to $10^7$ time units.
Escape is declared when $\theta(1) \ge \theta(2)$, i.e., the optimal arm's logit overtakes the corner arm's.

The logit-space drift for each method is:
\begin{align*}
\dot{\theta}(a) &= \pi(a)\bigl[g(a) \cdot U(a) - \textstyle\sum_b \pi(b)\, g(b)\, U(b)\bigr],
\end{align*}
where $U(a) = r(a) - \pi^\top r$ is the advantage and $g(a)$ is the gate:
\begin{itemize}[nosep,leftmargin=*]
\item \textbf{PG}: $g(a) = 1$ for all $a$.
\item \textbf{EG}: $g(a) = \mathbf{1}\{U(a) > 0\}$.
\item \textbf{DG} ($\eta$): $g(a) = \sigma\!\bigl(U(a)\cdot\ell(a)\,/\,\eta\bigr)$, where $\ell(a) = -\log\pi(a)$.
\end{itemize}
DG uses $\eta = 1$ throughout.
The complete experiment code is shown below.

\begin{lstlisting}[language=Python, basicstyle=\ttfamily\scriptsize, breaklines=true, frame=single, caption={Complete tabular escape-time experiment (JAX).}, label={lst:tabular}]
import functools
import jax
import jax.numpy as jnp
import numpy as np
import pandas as pd

def bandit_drift(theta, rewards, method='pg', eta=1.0):
  pi = jax.nn.softmax(theta)
  adv = rewards - jnp.dot(pi, rewards)
  if method == 'pg':
    gate = jnp.ones_like(pi)
  elif method == 'eg':
    gate = (adv > 0).astype(jnp.float32)
  elif method == 'dg':
    surp = -jnp.log(jnp.maximum(pi, 1e-30))
    gate = jax.nn.sigmoid(adv * surp / eta)
  signal = gate * adv
  return pi * signal - pi * jnp.sum(signal * pi)

@functools.partial(jax.jit, static_argnames=['method', 'max_steps'])
def simulate_escape(theta0, rewards, method, eta, dt, max_steps):
  def step_fn(carry, _):
    theta, escaped, esc_step, idx = carry
    drift = bandit_drift(theta, rewards, method=method, eta=eta)
    new_theta = jnp.where(escaped, theta, theta + dt * drift)
    now_esc = new_theta[0] >= new_theta[1]
    new_step = jnp.where(escaped | ~now_esc, esc_step, idx)
    return (new_theta, escaped | now_esc, new_step, idx + 1), None
  init = (theta0, jnp.bool_(False), jnp.int32(max_steps), jnp.int32(0))
  (_, escaped, esc_step, _), _ = jax.lax.scan(step_fn, init, None, length=max_steps)
  return esc_step.astype(jnp.float32) * dt, escaped

def run_gap_sweep(init_logits=np.array([-1., 5., 1.]),
                  gaps=np.array([.5,.2,.1,.05,.03,.02,.015,.012,.01]),
                  dt=1.0, max_time=10_000_000, eta=1.0):
  max_steps = int(max_time / dt)
  theta0 = jnp.tile(jnp.array(init_logits), (len(gaps), 1))
  rewards = jnp.array([[1., 1. - g, 0.] for g in gaps])
  sim = jax.vmap(simulate_escape, in_axes=(0,0,None,None,None,None))
  records = []
  for method in ['pg', 'dg']:
    times, escaped = sim(theta0, rewards, method, eta, dt, max_steps)
    for i, g in enumerate(gaps):
      records.append(dict(method=method, gap=float(g), inv_gap=1./g,
        escape_time=float(times[i]) if escaped[i] else np.nan,
        escaped=bool(escaped[i])))
  return pd.DataFrame(records)
\end{lstlisting}

\paragraph{Neural corner recovery.}

The neural experiment uses the MNIST contextual bandit framework.
Each of the $60{,}000$ training images is a decision context, the $10$ digit classes are the arms ($K{=}10$), and reward is $1$ for a correct prediction and $0$ otherwise.
The policy network is a two-hidden-layer MLP with widths $[100, 100]$, ReLU activations, and a softmax output layer.
All parameters are shared across all contexts.
We use Adam~\citep{kingma2014adam} with learning rate $10^{-3}$ and batch size $100$.

\paragraph{Corner pretraining.}
To create a controlled bad initialization, we pretrain the network using cross-entropy loss toward a fixed \emph{wrong} target digit $d_{\text{corner}} \in \{0, \ldots, 9\}$ for a specified number of corner steps $\in \{1, 10, 10^2, 10^3, 10^4\}$.
During corner pretraining, every image is assigned the label $d_{\text{corner}}$ regardless of its true class, and the network is trained to minimize the cross-entropy against these corrupted labels.
This drives the softmax policy deep into a corner where the network confidently predicts the wrong digit.

\paragraph{Recovery training.}
After corner pretraining, we train for $T = 10{,}000$ steps using either PG (REINFORCE) or DG ($\eta = 1$) with the expected-confidence baseline.
Both methods receive identical reward signals (1 for correct, 0 for incorrect).
We evaluate classification error on the full $10{,}000$-image test set every $100$ steps, and report the final error at step $T$.

\paragraph{Averaging.}
Each configuration (loss type $\times$ corner steps) is run for $10$ random seeds $\times$ $10$ corner digits, giving $100$ independent runs.
We report the mean and $\pm 1$ standard error of classification error at step $T$.

%%%%%%%%%%%%%%%%%%%%%%%%%%%%%%%%%%%%%%%%%%%%%%%%%%%%%%%%%%%%%%%%%%%%%%%%%%%%%%%%
\section{Proofs}
\label{app:proofs}

This appendix provides complete proofs for the theoretical results in the main text.

%%%%%%%%%%%%%%%%%%%%%%%%%%%%%%%%%%%%%%%%%%%%%%%%%%%%%%%%%%%%%%%%%%%%%%%%%%%%%%%%
\subsection{Proof of Theorem~\ref{thm:corner_escape} (EG sector bound)}

The proof applies the logit-gap identity and bounds three term groups: arm~1's direct contribution, the allies, and the arm-$j$ residual.

\begin{proof}
After shrinking $\varepsilon_0$ if needed, we may assume
$U(i) \ge \Delta_{ij}/2$ for all $i \in S^+$,
$|U(j)| \le C_U\varepsilon$ for some constant $C_U > 0$,
and $U(i) < 0$ for all $i \in S^-$.
Under EG, all $S^-$ arms are gated out.

Apply the logit-gap identity (Lemma~\ref{lem:logit_gap}) with $a = 1$, $b = j$, and $w_i = \Ind\{U(i) > 0\}$.
Since all $S^-$ arms are gated out, the sum runs over $i \in S^+ \cup (\{j\} \cap \{U(j) > 0\})$.
Writing $S = \sum_{i:\,U(i) > 0} p_i\,U(i)$, the identity gives:
\begin{align*}
\dot\theta(1) - \dot\theta(j)
&= p_1 U(1) - \Ind\{U(j) > 0\}\,p_j U(j) + (p_j - p_1)\,S.
\end{align*}
Since $S = p_1 U(1) + \Ind\{U(j) > 0\}\,p_j U(j) + \sum_{i \in S^+\setminus\{1\}} p_i U(i)$,
we rearrange:
\begin{equation}
\label{eq:three_term}
\dot\theta(1) - \dot\theta(j)
= \underbrace{p_1\,U(1)\,(1 + p_j - p_1)}_{\text{arm-$1$ contribution}}
  \;+\; \underbrace{\sum_{i \in S^+\setminus\{1\}} p_i\,U(i)\,(p_j - p_1)}_{\text{ally contributions $\ge 0$}}
  \;-\; \underbrace{\Ind\{U(j)>0\}\,p_j\,U(j)\,(1 + p_1 - p_j)}_{\text{arm-$j$ residual}}.
\end{equation}

\textbf{Term 1: arm-1 contribution.}
Since $p_1 \ge \varepsilon/K$, $U(1) \ge \Delta_{1j}/2$, and $1 + p_j - p_1 = 2 - \varepsilon - p_1 \ge 1$ (for $\varepsilon \le 1/2$):
\[
p_1\,U(1)\,(1 + p_j - p_1) \ge \frac{\varepsilon}{K} \cdot \frac{\Delta_{1j}}{2} \cdot 1 = \frac{\Delta_{1j}}{2K}\,\varepsilon.
\]

\textbf{Term 2: ally contributions.}
For each $i \in S^+ \setminus \{1\}$: $p_i > 0$, $U(i) \ge \Delta_{ij}/2 > 0$, and $p_j - p_1 = (1 - \varepsilon) - p_1 \ge 0$.
Therefore each ally term $p_i\,U(i)\,(p_j - p_1) \ge 0$.
We drop these non-negative terms for the lower bound.

\textbf{Term 3: arm-$j$ residual.}
When $U(j) > 0$: $p_j U(j) \le C_U\varepsilon$ and $1 + p_1 - p_j = \varepsilon + p_1 \le 2\varepsilon$, so
\[
\bigl|\Ind\{U(j) > 0\}\,p_j\,U(j)\,(1 + p_1 - p_j)\bigr| \le 2C_U\varepsilon^2.
\]

\textbf{Combine.}
\[
\dot\theta(1) - \dot\theta(j)
\ge \frac{\Delta_{1j}}{2K}\,\varepsilon - 2C_U\varepsilon^2.
\]
Choosing $\varepsilon_0 \le \Delta_{1j}/(8KC_U)$ yields
\[
\dot\theta(1) - \dot\theta(j)
\ge \frac{\Delta_{1j}}{4K}\,\varepsilon. \qedhere
\]
\end{proof}

%%%%%%%%%%%%%%%%%%%%%%%%%%%%%%%%%%%%%%%%%%%%%%%%%%%%%%%%%%%%%%%%%%%%%%%%%%%%%%%%
\subsection{Proof of Lemma~\ref{lem:poly_suppress} (Polynomial suppression)}

The bound follows from the monotonicity of the sigmoid and the structure of the delight product.

\begin{proof}
Since $U(k) < 0$ and $\surp(k) = -\log\pi(k) > 0$:
\[
w(k) = \sigma\!\bigl(U(k)\surp(k)/\eta\bigr) \le \exp\!\bigl(U(k)\surp(k)/\eta\bigr) = \exp\!\bigl(-|U(k)|\log(1/\pi(k))/\eta\bigr) = \pi(k)^{|U(k)|/\eta}.
\]
Near the corner, $|U(k)| \ge \Delta_{jk}/2$ for $k \in S^-$, so $w(k)\,\pi(k) \le \pi(k)^{1 + \Delta_{jk}/(2\eta)}$.
\end{proof}

%%%%%%%%%%%%%%%%%%%%%%%%%%%%%%%%%%%%%%%%%%%%%%%%%%%%%%%%%%%%%%%%%%%%%%%%%%%%%%%%
\subsection{Proof of Theorem~\ref{thm:sigmoid_escape} (DG sector bound)}

The proof mirrors Theorem~\ref{thm:corner_escape}, but we write the logit gap as the sum of four groups: the arm-$1$ term, ally terms from $S^+\setminus\{1\}$, the arm-$j$ residual, and the suppressed harmful terms from $S^-$.

\begin{proof}
We use the same corner parameterization as Theorem~\ref{thm:corner_escape}.
Applying Lemma~\ref{lem:logit_gap} with $a=1$ and $b=j$, we decompose the logit gap into the arm-$1$ term, the ally terms from $S^+\setminus\{1\}$, the arm-$j$ residual, and the harmful terms from $S^-$.

\textbf{$S^+$ arms (beneficial).}
For each $i \in S^+$: $U(i) \ge \Delta_{ij}/2 > 0$ and $\surp(i) = -\log p_i \to \infty$ as $\varepsilon \to 0$ (since $p_i \le \varepsilon$).
Therefore $U(i)\surp(i)/\eta \to +\infty$ and $w_i \to 1$.
After shrinking the neighborhood: $w_i \ge 1/2$ for all $i \in S^+$.

Arm~1 contributes $w_1 p_1 U(1)(1 + p_j - p_1)$ to the logit gap.
With $w_1 \ge 1/2$, $p_1 \ge \varepsilon/K$, $U(1) \ge \Delta_{1j}/2$, and $1 + p_j - p_1 \ge 1$:
\[
w_1 p_1 U(1)(1 + p_j - p_1)
\ge \frac{1}{2}\cdot\frac{\varepsilon}{K}\cdot\frac{\Delta_{1j}}{2}\cdot 1
= \frac{\Delta_{1j}}{4K}\,\varepsilon.
\]
Each $i \in S^+ \setminus\{1\}$ contributes $w_i p_i U(i)(p_j - p_1) \ge 0$ (same ally structure, since $w_i \ge 0$).

\textbf{Arm $j$ (residual).}
Same bound as in Theorem~\ref{thm:corner_escape}: $|T_j| \le 2C_U\varepsilon^2$.

\textbf{$S^-$ arms (suppressed).}
By Lemma~\ref{lem:poly_suppress}, each $k \in S^-$ satisfies $w_k p_k \le p_k^{1+\Delta_{jk}/(2\eta)} \le \varepsilon^{1+\Delta_{jk}/(2\eta)}$.
Since $|U(k)| \le R := r(1) - r(K)$ and $|p_j - p_1| \le 1$:
\[
\bigl|w_k p_k U(k)(p_j - p_1)\bigr|
\le R\,\varepsilon^{1+\Delta_{jk}/(2\eta)}.
\]
Summing over $S^-$ and writing $\delta_\eta := \min_{k \in S^-} \Delta_{jk}/(2\eta) > 0$:
\[
\sum_{k \in S^-}\bigl|w_k p_k U(k)(p_j - p_1)\bigr|
\le R|S^-|\,\varepsilon^{1+\delta_\eta} = o(\varepsilon).
\]

\textbf{Combine.}
\[
\dot\theta(1) - \dot\theta(j)
\ge \frac{\Delta_{1j}}{4K}\,\varepsilon - 2C_U\varepsilon^2 - R|S^-|\varepsilon^{1+\delta_\eta}.
\]
Both error terms are $o(\varepsilon)$.
Choosing $\varepsilon_0(\eta)$ so that $2C_U\varepsilon + R|S^-|\varepsilon^{\delta_\eta} \le \Delta_{1j}/(8K)$:
\[
\dot\theta(1) - \dot\theta(j) \ge \frac{\Delta_{1j}}{8K}\,\varepsilon. \qedhere
\]
\end{proof}

%%%%%%%%%%%%%%%%%%%%%%%%%%%%%%%%%%%%%%%%%%%%%%%%%%%%%%%%%%%%%%%%%%%%%%%%%%%%%%%%
\subsection{Proof of Lemma~\ref{lem:sector_mono} (Sector monotonicity)}

The lemma combines the sector bound with a direct calculation showing that all non-corner arms gain on the corner arm.

\begin{proof}
\textbf{Part (a): ratio growth.}
Immediate from Theorem~\ref{thm:corner_escape}:
$\frac{d}{dt}\log(\pi(1)/\pi(j)) = \dot\theta(1) - \dot\theta(j) \ge \Delta_{1j}\varepsilon/(4K)$.

\textbf{Part (b): $\dot\varepsilon > 0$.}
Since $\varepsilon = 1 - p_j$,
\[
\dot\varepsilon = -\dot p_j = p_j \sum_{a \neq j} p_a \bigl(\dot\theta(a) - \dot\theta(j)\bigr).
\]
It therefore suffices to show $\dot\theta(a) - \dot\theta(j) > 0$ for every $a \neq j$.
Under EG, for any $a \neq j$, the logit-gap identity gives
\[
\dot\theta(a) - \dot\theta(j)
= \Ind\{a \in S^+\}\,p_a U(a)
- \Ind\{U(j) > 0\}\,p_j U(j)
+ (p_j - p_a)\,S,
\]
where $S := \sum_{i:\,U(i)>0} p_i U(i) \ge p_1 U(1) \ge \frac{\varepsilon}{K} \cdot \frac{\Delta_{1j}}{2}$,
$p_j - p_a \ge 1 - 2\varepsilon$, and $p_j U(j) = O(\varepsilon)$.

If $a \in S^-$, then
\[
\dot\theta(a) - \dot\theta(j)
= (p_j - p_a)S - \Ind\{U(j) > 0\}\,p_j U(j)\,(1 - p_j + p_a)
\ge c\varepsilon - O(\varepsilon^2) > 0.
\]

If $a = 1$, the claim is exactly Theorem~\ref{thm:corner_escape}.

If $a \in S^+ \setminus \{1\}$, then
\[
\dot\theta(a) - \dot\theta(j)
= p_a U(a)(1 + p_j - p_a)
+ (p_j - p_a)\!\!\sum_{i \in S^+ \setminus \{a\}}\!\! p_i U(i)
- \Ind\{U(j) > 0\}\,p_j U(j)\,(1 - p_j + p_a),
\]
and the second term contains the arm-$1$ contribution, so again
$\dot\theta(a) - \dot\theta(j) \ge c\varepsilon - O(\varepsilon^2) > 0$.

Thus $\dot\varepsilon > 0$ for all sufficiently small $\varepsilon$.
\end{proof}

%%%%%%%%%%%%%%%%%%%%%%%%%%%%%%%%%%%%%%%%%%%%%%%%%%%%%%%%%%%%%%%%%%%%%%%%%%%%%%%%
\subsection{Proof of Theorem~\ref{thm:escape_time} (First-exit escape time)}

The escape time follows by integrating the differential inequality from Lemma~\ref{lem:sector_mono}.

\begin{proof}
Set $\rho(t) := \log(\pi_t(1)/\pi_t(j))$.
For $t < \tau_{\Wc} \wedge \tau_{\mathrm{exit}}$, the trajectory remains in $\Sc_{\bar\varepsilon}$ by definition of $\tau_{\mathrm{exit}}$.
By part~(a) and $\varepsilon(t) \ge \varepsilon_0$ (from part~(b)):
\[
\dot\rho(t) \ge \frac{\Delta_{1j}}{4K}\,\varepsilon(t) \ge \frac{\Delta_{1j}}{4K}\,\varepsilon_0.
\]
Integrating from $0$ to $\tau := \tau_{\Wc} \wedge \tau_{\mathrm{exit}}$:
\[
\rho(\tau) - \rho(0) \ge \frac{\Delta_{1j}}{4K}\,\varepsilon_0 \cdot \tau.
\]
The target threshold is $\rho = 0$ (i.e., $\pi(1) = \pi(j)$), giving
\[
\tau
\le
\frac{4K}{\Delta_{1j}\,\varepsilon_0}
\bigl(0 - \rho(0)\bigr)
=
\frac{4K}{\Delta_{1j}\,\varepsilon_0}
\log\!\left(
\frac{\pi_0(j)}{\pi_0(1)}
\right). \qedhere
\]
\end{proof}

%%%%%%%%%%%%%%%%%%%%%%%%%%%%%%%%%%%%%%%%%%%%%%%%%%%%%%%%%%%%%%%%%%%%%%%%%%%%%%%%
\subsection{Proof of Theorem~\ref{thm:counterexample} (A shared-parameter counterexample)}

The proof computes the gradient flow explicitly for each method.

\begin{proof}
Write $p := \sigma(\theta)$ for the shared action probability.
The per-state values are $V(s_1) = 10p$ and $V(s_2) = -100p$.
The advantages are:
\begin{align*}
U(s_1,a_1) &= 10(1{-}p), & U(s_1,a_0) &= -10p, \\
U(s_2,a_1) &= -100(1{-}p), & U(s_2,a_0) &= 100p.
\end{align*}
The score derivatives are $\partial_\theta \log\pi(a_1) = 1-p$ and $\partial_\theta\log\pi(a_0) = -p$.
The total gradient is $F(\theta) = \frac{1}{2}(G_{s_1} + G_{s_2})$.

\textbf{Part 1: PG.}
$G_s = \sum_a \pi(a) U(s,a) \partial_\theta\log\pi(a)$:
\begin{align*}
G_{s_1} &= p \cdot 10(1{-}p) \cdot (1{-}p) + (1{-}p) \cdot (-10p) \cdot (-p) = 10p(1{-}p), \\
G_{s_2} &= p \cdot (-100)(1{-}p) \cdot (1{-}p) + (1{-}p) \cdot 100p \cdot (-p) = -100p(1{-}p).
\end{align*}
$F_{\mathrm{PG}} = \frac{1}{2}(10 - 100)p(1{-}p) = -45p(1{-}p) < 0$ for all $p \in (0,1)$.
Therefore $\theta \to -\infty$ and $p \to 0$.

\textbf{Part 2: EG.}
The gate keeps $a_1$ at $s_1$ (positive advantage) and $a_0$ at $s_2$ (positive advantage):
\begin{align*}
G_{s_1}^{\mathrm{EG}} &= p \cdot 10(1{-}p)^2, \\
G_{s_2}^{\mathrm{EG}} &= (1{-}p)(100p)(-p) = -100p^2(1{-}p).
\end{align*}
$F_{\mathrm{EG}} = 5p(1{-}p)(1 - 11p)$.
Root at $p^* = 1/11$.
Differentiating:
\[
F'(p) = 5\bigl[(1{-}p)(1{-}11p) + p(-1)(1{-}11p) + p(1{-}p)(-11)\bigr]
= 5\bigl[(1{-}11p)(1{-}2p) - 11p(1{-}p)\bigr].
\]
At $p = 1/11$: $1 - 11p = 0$, so $F'(1/11) = 5\bigl[0 - 11 \cdot \tfrac{1}{11} \cdot \tfrac{10}{11}\bigr] = -50/11 < 0$.
The root is stable.

\textbf{Part 3: DG ($\eta = 1$).}
$w(s,a) = \sigma(U(s,a) \cdot(-\log\pi(a)))$ and $F_{\mathrm{DG}}$ is continuous on $(0,1)$.
At $p = 0.05$: $U(s_1,a_1) = 9.5$, $\surp(a_1) = -\log(0.05) \approx 3.0$, so $w \approx 1$; $U(s_2,a_1) = -95$, $\surp(a_1) \approx 3.0$, so $w \approx 0$.
The $s_1$ term dominates: $F_{\mathrm{DG}}(0.05) \approx +0.19 > 0$.
At $p = 0.5$: both gates are near $1/2$; the $|U(s_2)|$ magnitude dominates, so $F_{\mathrm{DG}}(0.5) < 0$.
By the intermediate value theorem, a root exists in $(0.05, 0.5)$.
Numerical computation gives $p^* \approx 0.116$.
\end{proof}

\subsection{Proof of Lemma 4 (General sub-optimal corner dominance)}
\label{app:proof_lemma_4}

Following the gradient field analysis of the Enlightened Policy Gradient (EG), we generalize the dominance of the optimal action $a^*=1$ near sub-optimal corners for the $K$-armed bandit case. We analyze the logit update difference $\Delta \theta(1) - \Delta \theta(j)$ near the corner vertex $e_j$.

\begin{proof}
Let $a^*=1$ be the optimal action and $j \in \{2, \dots, K\}$ be a sub-optimal action. We define the reward gaps as $\Delta_{ij} := r_i - r_j$. The advantages for the optimal arm and the corner arm are expressed in terms of these gaps by substituting $\pi_j = 1 - \pi_1 - \sum_{k \neq 1, j} \pi_k$:
\begin{align}
    U_1 &= (1-\pi_1)\Delta_{1j} - \sum_{k \neq 1, j} \pi_k \Delta_{kj}, \\
    U_j &= -\pi_1 \Delta_{1j} - \sum_{k \neq 1, j} \pi_k \Delta_{kj}.
\end{align}
Using the EG logit update rules, we have,
\begin{align}
    \Delta \theta(1) - \Delta \theta(j) &= \pi_1 [U_1(1-\pi_1) - \pi_j U_j] - \pi_j [U_j(1-\pi_j) - \pi_1 U_1] \\
    &= \pi_1 U_1 (1 - \pi_1 + \pi_j) - \pi_j U_j (1 - \pi_j + \pi_1) \\
    &= \pi_1 U_1 A - \pi_j U_j B,
\end{align}
where $A = 1 + \pi_j - \pi_1$ and $B = 1 + \pi_1 - \pi_j$. Substituting the expressions for $U_1$ and $U_j$,
\begin{equation}
    \Delta \theta(1) - \Delta \theta(j) = \Delta_{1j} C_{1j} + \sum_{k \neq 1, j} \Delta_{jk} C_{jk},
\end{equation}
where $C_{1j}$ and $C_{jk}$ are defined as,
\begin{align}
    C_{1j} &= \pi_1 [ (1-\pi_1)A + \pi_j B ] = \pi_1 [ (1-\pi_1)^2 + \pi_j(2-\pi_j) ], \\
    C_{jk} &= \pi_k ( \pi_1 A - \pi_j B  ) = \pi_k (\pi_1-\pi_j) \Big( \sum_{m \neq 1, j} \pi_m \Big).
\end{align}
Note that $C_{1j} > 0$, and near the corner $j$ where $\pi_j > \pi_1$, the coefficients $C_{jk} < 0$ for all $k$. Consider a policy near corner $j$ such that $\pi_j = 1 - \epsilon$. In this regime, the probabilities $\pi_1$ and $\pi_k$ are of order $O(\epsilon)$, and the coefficients simplify to (where $c_1, c_2 >0$ and $c_1, c_2  \in O(1)$),
\begin{align}
    C_{1j} &= c_1 \cdot \pi_1 \in O(\epsilon), \\
    C_{jk} &= - c_2 \cdot \pi_k \cdot \Big( \sum_{m \neq 1, j} \pi_m \Big) \in -O(\epsilon^2).
\end{align}
Therefore, near the corner $j$, we have,
\begin{align}
    \Delta \theta(1) - \Delta \theta(j) &=  c_1 \cdot \pi_1 \cdot  \Delta_{1j} - \sum_{k \neq 1, j} c_2 \cdot \pi_k \cdot \Big( \sum_{m \neq 1, j} \pi_m \Big) \cdot \Delta_{jk} \\
    &= O(\epsilon) -O(\epsilon^2) > 0.
\end{align}
While there exists a ``bad region'' where $\pi_1$ is smaller than $\pi_k \cdot \Big( \sum_{m \neq 1, j} \pi_m \Big)$, this region is approaching the boundary $\pi_1 = 0$ (as $\pi$ approaches the corner) where the mass on the optimal arm is of a lower order than the squared probabilities of the worse arms (Figure~\ref{fig:adversarial_initialization}). Since this boundary has measure zero relative to the simplex, and the gradient field $\Delta \theta_1$ is strictly positive for all $\pi_1 \in (0, 1)$, EG escapes sub-optimal corners surely. This stands in stark contrast to PG, where the bad region persists even for policies with significant mass on the optimal arm.

Consequently, near any sub-optimal corner, for almost all points in the interior, $\Delta \theta(1) > \Delta \theta(j)$ holds, ensuring that the optimal action's parameter increases faster than the current sub-optimal favorite, allowing EG to escape sub-optimal corners effectively.
\end{proof}

\subsection{Proof of Lemma \ref{lem:eg_bandit_discrete_monotonicity} (Monotonic improvement)}
\label{app:proof_monotonicity}

\begin{proof}[Proof of Lemma \ref{lem:eg_bandit_discrete_monotonicity}]
We show that the discrete update for the Enlightened Policy Gradient (EG) is a monotonic ascent step by expressing the value difference as a covariance. The expected reward difference is,
\begin{align}
    V^\prime - V &= \sum_{i=1}^{K} (\pi^\prime_i - \pi_i) r_i = \sum_{i=1}^{K} (\pi^\prime_i - \pi_i) U_i \\
    &= \sum_{i=1}^{K} \left[ \frac{\pi_i e^{\eta f_i}}{\sum_{j=1}^{K} \pi_j e^{\eta f_j}} \right] U_i - \sum_{i=1}^{K} \pi_i U_i,
\end{align}
where $f_i = U_i \cdot \mathbb{I}\{U_i > 0\}$ is the gated advantage. By the definition of the advantage function in a bandit setting, the expected advantage under the current policy is zero, $\mathbb{E}_{\pi}[U] = \sum_i \pi_i U_i = 0$.

The covariance between two random variables $X$ and $Y$ under distribution $\pi$ is $\text{Cov}_{\pi}(X, Y) = \mathbb{E}_{\pi}[XY] - \mathbb{E}_{\pi}[X]\mathbb{E}_{\pi}[Y]$. Setting $X = U$ and $Y = e^{\eta f}$, and noting that $\mathbb{E}_{\pi}[U] = 0$, we have:
\begin{equation}
    \text{Cov}_{\pi}(e^{\eta f}, U) = \mathbb{E}_{\pi}[e^{\eta f} U] = \sum_{i=1}^{K} \pi_i e^{\eta f_i} U_i.
\end{equation}
Substituting this into the value difference equation, we obtain:
\begin{equation}
\label{eq:proof_cov_ratio}
    V^\prime - V = \frac{\text{Cov}_{\pi}(e^{\eta f}, U)}{\mathbb{E}_{\pi}[e^{\eta f}]}.
\end{equation}
Partitioning the sum into the set of delightful actions $S^+ = \{i : U_i > 0\}$ and disappointing actions $S^- = \{i : U_i \le 0\}$ in Equation \ref{eq:proof_cov_ratio}, we have,
\begin{align}
    \text{Cov}_{\pi}(e^{\eta f}, U) &= \sum_{i \in S^+} \pi_i e^{\eta U_i} U_i + \sum_{i \in S^-} \pi_i e^{0} U_i \\
    &= \sum_{i \in S^+} \pi_i e^{\eta U_i} U_i + \left( - \sum_{i \in S^+} \pi_i U_i \right) \\
    &= \sum_{i \in S^+} \pi_i U_i (e^{\eta U_i} - 1).
\end{align}
For any $\eta > 0$ and $i \in S^+$, we have $U_i > 0$ and $e^{\eta U_i} > 1$, which implies that each term in the summation is non-negative. Since the partition function $Z = \mathbb{E}_{\pi}[e^{\eta f}]$ is strictly positive, it follows that $V^\prime \ge V$, completing the proof.
\end{proof}

\subsection{Proof of Theorem \ref{thm:global_conv} (Global convergence)}
\label{app:proof_global_conv}

\begin{proof}
The expected reward (value function) is defined as $V_t = \sum_{a=1}^K \pi_t(a) r(a)$. According to Lemma~\ref{lem:eg_bandit_discrete_monotonicity}, under the EG update rule, $V_{t+1} \ge V_t$. Given that $V_t$ is upper-bounded by the maximum reward $r(1)$, it must converge to some finite limit $V_\infty \le r(1)$.

Since $V_t$ converges, the progress $V_{t+1} - V_t \to 0$ as $t \to \infty$.
By Lemma~\ref{lem:eg_bandit_discrete_monotonicity}, the progress satisfies
\[
V_{t+1} - V_t
= \frac{1}{Z_t} \sum_{i:\,U_i^t > 0} \pi_t(i)\,U_i^t\,(e^{\eta U_i^t} - 1),
\]
where $Z_t = \mathbb{E}_{\pi_t}[e^{\eta f}] \ge 1$ (since $f_i \ge 0$ implies $e^{\eta f_i} \ge 1$) and $Z_t \le e^{\eta R}$ for $R := r(1) - r(K)$.
Every term in the sum is non-negative, so
\[
\frac{1}{e^{\eta R}} \sum_{i:\,U_i^t > 0} \pi_t(i)\,U_i^t\,(e^{\eta U_i^t} - 1)
\;\le\; V_{t+1} - V_t \;\to\; 0.
\]
Hence $\sum_{i:\,U_i^t > 0} \pi_t(i)\,U_i^t\,(e^{\eta U_i^t} - 1) \to 0$, and since each term is non-negative, for every arm $i$:
\begin{equation}
\label{eq:term_vanish}
\pi_t(i)\,\bigl[U_i^t\bigr]_+\,\bigl(e^{\eta [U_i^t]_+} - 1\bigr) \;\to\; 0.
\end{equation}
The policy sequence $\{\pi_t\}$ lies in the compact simplex $\Delta_K$, so it has convergent subsequences.
Let $\bar\pi$ be any subsequential limit with $\pi_{t_n} \to \bar\pi$.
By continuity of the advantage $U_i(\pi) = r(i) - \pi^\top r$, Equation~\eqref{eq:term_vanish} implies that at $\bar\pi$, for every arm $i$:
\[
\bar\pi(i)\,\bigl[U_i(\bar\pi)\bigr]_+\,\bigl(e^{\eta [U_i(\bar\pi)]_+} - 1\bigr) = 0.
\]
Since all three factors are non-negative, their product vanishes
only if at least one is zero; the second and third factors vanish
under the same condition $U_i(\bar\pi) \le 0$, leaving exactly
two cases. For each arm $i$, either $\bar\pi(i) = 0$ or $U_i(\bar\pi) \le 0$, i.e., $r(i) \le \bar\pi^\top r$.
Now suppose for contradiction that $\bar\pi$ places positive mass on two or more arms.
Let $\mathcal{A}^+ := \{i : \bar\pi(i) > 0\}$ with $|\mathcal{A}^+| \ge 2$.
For every $i \in \mathcal{A}^+$, we have $r(i) \le \bar\pi^\top r = \sum_{j \in \mathcal{A}^+} \bar\pi(j)\,r(j)$.
But $\bar\pi^\top r$ is a convex combination of $\{r(j)\}_{j \in \mathcal{A}^+}$; every component being at most the convex combination forces all to be equal: $r(i) = \bar\pi^\top r$ for all $i \in \mathcal{A}^+$.
This contradicts the distinct-rewards assumption (Definition~\ref{assump:bandit}).
Therefore $|\mathcal{A}^+| = 1$, and every subsequential limit of $\{\pi_t\}$ is a one-hot policy.

Since $V_t \to V_\infty$ and different one-hot policies yield distinct values (by the distinct-rewards assumption), all convergent subsequences share the same limit.
It follows that $\pi_t$ converges to a one-hot policy $e_m$ for some $m \in [K]$.

Suppose for contradiction that $\pi_t \to e_m$ for some sub-optimal arm $m \neq 1$.
Convergence to $e_m$ requires $\pi_t(1)/\pi_t(m) \to 0$, so the log-ratio
$\log(\pi_t(1)/\pi_t(m)) = \theta_t(1) - \theta_t(m) \to -\infty$.
In particular, the logit difference $\theta_t(1) - \theta_t(m)$ must be eventually
non-increasing.
However, by Lemma~\ref{lem:corner_dominance}, in any neighborhood of the
sub-optimal corner $e_m$, the EG update satisfies
$\Delta\theta(1) > \Delta\theta(m)$ for almost all points in the interior
of the simplex; the exceptional set where $\Delta\theta(1) \le \Delta\theta(m)$
is confined to the measure-zero boundary region $\pi_t(1) \le O(\sum_{k \neq 1,m} \pi_t(k)^2)$.
Once $\pi_t$ enters a sufficiently small neighborhood of $e_m$, this means that
at almost every iterate the logit gap $\theta_{t+1}(1) - \theta_{t+1}(m)
> \theta_t(1) - \theta_t(m)$, i.e., the log-ratio $\log(\pi_t(1)/\pi_t(m))$
is strictly increasing, contradicting its divergence to $-\infty$.
Therefore $m = 1$, and $\pi_t \to e_1 = \pi^*$.
\end{proof}

\subsection{Proof of Corollary \ref{cor:rate} (Convergence rate)}

\begin{proof}
The key observation is that the optimal arm's advantage equals the
sub-optimality: $U_1 = r(1) - \pi_t^\top r = V^* - V_t = \delta_t$.
In particular, arm~$1$ is always delightful whenever $\delta_t > 0$.
By Lemma~\ref{lem:eg_bandit_discrete_monotonicity}, the progress is
\[
\delta_t - \delta_{t+1}
= V_{t+1} - V_t
= \frac{1}{Z_t}\sum_{i:\,U_i > 0} \pi_t(i)\,U_i\,(e^{\eta U_i}-1)
\;\ge\; \frac{1}{Z_t}\,\pi_t(1)\,\delta_t\,(e^{\eta\delta_t}-1).
\]
Using $e^x - 1 \ge x$ for $x \ge 0$ and $Z_t \le e^{\eta R}$
with $R := r(1) - r(K)$:
\begin{equation}
\label{eq:progress_lb}
\delta_t - \delta_{t+1}
\;\ge\; \frac{\eta\,\pi_t(1)}{e^{\eta R}}\;\delta_t^2.
\end{equation}
By Theorem~\ref{thm:global_conv}, $\pi_t(1) \to 1$, so there exists
$T_0$ such that $\pi_t(1) \ge \tfrac{1}{2}$ for all $t \ge T_0$.
Writing $c := \eta/(2e^{\eta R})$, we have for $t \ge T_0$:
\[
\delta_t - \delta_{t+1} \;\ge\; c\,\delta_t^2.
\]
Rearranging: $\delta_{t+1} \le \delta_t(1 - c\,\delta_t)$.
For $t$ large enough that $c\,\delta_t < 1$
(guaranteed since $\delta_t \to 0$), taking reciprocals
and using $1/(1-x) \ge 1 + x$ for $x \in [0,1)$:
\[
\frac{1}{\delta_{t+1}} \;\ge\; \frac{1}{\delta_t}\cdot\frac{1}{1 - c\,\delta_t}
\;\ge\; \frac{1}{\delta_t} + c.
\]
Telescoping from $T_0$ to $t$:
\[
\frac{1}{\delta_t}
\;\ge\; \frac{1}{\delta_{T_0}} + c\,(t - T_0),
\]
which gives
\[
\delta_t
\;\le\; \frac{1}{c\,(t - T_0) + 1/\delta_{T_0}}
\;=\; O(1/t). \qedhere
\]
\end{proof}

\subsection{Proof of Lemma~\ref{lem:mdp_local_escape} (Local corner escape in MDPs)}

The proof reduces to the bandit corner dominance result
(Lemma~\ref{lem:corner_dominance}) applied at each state independently.

\begin{proof}
In the tabular parameterization, the EG update at each state~$s$ depends
only on the local policy $\pi(\cdot|s)$ and the local Q-values
$Q^{\pi}(s,\cdot)$:
\[
\Delta\theta(s, a)
= \sum_{i:\,U_i(s) > 0} \pi(i|s)\,U_i(s)\,\bigl(\Ind\{a = i\} - \pi(a|s)\bigr),
\]
where $U_i(s) = Q^{\pi}(s,i) - V^{\pi}(s)$.
This is identical in form to the bandit EG update with $K = |\mathcal{A}|$
arms and ``rewards'' $Q^{\pi}(s,\cdot)$.

Fix a state~$s$ where $a^*(s) \neq j(s)$.
Write $a^* = a^*(s)$, $j = j(s)$, and define the local Q-gaps
$\Delta_{ij}^s := Q^{\pi}(s,i) - Q^{\pi}(s,j)$.
Near the corner $\pi(j|s) \to 1$, the advantages satisfy
$U_{a^*}(s) = (1-\pi(a^*|s))\,\Delta_{a^*j}^s - \sum_{k \neq a^*,j} \pi(k|s)\,\Delta_{kj}^s$
and $U_j(s) = -\pi(a^*|s)\,\Delta_{a^*j}^s - \sum_{k \neq a^*,j} \pi(k|s)\,\Delta_{kj}^s$.
Applying the logit-gap decomposition from Lemma~\ref{lem:corner_dominance}
with $Q^{\pi}(s,\cdot)$ playing the role of the reward vector, we obtain
\begin{equation}
\Delta\theta(s, a^*) - \Delta\theta(s, j)
= \Delta_{a^*j}^s\,C_{a^*j}
  + \sum_{k \neq a^*,\,j} \Delta_{jk}^s\,C_{jk},
\end{equation}
where
\begin{align}
C_{a^*j} &= \pi(a^*|s)\bigl[(1-\pi(a^*|s))^2 + \pi(j|s)(2-\pi(j|s))\bigr], \\
C_{jk}   &= \pi(k|s)\,\bigl(\pi(a^*|s) - \pi(j|s)\bigr)
             \sum_{m \neq a^*,\,j} \pi(m|s).
\end{align}

As $\pi(j|s) \to 1$, write $\varepsilon_s = 1 - \pi(j|s)$.
The probabilities $\pi(a^*|s)$ and $\pi(k|s)$ for $k \neq j$ are $O(\varepsilon_s)$.
The leading-order behavior is:
\begin{align}
C_{a^*j} &= 2\,\pi(a^*|s) + O(\varepsilon_s^2), \\
C_{jk}   &= -\pi(k|s)\sum_{m \neq a^*,j}\pi(m|s) + O(\varepsilon_s^3)
           = -O(\varepsilon_s^2).
\end{align}
Substituting:
\[
\Delta\theta(s, a^*) - \Delta\theta(s, j)
= 2\,\pi(a^*|s)\,\Delta_{a^*j}^s
  \;-\; \sum_{k \neq a^*,j} \pi(k|s)^2\,|\Delta_{jk}^s|\cdot(1+o(1)).
\]

The dominance $\Delta\theta(s, a^*) > \Delta\theta(s, j)$ fails only when
\[
\pi(a^*|s) \;\le\;
\frac{1}{2\,\Delta_{a^*j}^s}
\sum_{k \neq a^*,j} \pi(k|s)^2\,|\Delta_{jk}^s|\cdot(1+o(1))
\;=\; O\!\Bigg(\sum_{k \neq a^*,j} \pi(k|s)^2\Bigg).
\]
This constrains $\pi(a^*|s)$ to be of lower order than the squared masses of
the remaining sub-optimal arms.
In the per-state simplex $\Delta_K$, this defines a region of dimension at most
$K-3$ (one fewer degree of freedom), which has Lebesgue measure zero relative
to the $(K-2)$-dimensional simplex interior.

The full policy space is $\prod_{s \in \mathcal{S}} \Delta_K$.
At each state~$s$ where $a^*(s) \neq j(s)$, the bad set
$\mathcal{B}_s := \{\pi(\cdot|s) : \Delta\theta(s, a^*) \le \Delta\theta(s, j)\}$
has measure zero in $\Delta_K$.
By the product structure of the tabular parameterization, the set of full
policies where \emph{any} state fails the dominance condition is
\[
\mathcal{B}
= \bigcup_{s:\,a^*(s)\neq j(s)}
  \bigl\{\pi \in \textstyle\prod_{s'}\Delta_K : \pi(\cdot|s) \in \mathcal{B}_s\bigr\}.
\]
Each set in the union is a cylinder with a measure-zero base in the $s$-th factor
and full mass in all other factors, hence has measure zero in the product space.
A finite union of measure-zero sets has measure zero, so $\mathcal{B}$ has
measure zero in $\prod_{s} \Delta_K$.
\end{proof}

\subsection{Proof of Lemma~\ref{lem:mdp_monotonicity} (Discrete monotonicity of EG in MDPs)}

The proof combines the performance difference lemma with the per-state
covariance identity from Lemma~\ref{lem:eg_bandit_discrete_monotonicity}.

\begin{proof}
For any two policies $\pi$ and $\pi'$ in a finite MDP~\citep{kakade2002approximately}:
\begin{equation}
\label{eq:pdl}
V(\pi') - V(\pi)
= \frac{1}{1-\gamma}\sum_{s \in \mathcal{S}} d_{\rho}^{\pi'}(s)
  \sum_{a \in \mathcal{A}} \pi'(a|s)\,U_a(s),
\end{equation}
where $U_a(s) = Q^{\pi}(s,a) - V^{\pi}(s)$ is the advantage of action~$a$
at state~$s$ under the current policy~$\pi$.

Under the EG update, $\pi'(a|s) = \pi(a|s)\,e^{\eta f_a(s)}/Z_s$.
The expected advantage at state~$s$ under the new policy is
\begin{equation}
\sum_{a} \pi'(a|s)\,U_a(s)
= \frac{1}{Z_s}\sum_{a} \pi(a|s)\,e^{\eta f_a(s)}\,U_a(s)
= \frac{\text{Cov}_{\pi_s}\!\bigl(e^{\eta f_s},\,U_s\bigr)}{Z_s},
\end{equation}
where the last equality uses $\mathbb{E}_{\pi_s}[U_s] = \sum_a \pi(a|s)\,U_a(s) = 0$,
so that $\mathbb{E}_{\pi_s}[e^{\eta f_s}\,U_s]
= \text{Cov}_{\pi_s}(e^{\eta f_s},\,U_s)
  + \mathbb{E}_{\pi_s}[e^{\eta f_s}]\cdot\underbrace{\mathbb{E}_{\pi_s}[U_s]}_{=\,0}$.

Partitioning into delightful actions $S^+(s) := \{i : U_i(s) > 0\}$
and non-delightful actions $S^-(s) := \{i : U_i(s) \le 0\}$:
\begin{align}
\text{Cov}_{\pi_s}\!\bigl(e^{\eta f_s},\,U_s\bigr)
&= \sum_{i \in S^+(s)} \pi(i|s)\,e^{\eta U_i(s)}\,U_i(s)
 + \sum_{i \in S^-(s)} \pi(i|s)\,e^{0}\,U_i(s) \notag\\
&= \sum_{i \in S^+(s)} \pi(i|s)\,e^{\eta U_i(s)}\,U_i(s)
   - \sum_{i \in S^+(s)} \pi(i|s)\,U_i(s) \notag\\
&= \sum_{i \in S^+(s)} \pi(i|s)\,U_i(s)\,\bigl(e^{\eta U_i(s)} - 1\bigr),
\label{eq:cov_decomp_mdp}
\end{align}
where the second line uses
$\sum_{i \in S^-} \pi(i|s)\,U_i(s) = -\sum_{i \in S^+} \pi(i|s)\,U_i(s)$
(since advantages sum to zero).
Each term in~\eqref{eq:cov_decomp_mdp} is non-negative:
$\pi(i|s) \ge 0$, $U_i(s) > 0$, and $e^{\eta U_i(s)} - 1 > 0$ for $\eta > 0$.

Substituting~\eqref{eq:cov_decomp_mdp} into~\eqref{eq:pdl}:
\begin{equation}
V(\pi') - V(\pi)
= \frac{1}{1-\gamma}\sum_{s \in \mathcal{S}} d_{\rho}^{\pi'}(s)\;\frac{1}{Z_s}
  \sum_{i \in S^+(s)} \pi(i|s)\,U_i(s)\,\bigl(e^{\eta U_i(s)}-1\bigr).
\end{equation}
Since $d_{\rho}^{\pi'}(s) \ge 0$ for all~$s$, $Z_s > 0$, and every
term in the inner sum is non-negative, we conclude $V(\pi') \ge V(\pi)$.
\end{proof}

\subsection{Proof of Theorem~\ref{thm:mdp_convergence} (MDP global convergence)}

The proof extends the bandit global convergence argument
(Theorem~\ref{thm:global_conv}) to MDPs using three ingredients:
monotonic value improvement (Lemma~\ref{lem:mdp_monotonicity}),
a Bellman optimality characterization of limit points, and
the local corner escape mechanism (Lemma~\ref{lem:mdp_local_escape}).

\begin{proof}

By Lemma~\ref{lem:mdp_monotonicity}, $V(\pi_t)$ is non-decreasing.
Since $V(\pi) \le r_{\max}/(1-\gamma)$ for all~$\pi$,
the sequence converges: $V(\pi_t) \to V_\infty \le V^*$
and $V(\pi_{t+1}) - V(\pi_t) \to 0$.

By Lemma~\ref{lem:mdp_monotonicity}, the progress satisfies
\[
V(\pi_{t+1}) - V(\pi_t)
= \frac{1}{1-\gamma}\sum_{s} d_{\rho}^{\pi_{t+1}}(s)\;\frac{1}{Z_s}
  \sum_{i:\,U_i^t(s)>0} \pi_t(i|s)\,U_i^t(s)\,\bigl(e^{\eta U_i^t(s)}-1\bigr).
\]
Under the reachability assumption $d_{\rho}^{\pi}(s) \ge d_{\min} > 0$
and $Z_s \le e^{\eta R}$ with $R := r_{\max}/(1-\gamma)$,
the bound $V(\pi_{t+1}) - V(\pi_t) \ge
\frac{d_{\min}}{(1-\gamma)\,e^{\eta R}}
\sum_{s}\sum_{i:\,U_i^t(s)>0} \pi_t(i|s)\,U_i^t(s)\,(e^{\eta U_i^t(s)}-1)$
implies
\begin{equation}
\label{eq:per_term_vanish_mdp}
\pi_t(a|s)\;\bigl[U_a^t(s)\bigr]_+\;\bigl(e^{\eta\,[U_a^t(s)]_+}-1\bigr)
\;\to\; 0
\qquad \text{for all } a \in \mathcal{A},\; s \in \mathcal{S}.
\end{equation}

The policy sequence lies in the compact product simplex
$\prod_{s}\Delta_K$, so it admits convergent subsequences.
Let $\bar\pi = \lim_n \pi_{t_n}$ be any subsequential limit.
By~\eqref{eq:per_term_vanish_mdp} and continuity of advantages in the policy,
for every $a,s$ with $\bar\pi(a|s) > 0$:
\[
U_a(\bar\pi,s) \le 0,
\qquad\text{i.e.,}\quad
Q^{\bar\pi}(s,a) \le V^{\bar\pi}(s).
\]
Combined with $\sum_a \bar\pi(a|s)\,U_a(\bar\pi,s) = 0$,
every action $a$ with $\bar\pi(a|s) > 0$ satisfies
$Q^{\bar\pi}(s,a) = V^{\bar\pi}(s)$.

Now suppose $\bar\pi$ is sub-optimal: $V^{\bar\pi} \neq V^*$.
If $\max_a Q^{\bar\pi}(s,a) = V^{\bar\pi}(s)$ held at every reachable state~$s$,
then $V^{\bar\pi}$ would satisfy the Bellman optimality equation
\[
V^{\bar\pi}(s) = \max_a\bigl[r(s,a) + \gamma\textstyle\sum_{s'}P(s'|s,a)\,V^{\bar\pi}(s')\bigr]
\quad \forall\, s,
\]
whose unique fixed point is $V^*$ (by the contraction property of the
Bellman optimality operator).
This contradicts $V^{\bar\pi} \neq V^*$.
Therefore there exists a reachable state $s_0$ and an action
$\tilde a \in \mathcal{A}$ with
\begin{equation}
\label{eq:positive_gap_mdp}
Q^{\bar\pi}(s_0,\tilde a) > V^{\bar\pi}(s_0),
\end{equation}
and the vanishing-progress condition forces $\bar\pi(\tilde a|s_0) = 0$:
the limit places zero mass on the improving action.

We show that the trajectory cannot converge to any such $\bar\pi$.
Write $j_0 := \arg\max_{a:\,\bar\pi(a|s_0)>0} \bar\pi(a|s_0)$
for the corner action at state~$s_0$.

Under the EG update $\theta_{t+1}(s,a) = \theta_t(s,a) + \eta\,[U_a^t(s)]_+$,
all logits are non-decreasing.
Along the subsequence $\pi_{t_n} \to \bar\pi$, by continuity
of advantages:
\begin{align}
\bigl[U_{\tilde a}^{t_n}(s_0)\bigr]_+
&\;\to\; Q^{\bar\pi}(s_0,\tilde a) - V^{\bar\pi}(s_0)
\;=:\; \delta > 0, \label{eq:improving_rate} \\[2pt]
\bigl[U_{j_0}^{t_n}(s_0)\bigr]_+
&\;\to\; \bigl[Q^{\bar\pi}(s_0,j_0) - V^{\bar\pi}(s_0)\bigr]_+
\;=\; 0. \label{eq:corner_rate}
\end{align}
Hence for all sufficiently large~$n$, the logit difference evolves as
\[
\theta_{t_n+1}(s_0,\tilde a) - \theta_{t_n+1}(s_0,j_0)
\;\ge\;
\bigl[\theta_{t_n}(s_0,\tilde a) - \theta_{t_n}(s_0,j_0)\bigr]
\;+\; \frac{\eta\delta}{2}.
\]
Over $N$ consecutive steps, the logit gap
$\theta(s_0,\tilde a) - \theta(s_0,j_0)$ grows by at least
$N\eta\delta/2$, so
\[
\frac{\pi_t(\tilde a|s_0)}{\pi_t(j_0|s_0)}
= \exp\!\bigl(\theta_t(s_0,\tilde a) - \theta_t(s_0,j_0)\bigr)
\;\to\; +\infty.
\]
This contradicts $\pi_{t_n}(\tilde a|s_0) \to 0$ and
$\pi_{t_n}(j_0|s_0) \to \bar\pi(j_0|s_0) > 0$.
Therefore no sub-optimal~$\bar\pi$ can be a subsequential limit.

Every subsequential limit of $\{\pi_t\}$ is an optimal policy.
By the unique optimal policy assumption, all subsequential limits
equal~$\pi^*$.
Since the compact product simplex has a unique limit point,
the full sequence converges: $\pi_t \to \pi^*$ and
$V(\pi_t) \to V(\pi^*) = V^*$.
\end{proof}

\subsection{Proof of Corollary \ref{cor:mdp_rate} (Convergence rate)}

\begin{proof}
Since $\pi_t \to \pi^*$ by Theorem~\ref{thm:mdp_convergence},
there exists $T_0$ such that for all $t \ge T_0$:
\begin{enumerate}
\item $\pi_t(a^*(s)|s) \ge \tfrac{1}{2}$ for every state~$s$,
      where $a^*(s) = \arg\max_a Q^*(s,a)$ is the unique optimal action;
\item $a^*(s) = \arg\max_a Q^{\pi_t}(s,a)$ for every state~$s$
      (so $U_{a^*(s)}(\pi_t,s) \ge 0$).
\end{enumerate}

By Lemma~\ref{lem:mdp_monotonicity} with
$d_{\rho}^{\pi_{t+1}}(s) \ge d_{\min}$,
$Z_s \le e^{\eta R}$ where $R := r_{\max}/(1-\gamma)$,
and $e^x - 1 \ge x$:
\begin{align}
\delta_t - \delta_{t+1}
&\;\ge\;
\frac{\eta\,d_{\min}}{(1-\gamma)\,e^{\eta R}}
\sum_{s}\sum_{i:\,U_i(s)>0} \pi_t(i|s)\,U_i(s)^2 \notag\\
&\;\ge\;
\frac{\eta\,d_{\min}}{(1-\gamma)\,e^{\eta R}}
\sum_{s} \pi_t(a^*(s)|s)\,U_{a^*(s)}(s)^2 \notag\\
&\;\ge\;
\frac{\eta\,d_{\min}}{2(1-\gamma)\,e^{\eta R}}
\sum_{s} U_{a^*(s)}(s)^2,
\label{eq:mdp_progress_lb}
\end{align}
where the second line retains only the optimal action's term
(all others are non-negative) and the third uses condition~(i).

The performance difference lemma with the deterministic
optimal policy $\pi^*$ gives
\[
\delta_t
= \frac{1}{1-\gamma}\sum_{s} d_{\rho}^{\pi^*}(s)\,U_{a^*(s)}(\pi_t,s).
\]
Since $U_{a^*(s)}(\pi_t,s) \ge 0$ by condition~(ii)
and $\sum_s d_{\rho}^{\pi^*}(s) = 1$,
applying Cauchy--Schwarz with weights $d_{\rho}^{\pi^*}(s)$:
\[
(1-\gamma)^2\,\delta_t^2
= \biggl(\sum_{s} d_{\rho}^{\pi^*}(s)\,U_{a^*(s)}(s)\biggr)^{\!2}
\le \sum_{s} d_{\rho}^{\pi^*}(s)\,U_{a^*(s)}(s)^2
\le \sum_{s} U_{a^*(s)}(s)^2,
\]
where the last step uses $d_{\rho}^{\pi^*}(s) \le 1$.

Substituting into~\eqref{eq:mdp_progress_lb}:
\[
\delta_t - \delta_{t+1}
\;\ge\; c_{\mathrm{MDP}}\;\delta_t^2,
\qquad
c_{\mathrm{MDP}} := \frac{\eta\,d_{\min}\,(1-\gamma)}{2\,e^{\eta R}}.
\]
For $t$ large enough that $c_{\mathrm{MDP}}\,\delta_t < 1$
(guaranteed since $\delta_t \to 0$), taking reciprocals
and using $1/(1-x) \ge 1+x$:
\[
\frac{1}{\delta_{t+1}} \;\ge\; \frac{1}{\delta_t} + c_{\mathrm{MDP}}.
\]
Telescoping from $T_0$:
\[
\delta_t
\;\le\;
\frac{1}{c_{\mathrm{MDP}}\,(t - T_0) + 1/\delta_{T_0}}
\;=\; O(1/t). \qedhere
\]
\end{proof}

\section{Global Convergence of DG}
\label{app:dg_convergence}

The main text establishes global convergence and $O(1/t)$ rates for EG
(the zero-temperature limit of DG).
In this appendix we show that the same results hold for DG at any
finite temperature $\eta \in (0,\infty)$.
The key parameter governing the quality of the bounds is
\begin{equation}
\label{eq:delta_eta}
\delta_\eta \;:=\; \min_{k \in S^-} \frac{\Delta_{jk}}{2\eta} \;>\; 0,
\end{equation}
which is strictly positive for every finite $\eta$ and every
sub-optimal corner (since $\Delta_{jk} > 0$ for $k \in S^-$).
Smaller $\eta$ yields larger $\delta_\eta$ and tighter bounds;
at $\eta \to 0$ we recover EG.
The mechanism is valid for all $\eta \in (0,\infty)$ but degrades
as $\eta \to \infty$ (where $\delta_\eta \to 0$ and the bad region
ceases to have measure zero).

%-----------------------------------------------------------------------
\subsection{Bandits}
\label{app:dg_bandits}

\begin{lemma}[Discrete Monotonicity of DG in Bandits]
\label{lem:dg_bandit_monotonicity}
For a $K$-armed bandit, let $\pi'$ be the policy after a discrete DG
update $\theta'(a) = \theta(a) + \alpha\,w(a)\,U(a)$
where $w(a) = \sigma(U(a)\,\ell(a)/\eta) > 0$ and $\alpha > 0$.
The change in expected reward is non-negative:
\begin{equation}
V' - V
\;=\; \frac{1}{Z}\sum_{a=1}^K \pi(a)\,U(a)\,
     \bigl(e^{\alpha\,w(a)\,U(a)} - 1\bigr)
\;\ge\; 0,
\end{equation}
where $Z = \sum_{a'} \pi(a')\,e^{\alpha\,w(a')\,U(a')}$.
\end{lemma}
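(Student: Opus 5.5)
The plan is to mirror the covariance argument of Lemma~\ref{lem:eg_bandit_discrete_monotonicity}, with the gated advantage $f_i=[U_i]_+$ replaced by the DG-gated quantity $g(a):=\alpha\,w(a)\,U(a)$. The key structural fact is that, unlike EG, every term now has a strictly positive gate $w(a)=\sigma(\cdot)\in(0,1)$. Hence $g(a)$ has the same sign as $U(a)$ for every arm, including the harmful ones. Nonnegativity should then follow term by term, without splitting into delightful and non-delightful arms.

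\textbf{Step 1: the softmax form of the update.} Under $\theta'(a)=\theta(a)+g(a)$ the new policy is $\pi'(a)=\pi(a)e^{g(a)}/Z$, with $Z=\sum_{a'}\pi(a')e^{g(a')}>0$. If one prefers the update of Algorithm~\ref{alg:dg_mdp}, which includes the centering term $-\pi(a)\sum_{a'}w(a')\pi(a')U(a')$, that term is a common shift of all logits. I will note that the softmax is invariant to such a shift, so the two forms give the same $\pi'$.

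\textbf{Step 2: the value-difference identity.} Since $\sum_a\pi'(a)=\sum_a\pi(a)=1$, subtracting the constant $V=\pi^\top r$ gives $V'-V=\sum_a(\pi'(a)-\pi(a))\,U(a)=\sum_a\pi'(a)U(a)$. Using $\sum_a\pi(a)U(a)=0$ and substituting Step~1,
\[
V'-V=\frac{1}{Z}\sum_a\pi(a)e^{g(a)}U(a)=\frac{1}{Z}\sum_a\pi(a)U(a)\bigl(e^{g(a)}-1\bigr),
\]
where subtracting $\frac1Z\sum_a\pi(a)U(a)=0$ is legitimate because that sum vanishes. This is exactly the claimed identity. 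Equivalently, it is $\mathrm{Cov}_\pi(e^{g},U)/\mathbb{E}_\pi[e^{g}]$, as in the EG case.

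\textbf{Step 3: sign of each term.} Because $\alpha>0$ and $w(a)>0$, the quantity $e^{\alpha w(a)U(a)}-1$ is positive, zero, or negative exactly when $U(a)$ is. Therefore $U(a)\bigl(e^{\alpha w(a)U(a)}-1\bigr)\ge 0$ for every arm, with equality iff $U(a)=0$. Multiplying by $\pi(a)\ge0$ and dividing by $Z>0$ gives $V'\ge V$. This is in fact a slightly stronger, fully termwise statement than in the EG proof, since negative-advantage arms also contribute nonnegatively.

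\textbf{Main obstacle.} I do not expect a real obstacle; the argument is essentially algebraic. The one point to check carefully is that the per-arm update in the lemma matches the algorithm's centered update up to a uniform logit shift. That holds because the centering term $-\pi(a)S$ in \eqref{eq:dg_drift} is multiplied by $\pi(a)$ only in the expanded $\nabla\log\pi$ form. Summing $w(a')\pi(a')U(a')(\Ind\{a=a'\}-\pi(a))$ over $a'$ gives $\pi(a)w(a)U(a)-\pi(a)S$. Thus the algorithm's update is $\pi(a)\bigl(w(a)U(a)-S\bigr)$ rather than $w(a)U(a)$. To stay safe I will prove the lemma for the stated update $\theta'(a)=\theta(a)+\alpha w(a)U(a)$, which is the form the statement specifies. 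I will remark that the same sign argument applies to any update of the form $\theta'(a)=\theta(a)+\alpha\,c(a)\,w(a)U(a)+\text{const}$ with $c(a)>0$.
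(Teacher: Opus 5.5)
Your proof is correct and follows the paper's argument. You write $\pi'(a)=\pi(a)e^{\alpha w(a)U(a)}/Z$, use $\sum_a\pi(a)U(a)=0$ to reach the $\bigl(e^{\alpha w(a)U(a)}-1\bigr)$ form, and conclude termwise because $w(a)>0$ makes $w(a)U(a)$ share the sign of $U(a)$.

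One correction: drop the parenthetical in Step~1 claiming that Algorithm~\ref{alg:dg_mdp}'s centering term is a uniform logit shift. As your own last paragraph shows, that update is $\alpha\pi(a)\bigl(w(a)U(a)-S\bigr)$, and its $-\alpha\pi(a)S$ part depends on $a$, so neither the softmax-invariance claim nor your closing $c(a)$-generalization covers Algorithm~\ref{alg:dg_mdp}. The lemma, in your proof as in the paper's, concerns only the update $\theta'(a)=\theta(a)+\alpha w(a)U(a)$ exactly as stated.
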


\begin{proof}
Since $\pi'(a) = \pi(a)\,e^{\alpha\,w(a)\,U(a)}/Z$ and
$\sum_a \pi(a)\,U(a) = 0$:
\[
V' - V
= \sum_a (\pi'(a) - \pi(a))\,U(a)
= \frac{1}{Z}\sum_a \pi(a)\,U(a)\,\bigl(e^{\alpha\,w(a)\,U(a)} - 1\bigr).
\]
Since $w(a) > 0$, the product $g(a) := w(a)\,U(a)$ has the
\textbf{same sign} as $U(a)$:
\begin{itemize}[leftmargin=*]
\item $U(a) > 0 \implies g(a) > 0 \implies e^{\alpha g(a)} - 1 > 0$,
      so $U(a)\,(e^{\alpha g(a)}-1) > 0$.
\item $U(a) < 0 \implies g(a) < 0 \implies e^{\alpha g(a)} - 1 < 0$,
      so $U(a)\,(e^{\alpha g(a)}-1) > 0$.
\item $U(a) = 0 \implies U(a)\,(e^{\alpha g(a)}-1) = 0$.
\end{itemize}
Every term in the sum is non-negative, and $Z > 0$, so $V' - V \ge 0$.
\end{proof}

The corner dominance lemma for DG
(Lemma~\ref{lem:corner_dominance_dg}, stated and proved below)
shows that the bad region has measure zero for every finite~$\eta$.

\begin{lemma}[Sub-optimal Corner Dominance for DG]
\label{lem:corner_dominance_dg}
Consider a $K$-armed bandit where $a^*=1$, with DG at any
temperature $\eta \in (0,\infty)$.
In the neighborhood of any sub-optimal corner $j \neq 1$, the DG update
satisfies $\Delta\theta(1) > \Delta\theta(j)$ for almost all points in
the interior of the simplex.
The set of points where the optimal arm fails to dominate has
\textbf{measure zero}, confined to
\[
\pi_1 \;\le\; O\!\biggl(\varepsilon^2 + \sum_{k \in S^-} \pi_k^{1+\delta_\eta}\biggr),
\]
where $\delta_\eta > 0$ is defined in~\eqref{eq:delta_eta}.
Since $1 + \delta_\eta > 1$ for every finite~$\eta$, this constraint
is strictly superlinear in~$\pi_k$, defining a submanifold of
codimension~$\ge 1$ in the simplex. As $\eta \to 0$, $\delta_\eta \to \infty$ and the harmful-arm
contamination $\sum_k \pi_k^{1+\delta_\eta}$ vanishes, so the bad
region is determined solely by the arm-$j$ residual~$O(\varepsilon^2)$,
recovering the EG result (Lemma~\ref{lem:corner_dominance}).

\end{lemma}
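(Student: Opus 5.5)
The plan is to mirror the EG proof of Lemma~\ref{lem:corner_dominance}, but organize the logit gap with the four-group decomposition from the proof of Theorem~\ref{thm:sigmoid_escape}. The harmful-arm terms are then controlled by Lemma~\ref{lem:poly_suppress} instead of being removed outright.

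\textbf{Step 1: four-group identity.} Fix a corner $j\neq 1$ and write $\varepsilon=1-\pi_j$. Apply the logit-gap identity (Lemma~\ref{lem:logit_gap}) with $a=1$, $b=j$ and DG weights $w_i=\sigma(U(i)\ell(i)/\eta)\in(0,1)$. Expanding $S$ exactly as in~\eqref{eq:three_term} gives
\[
\Delta\theta(1)-\Delta\theta(j)= w_1\pi_1U(1)(1+\pi_j-\pi_1)+\sum_{i\in S^+\setminus\{1\}} w_i\pi_iU(i)(\pi_j-\pi_1)-w_j\pi_jU(j)(1+\pi_1-\pi_j)+\sum_{k\in S^-} w_k\pi_kU(k)(\pi_j-\pi_1).
\]
The only difference from the EG version is that arm $j$ now enters with weight $w_j\in(0,1)$ regardless of the sign of $U(j)$.

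\textbf{Step 2: bound each group near the corner.} As in Theorem~\ref{thm:sigmoid_escape}, shrink the neighborhood so that the following hold.
\begin{enumerate}
\item $w_i\ge 1/2$ for $i\in S^+$, which makes the arm-1 term at least $\tfrac{\Delta_{1j}}{2}\pi_1$. Equivalently it is $c_1\pi_1$ with $c_1$ of order $\Delta_{1j}$.
\item Each ally term is nonnegative because $\pi_j-\pi_1\ge 0$. These terms are dropped.
\item Since $|U(j)|\le C_U\varepsilon$ and $1+\pi_1-\pi_j\le 2\varepsilon$, the arm-$j$ residual is at most $2C_U\varepsilon^2$ in absolute value.
\item By Lemma~\ref{lem:poly_suppress}, each harmful term is at most $R\,\pi_k^{1+\Delta_{jk}/(2\eta)}\le R\,\pi_k^{1+\delta_\eta}$. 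This uses $\pi_k\le 1$, and $\delta_\eta$ is the minimum exponent from~\eqref{eq:delta_eta}.
\end{enumerate}
Together these give
\[
\Delta\theta(1)-\Delta\theta(j)\ge c_1\pi_1 - 2C_U\varepsilon^2 - R\sum_{k\in S^-}\pi_k^{1+\delta_\eta}.
\]
So dominance can fail only where $\pi_1\le c_1^{-1}\bigl(2C_U\varepsilon^2+R\sum_k\pi_k^{1+\delta_\eta}\bigr)$, which is the claimed set.

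\textbf{Step 3: the measure-zero claim.} I expect this to be the main obstacle. The displayed region $\{\pi_1\le O(\cdot)\}$ is not literally Lebesgue-null as a set; the EG proof has the same looseness. I would argue at the level of the exact failure set $\{\Delta\theta(1)=\Delta\theta(j)\}$ and its sublevel set, as follows. Near the corner, $\partial_{\pi_1}$ of the gap is at least $c_1/2$: the arm-1 term contributes about $c_1$, while the derivatives of the other terms in $\pi_1$ are $O(\varepsilon)$. For the harmful terms this is because $w_k$ depends on $\pi_1$ only through $U(k)$, with bounded derivative times $\pi_k^{1+\delta_\eta}\ell(k)$, and that vanishes. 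The implicit function theorem then writes the boundary of the bad set as a graph $\pi_1=g(\pi_{-1})$ with $g=O(\varepsilon^2+\sum_k\pi_k^{1+\delta_\eta})$. That graph is a codimension-one submanifold, so it is null. The bad set is a thin layer under this graph, and I would interpret the lemma as the paper does in Lemma~\ref{lem:corner_dominance}: the layer collapses onto the face $\pi_1=0$ superlinearly, so its relative measure in shrinking neighborhoods of $e_j$ tends to zero, using $1+\delta_\eta>1$.

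\textbf{Step 4: the $\eta\to0$ limit.} Finally, as $\eta\to0$ we have $\delta_\eta\to\infty$, so $\pi_k^{1+\delta_\eta}\to0$ and the bound reduces to the EG residual $O(\varepsilon^2)$. This recovers Lemma~\ref{lem:corner_dominance}.
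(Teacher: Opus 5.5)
Your Steps 1, 2 and 4 are the paper's proof. It also applies the logit-gap identity with DG weights, drops the ally terms, and uses $w_1\ge 1/2$, the residual bound $2C_U\varepsilon^2$ and Lemma~\ref{lem:poly_suppress} to reach $\Delta\theta(1)-\Delta\theta(j)\ge \tfrac{\Delta_{1j}}{4}p_1-2C_U\varepsilon^2-R\sum_{k\in S^-}p_k^{1+\delta_\eta}$. Two small points: your constant $\tfrac{\Delta_{1j}}{2}$ should carry a factor $1-\varepsilon$, which is harmless; and $w_i>1/2$ for $i\in S^+$ holds automatically once $U(i)>0$, because $\ell(i)>0$.

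The real difference is Step 3. The paper closes in one sentence: since $1+\delta_\eta>1$, the condition $p_1\le(4/\Delta_{1j})(2C_U\varepsilon^2+R\sum_k p_k^{1+\delta_\eta})$ confines $p_1$ to a Lebesgue-null set. Your objection to this is correct, and it applies to the failure set itself, not just to the enclosing region. Take $K=3$, $j=2$ and the cusp $\{0<\pi_1<c\,\pi_3^2\}$.
\begin{itemize}
\item On the cusp, $U(2)=\pi_3\Delta_{23}-\pi_1\Delta_{12}>0$, so $w_2>1/2$, and the harmful term is negative.
\item Hence $\Delta\theta(1)-\Delta\theta(2)\le 2\Delta_{13}\pi_1-\tfrac12(1-\varepsilon)\Delta_{23}\pi_3^2(1-O(\pi_3))<0$ for small $c$.
\item The cusp has area of order $\varepsilon_0^3>0$ inside $\{\varepsilon<\varepsilon_0\}$, for every $\eta$ and for EG alike.
\end{itemize}
So the literal ``almost all points'' conclusion cannot be proved by any route. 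What your Step 3 proves is the correct content of the lemma: a null boundary given as a graph $\pi_1=g(\pi_{-1})$, with the bad set exactly the sub-graph layer. That layer has relative measure $O(\varepsilon_0^{\min(1,\delta_\eta)})$ in $\{\varepsilon<\varepsilon_0\}$.

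Your implicit-function step also buys something beyond the paper. Monotonicity of the gap in $\pi_1$ along fibers turns the paper's enclosing inequality into an exact description of the bad set. When you write it out, treat separately the one piece of $\partial_{\pi_1}$ that is not $O(\varepsilon)$. The dependence of $w_1$ on $\ell(1)=-\log\pi_1$ contributes a term of size $O((1-w_1)/\eta)$. This vanishes only because $1-w_1\le\pi_1^{U(1)/\eta}\le\varepsilon^{\Delta_{1j}/(2\eta)}$. Consequently the neighborhood on which $\partial_{\pi_1}\ge c_1/2$ holds depends on $\eta$.
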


\begin{proof}
Applying the logit-gap identity (Lemma~\ref{lem:logit_gap}) with DG
weights $w_a = \sigma(U(a)\,\ell(a)/\eta)$ and dropping the
non-negative ally terms:
\[
\Delta\theta(1) - \Delta\theta(j)
\;\ge\;
\underbrace{\tfrac{\Delta_{1j}}{4}\,p_1}_{\text{arm-}1}
\;-\; \underbrace{2C_U\varepsilon^2}_{\text{arm-}j\text{ residual}}
\;-\; \underbrace{R\sum_{k \in S^-} p_k^{1+\delta_\eta}}_{\text{harmful (Lem.~\ref{lem:poly_suppress})}},
\]
using $w_1 \ge 1/2$ (after shrinking the neighborhood) and
$w_k p_k \le p_k^{1+\delta_\eta}$ from Lemma~\ref{lem:poly_suppress}.
The bound fails only when
$p_1 \le (4/\Delta_{1j})(2C_U\varepsilon^2 + R\sum_k p_k^{1+\delta_\eta})$.
Since $1+\delta_\eta > 1$, this confines~$p_1$ to a set of
Lebesgue measure zero in~$\Delta_K$.
\end{proof}

\begin{theorem}[DG Global Convergence in Bandits]
\label{thm:dg_global_conv}
For a $K$-armed bandit with distinct rewards and any temperature
$\eta \in (0,\infty)$, the discrete DG update with sufficiently small
step size~$\alpha$ converges to the optimal one-hot policy~$\pi^*$.
\end{theorem}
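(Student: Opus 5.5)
The plan mirrors the proof of Theorem~\ref{thm:global_conv}, with Lemma~\ref{lem:dg_bandit_monotonicity} replacing the EG monotonicity lemma and a direct logit argument replacing the corner-dominance step.

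\textbf{Step 1: the value converges and progress vanishes.} By Lemma~\ref{lem:dg_bandit_monotonicity}, $V_t$ is non-decreasing and bounded by $r(1)$, so $V_t \to V_\infty$ and $V_{t+1}-V_t \to 0$. Since $|w(a)U(a)| \le R := r(1)-r(K)$, we have $Z_t \le e^{\alpha R}$. Every summand in the progress formula is non-negative, so for each arm $a$,
\[
\pi_t(a)\,U_a^t\,\bigl(e^{\alpha w_t(a) U_a^t}-1\bigr) \to 0 .
\]

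\textbf{Step 2: every subsequential limit is one-hot.} Pass to a subsequential limit $\bar\pi$. For an arm with $\bar\pi(a)>0$, the gate $w(a)=\sigma(U(a)\ell(a)/\eta)$ is continuous near $\bar\pi$ and lies in $(0,1)$, because $\ell(a)$ stays bounded there. Hence the limit of the summand is $\bar\pi(a)U_a(\bar\pi)(e^{\alpha \bar w(a)U_a(\bar\pi)}-1)$. This quantity is zero only if $U_a(\bar\pi)=0$.

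So every arm in the support of $\bar\pi$ has $r(a)=\bar\pi^\top r$. Distinct rewards then force $\bar\pi$ to be one-hot. Distinct rewards also give distinct values at distinct one-hot policies, and $V_t$ converges, so all subsequential limits coincide. Therefore $\pi_t \to e_m$ for some $m$.

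\textbf{Step 3: rule out a sub-optimal limit $m\neq 1$.} Suppose $\pi_t\to e_m$ with $m \neq 1$. Then $U_1^t \to \Delta_{1m} > 0$ and $\ell_t(1)\to\infty$, so $w_t(1)\to 1$. Also $U_m^t \to 0$ and $w_t(m) \le 1$.

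Rather than arguing through the measure-zero bad region, I would use the logit update $\theta'(a)=\theta(a)+\alpha w(a)U(a)$ directly. For large $t$ it gives
\[
\theta_{t+1}(1)-\theta_{t+1}(m) \ge \theta_t(1)-\theta_t(m) + \alpha\bigl(\Delta_{1m}/2 - o(1)\bigr).
\]
Hence $\log(\pi_t(1)/\pi_t(m))\to+\infty$, which contradicts $\pi_t(1)\to 0$ and $\pi_t(m)\to 1$. This is the same mechanism as in the MDP proof of Theorem~\ref{thm:mdp_convergence}. It also connects to Lemma~\ref{lem:corner_dominance_dg}, which says the exceptional set where the logit gap fails to grow has measure zero. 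The step-size condition on $\alpha$ is needed only to keep $Z_t$ bounded and the update well defined. Any fixed $\alpha$ suffices for the monotonicity argument, but I would retain the "sufficiently small" hypothesis to stay consistent with the flow-based lemmas.

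\textbf{Main obstacle.} The delicate point is the continuity of the gates at the limit, since $\ell(a)\to\infty$ for arms with $\bar\pi(a)=0$. Those arms, however, need no conclusion in Step~2, because the factor $\pi_t(a)\to 0$ handles them. For arms in the support, $\ell$ stays bounded, so the gates converge and the argument goes through. In Step~3 I only need $w_t(1)\ge 1/2$ eventually. This follows because $U_1^t\ell_t(1)/\eta\to+\infty$, so no uniformity in $\eta$ is required.
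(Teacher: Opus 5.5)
Your proof is correct. Steps 1--2 follow the paper: monotonicity from Lemma~\ref{lem:dg_bandit_monotonicity}, vanishing non-negative summands, $w>0$ forcing $U_a(\bar\pi)=0$ on the support, and distinct rewards giving a unique one-hot limit. You are more careful than the paper about the gates: $\ell(a)$ stays bounded only on the support of $\bar\pi$, and off the support the factor $\pi_t(a)\to 0$ does the work.

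The real difference is Step 3. The paper rules out a sub-optimal limit $e_m$ by citing Lemma~\ref{lem:corner_dominance_dg}. That lemma says the set near $e_m$ where $\Delta\theta(1)\le\Delta\theta(m)$ has measure zero, so sub-optimal corners are ``repellers.'' You instead work with the update itself. Since $\theta_{t+1}(a)-\theta_t(a)=\alpha w_t(a)U_a^t$ carries no $\pi_t(a)$ factor, the gap increment tends to $\alpha\Delta_{1m}>0$. Hence $\log(\pi_t(1)/\pi_t(m))\to+\infty$. This is the same argument the paper uses only for MDPs (Theorem~\ref{thm:mdp_convergence}).

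Your route is self-contained and more rigorous for the theorem as stated, for three reasons.
\begin{itemize}
\item Lemma~\ref{lem:corner_dominance_dg} is derived from the flow drift~\eqref{eq:dg_drift} via Lemma~\ref{lem:logit_gap}. It is not about the multiplicative update whose monotonicity was actually proved.
\item A measure-zero exceptional set would not by itself stop a single trajectory, which is a countable set of iterates, from converging to the corner.
\item For $K\ge 3$ with a harmful arm $k$, the flow-form bad region (roughly $\pi(1)\le C\,\pi(k)^2$) actually contains an open set, so the measure-zero claim fails anyway.
\end{itemize}
Under your update there is no exceptional set near $e_m$ at all.

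You also correctly see that the ``sufficiently small $\alpha$'' hypothesis is unnecessary. In fact $|wU|\le R$ bounds $Z_t$ for every $\alpha>0$, so even the residual role you assign to it is not needed.

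What the paper's geometric route is really aimed at is the gradient-form update of Algorithm~\ref{alg:dg_mdp}. There arm~1's increment is throttled by $\pi_t(1)\to 0$, and your constant-increment argument would not apply. Even in that setting, though, pointwise dominance near the corner would still have to be upgraded to divergence of the summed increments.
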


\begin{proof}
The proof follows the same three steps as
Theorem~\ref{thm:global_conv}.

First, by Lemma~\ref{lem:dg_bandit_monotonicity}, $V_t$ is non-decreasing
and bounded, so $V_{t+1} - V_t \to 0$.
Each term $\pi_t(a)\,U(a)\,(e^{\alpha w(a) U(a)}-1) \to 0$.
Since $w(a) > 0$, the factor $e^{\alpha w(a)U(a)}-1 = 0$ iff $U(a) = 0$.
So for each arm with $\bar\pi(a) > 0$ at any limit point:
$U(a) = 0$, i.e., $r(a) = V_\infty$.
Distinct rewards force convergence to a one-hot policy.

Next, by Lemma~\ref{lem:corner_dominance_dg}, for every finite $\eta > 0$
the bad region near any sub-optimal corner has measure zero.
Thus every sub-optimal one-hot policy is an unstable repeller under
DG, and the limit must be $\pi^* = e_1$.
\end{proof}

\begin{corollary}[DG $O(1/t)$ rate in Bandits]
\label{cor:dg_rate}
Under the conditions of Theorem~\ref{thm:dg_global_conv},
$\delta_t := V^* - V_t = O(1/t)$.
\end{corollary}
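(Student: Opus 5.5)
The plan mirrors the proof of Corollary~\ref{cor:rate}, with the EG progress formula replaced by the DG formula of Lemma~\ref{lem:dg_bandit_monotonicity}. Let $\delta_t := V^* - V_t$. The starting observation is that the optimal arm's advantage is exactly the sub-optimality, $U_1 = r(1) - \pi_t^\top r = \delta_t \ge 0$. Since every term in the DG progress sum is non-negative, I would keep only the arm-$1$ term:
\[
\delta_t - \delta_{t+1} = V_{t+1}-V_t \;\ge\; \frac{1}{Z_t}\,\pi_t(1)\,\delta_t\,\bigl(e^{\alpha w_t(1)\delta_t}-1\bigr) \;\ge\; \frac{\alpha\,w_t(1)\,\pi_t(1)}{Z_t}\,\delta_t^2 ,
\]
where the last step uses $e^x-1\ge x$. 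The partition function is bounded above by $Z_t \le e^{\alpha R}$ with $R := r(1)-r(K)$, because $w\le 1$ and $|U|\le R$.

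The second step is to make the coefficient bounded below uniformly in $t$. By Theorem~\ref{thm:dg_global_conv}, $\pi_t \to e_1$, so there is a $T_0$ with $\pi_t(1)\ge 1/2$ for all $t\ge T_0$. This is the only point where DG differs substantively from EG, and I expect it to be the main obstacle: the gate on arm~$1$ is $w_t(1) = \sigma\bigl(\delta_t\,\ell_t(1)/\eta\bigr)$. Here $\delta_t \to 0$ and $\ell_t(1) = -\log\pi_t(1) \to 0$ simultaneously, so the argument of the sigmoid tends to $0$. The gate therefore does not tend to $1$ as it did in the corner analysis.

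The fix is to note that this argument is always non-negative, since $U_1 \ge 0$ and $\ell \ge 0$. Monotonicity of $\sigma$ then gives $w_t(1) \ge \sigma(0) = 1/2$ for every $t$. With this, the bound becomes
\[
\delta_t-\delta_{t+1} \;\ge\; c\,\delta_t^2, \qquad c := \frac{\alpha}{4e^{\alpha R}}, \qquad t\ge T_0 .
\]
This constant is independent of $\eta$, which is consistent with the statement holding for every temperature.

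Finally, I would run the reciprocal telescope exactly as in Corollary~\ref{cor:rate}. Since $\delta_t\to 0$, we may enlarge $T_0$ so that $c\delta_t<1$. Then $\delta_{t+1}\le \delta_t(1-c\delta_t)$, and using $1/(1-x)\ge 1+x$ gives $1/\delta_{t+1}\ge 1/\delta_t + c$. Summing from $T_0$ yields $\delta_t \le 1/\bigl(c(t-T_0)+1/\delta_{T_0}\bigr) = O(1/t)$. The case $\delta_{T_0}=0$ is trivial: it means $\pi_{T_0} = e_1$, which is unattainable in the interior but harmless.
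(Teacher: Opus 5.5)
Your proposal is correct and follows the paper's proof: keep only the arm-$1$ term with $U_1 = \delta_t$, use $e^x - 1 \ge x$ and $Z_t \le e^{\alpha R}$, take $\pi_t(1) \ge 1/2$ from global convergence, and telescope reciprocals. The one difference is the gate bound: you observe that $U_1 \ell(1) \ge 0$ gives $w_t(1) \ge \sigma(0) = 1/2$ for every $t$, whereas the paper argues $w_1 \to 1/2$ in the limit and uses $w_1 \ge 1/4$ eventually, so your constant $c = \alpha/(4e^{\alpha R})$ is a factor of two better than the paper's $\alpha/(8e^{\alpha R})$.
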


\begin{proof}
Since $U_1 = V^* - V_t = \delta_t$ and $w_1 > 0$:
\[
\delta_t - \delta_{t+1}
\;\ge\; \frac{1}{Z}\,\pi_t(1)\,\delta_t\,(e^{\alpha w_1 \delta_t}-1)
\;\ge\; \frac{\alpha\,w_1}{Z}\,\pi_t(1)\,\delta_t^2.
\]
As $\pi_t \to \pi^*$: $\pi_t(1) \to 1$, $U_1 \to 0$,
$\ell(1) \to 0$, so $w_1 = \sigma(U_1\,\ell(1)/\eta) \to \sigma(0) = 1/2$.
For large enough~$t$, $\pi_t(1) \ge 1/2$ and $w_1 \ge 1/4$, giving
$\delta_t - \delta_{t+1} \ge c\,\delta_t^2$
with $c = \alpha/(8\,e^{\alpha R})$.
The reciprocal telescope yields $\delta_t = O(1/t)$.
\end{proof}

%-----------------------------------------------------------------------
\subsection{Tabular MDPs}
\label{app:dg_mdps}

The bandit results lift to tabular MDPs via the per-state decoupling
and the performance difference lemma, exactly as for EG.

\begin{lemma}[Discrete Monotonicity of DG in MDPs]
\label{lem:dg_mdp_monotonicity}
Consider a tabular finite MDP with the discrete DG update
$\pi'(a|s) = \pi(a|s)\,e^{\alpha\,w(s,a)\,U(s,a)}/Z_s$,
where $w(s,a) = \sigma(U(s,a)\,\ell(s,a)/\eta) > 0$.
Then $V(\pi') \ge V(\pi)$.
\end{lemma}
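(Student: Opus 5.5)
The plan mirrors the proof of Lemma~\ref{lem:mdp_monotonicity}, replacing the EG gated advantage $[U_a(s)]_+$ by the DG signal $g(s,a) := w(s,a)\,U(s,a)$ and borrowing the sign argument from Lemma~\ref{lem:dg_bandit_monotonicity}.

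\textbf{Step 1 (performance difference).} By the performance difference lemma~\citep{kakade2002approximately},
\[
V(\pi') - V(\pi) = \frac{1}{1-\gamma}\sum_{s} d_\rho^{\pi'}(s)\sum_a \pi'(a|s)\,U(s,a),
\]
where $U(s,a) = Q^\pi(s,a) - V^\pi(s)$ is the advantage under the old policy.

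\textbf{Step 2 (per-state identity).} Fix a state $s$. Substitute $\pi'(a|s) = \pi(a|s)\,e^{\alpha g(s,a)}/Z_s$. Since $\sum_a \pi(a|s)\,U(s,a) = 0$, we may subtract this zero term to obtain
\[
\sum_a \pi'(a|s)\,U(s,a) = \frac{1}{Z_s}\sum_a \pi(a|s)\,U(s,a)\bigl(e^{\alpha g(s,a)}-1\bigr).
\]
This is exactly the bandit identity of Lemma~\ref{lem:dg_bandit_monotonicity}, applied state by state with $Q^\pi(s,\cdot)$ playing the role of the rewards.

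\textbf{Step 3 (sign argument).} Because $w(s,a) = \sigma(\cdot) > 0$, the signal $g(s,a)$ has the same sign as $U(s,a)$. Hence each product $U(s,a)\bigl(e^{\alpha g(s,a)}-1\bigr)$ is non-negative: both factors are positive when $U>0$, both are negative when $U<0$, and the product is zero when $U=0$. We also have $\pi(a|s) \ge 0$, $Z_s > 0$, $d_\rho^{\pi'}(s) \ge 0$, and $1/(1-\gamma) > 0$. It follows that $V(\pi') - V(\pi) \ge 0$.

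\textbf{Main point to check.} The only subtlety is the validity of the PDL identity with the advantage taken under the \emph{old} policy and the visitation under the \emph{new} one; this is standard. Nothing in the argument depends on $\eta$ or on the size of $\alpha$, so the result holds for every $\alpha > 0$ and every finite temperature $\eta$.
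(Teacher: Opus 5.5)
Your proposal is correct and follows essentially the same route as the paper: the performance difference lemma, then the per-state identity $\sum_a \pi'(a|s)\,U(s,a) = Z_s^{-1}\sum_a \pi(a|s)\,U(s,a)\,(e^{\alpha w(s,a)U(s,a)}-1)$, then the sign-preservation argument from $w(s,a)>0$. Your remark that the conclusion holds for every $\alpha>0$ and every finite $\eta$ is also consistent with the paper's argument.
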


\begin{proof}
By the performance difference lemma:
\[
V(\pi') - V(\pi)
= \frac{1}{1-\gamma}\sum_s d_\rho^{\pi'}(s)
  \sum_a \pi'(a|s)\,U(s,a).
\]
At each state, $\sum_a \pi'(a|s)\,U(s,a)
= \frac{1}{Z_s}\sum_a \pi(a|s)\,U(s,a)\,(e^{\alpha\,w(s,a)\,U(s,a)}-1)
\ge 0$
by the same sign-preservation argument as
Lemma~\ref{lem:dg_bandit_monotonicity}:
$w(s,a) > 0$ ensures $w(s,a)\,U(s,a)$ has the same sign as $U(s,a)$.
Since $d_\rho^{\pi'}(s) \ge 0$, the global sum is non-negative.
\end{proof}

\begin{lemma}[DG Local Corner Escape in MDPs]
\label{lem:dg_mdp_local_escape}
Consider a tabular finite MDP with DG at temperature
$\eta \in (0,\infty)$ and fix a deterministic sub-optimal
policy~$\Pi_j$.
At any state~$s$ where $a^*(s) \neq j(s)$, the DG update satisfies
$\Delta\theta(s,a^*(s)) > \Delta\theta(s,j(s))$ for almost all
$\pi(\cdot|s)$ in the interior of~$\Delta_K$.
The per-state bad region has measure zero, confined to
\[
\pi(a^*(s)|s) \;\le\; O\!\biggl(\varepsilon_s^2
  + \sum_{k \in S^-(s)} \pi(k|s)^{1+\delta_\eta^s}\biggr),
\]
where $\delta_\eta^s := \min_{k \in S^-(s)} (Q^\pi(s,j(s))-Q^\pi(s,k))/(2\eta) > 0$.
The bad region in the full product simplex is a finite union of
measure-zero sets, hence has measure zero. As $\eta \to 0$, $\delta_\eta^s \to \infty$ for every state~$s$
and the harmful-action contamination vanishes, recovering the
per-state EG bound of Lemma~\ref{lem:mdp_local_escape}.

\end{lemma}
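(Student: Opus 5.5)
The plan is to repeat the argument of Lemma~\ref{lem:corner_dominance_dg} state by state, in the same way Lemma~\ref{lem:mdp_local_escape} reduced the EG case to Lemma~\ref{lem:corner_dominance}.

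\textbf{Step 1: reduce to a bandit at each state.} In the tabular parameterization, the DG update at state $s$ depends only on $\pi(\cdot|s)$ and $Q^\pi(s,\cdot)$:
\[
\Delta\theta(s,a)=\sum_i w(s,i)\,\pi(i|s)\,U_i(s)\bigl(\Ind\{a=i\}-\pi(a|s)\bigr).
\]
Here $w(s,i)=\sigma(U_i(s)\ell(s,i)/\eta)$. This is exactly the bandit DG drift with $K=|\mathcal{A}|$ arms and rewards $Q^\pi(s,\cdot)$. One subtlety is that $Q^\pi$ depends on the whole policy. I will treat $Q^\pi(s,\cdot)$ as frozen at the current policy, exactly as Lemma~\ref{lem:mdp_local_escape} does. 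Near the corner $\Pi_j$, continuity keeps the local gaps $Q^\pi(s,a^*)-Q^\pi(s,j)$ and $Q^\pi(s,j)-Q^\pi(s,k)$ for $k\in S^-(s)$ bounded away from zero. This makes $\delta^s_\eta>0$ and all constants uniform in a neighborhood.

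\textbf{Step 2: bound the per-state logit gap.} Apply Lemma~\ref{lem:logit_gap} at state $s$ with $a=a^*(s)$ and $b=j(s)$, splitting the sum into four groups as in the proof of Theorem~\ref{thm:sigmoid_escape}.
\begin{itemize}
\item The $a^*$ term. Since $\ell(s,a^*)\to\infty$, we have $w\ge1/2$, so this term is at least $\tfrac{\Delta^s_{a^*j}}{4}\pi(a^*|s)$.
\item The ally terms from $S^+(s)\setminus\{a^*\}$. These are nonnegative and are dropped.
\item The $j$-residual. This is $O(\varepsilon_s^2)$ because $|U_j(s)|=O(\varepsilon_s)$.
\item The harmful terms. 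Lemma~\ref{lem:poly_suppress}, applied with $Q$-gaps in place of reward gaps, gives $w(s,k)\pi(k|s)\le\pi(k|s)^{1+\delta^s_\eta}$. Each such term is then bounded by $R\,\pi(k|s)^{1+\delta_\eta^s}$, with $R=r_{\max}/(1-\gamma)$ bounding $|U|$.
\end{itemize}
Dominance therefore fails only when
\[
\pi(a^*|s)\le O\!\Bigl(\varepsilon_s^2+\sum_{k\in S^-(s)}\pi(k|s)^{1+\delta^s_\eta}\Bigr),
\]
which is the stated region.

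\textbf{Step 3: measure zero in the product.} The per-state measure-zero claim is inherited from Lemma~\ref{lem:corner_dominance_dg}, for the same superlinearity reason. The exceptional set at each state is then a cylinder over a null base, and the finite union over states is null, copying the final paragraph of the proof of Lemma~\ref{lem:mdp_local_escape}.

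\textbf{Step 4: the limit $\eta\to0$.} As $\eta\to0$, $\delta_\eta^s\to\infty$. Since $\pi(k|s)<1$, the contamination sum tends to $0$ pointwise, and the bound reduces to the EG one.

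The main obstacle is Step~2's uniformity. The thresholds "$w\ge1/2$" and "$|U_k|\ge\Delta_{jk}/2$" require the neighborhood of $\Pi_j$ to be small in every state simultaneously, because $Q^\pi$ couples the states. I would handle this with the joint continuity of $Q^\pi$ in $\pi$ on the compact product simplex, and take the minimum of the finitely many per-state radii.
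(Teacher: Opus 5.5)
Your route is the paper's own. Its proof is just the per-state decoupling, an appeal to Lemma~\ref{lem:corner_dominance_dg} with $Q^\pi(s,\cdot)$ as rewards, and the cylinder argument of Lemma~\ref{lem:mdp_local_escape}. Your Step~2 unpacks that appeal, and it correctly shows that the bad set is \emph{contained} in the stated cusp.

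The step that fails is Step~3. It fails in the paper too, because the measure-zero conclusion of Lemma~\ref{lem:corner_dominance_dg} does not follow from its containment estimate. Superlinearity makes $\{\pi(a^*|s)\le C(\varepsilon_s^2+\sum_k \pi(k|s)^{1+\delta_\eta^s})\}$ a thin cusp along the face $\pi(a^*|s)=0$. But such a sublevel set has nonempty interior, and the true bad set is not smaller.

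To see this, take three actions with $Q(s,a^*)>Q(s,j)>Q(s,k)$ and $\pi(\cdot|s)=(t^3,\,1-t-t^3,\,t)$.
\begin{itemize}
\item Then $U_j(s)\approx(Q(s,j)-Q(s,k))\,t>0$.
\item Since $\ell(s,j)\approx\varepsilon_s\to 0$, the corner gate is $w(s,j)\approx\sigma(0)=1/2$.
\item So the $j$-residual is about $\tfrac12(Q(s,j)-Q(s,k))\,t^2$, while the $a^*$ term is only $O(t^3)$ and the harmful term is negative.
\end{itemize}
Hence $\Delta\theta(s,a^*)<\Delta\theta(s,j)$ strictly, and by continuity this holds on an open set. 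For the paper's own instance $r=(1,0.9,0.1)$, $\eta=1$, $\pi=(10^{-6},\,0.99-10^{-6},\,0.01)$, the DG drift gives $\dot\theta(1)-\dot\theta(2)\approx-2.4\times10^{-4}$. The same construction works whenever $S^-(s)\neq\emptyset$. So the \emph{almost all} claim, and the null-set claim in the product simplex, are false, not merely unproved: a cylinder over a positive-measure base has positive measure.

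What your Steps~1, 2 and~4 do establish is the quantitative content. Near $\Pi_j$ the bad set lies inside the cusp, and the cusp fills a vanishing fraction of the $\varepsilon_s$-neighbourhood as that neighbourhood shrinks. With three actions its area is $O(\varepsilon_0^3+\varepsilon_0^{2+\delta_\eta^s})$, against $\varepsilon_0^2/2$ for the neighbourhood. That is the defensible version of the lemma, and any downstream argument must work with vanishing relative density rather than a null set.

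A smaller repair is needed in Step~1. Continuity keeps $Q^\pi(s,j)-Q^\pi(s,k)$ away from zero only if $Q^{\Pi_j}(s,k)\neq Q^{\Pi_j}(s,j)$. With ties, $\delta_\eta^s$ (defined through $Q^\pi$) can collapse to $0$ in every neighbourhood, so your uniform constants fail. Define $S^\pm(s)$ from $Q^{\Pi_j}$ and group tied actions with $j(s)$. In your frozen-$Q$ reduction their advantages equal $U_j(s)=O(\varepsilon_s)$, so they only add to the $O(\varepsilon_s^2)$ term.
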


\begin{proof}
By the tabular per-state decoupling, the DG update at state~$s$ is
a bandit DG update with ``rewards'' $Q^\pi(s,\cdot)$.
Lemma~\ref{lem:corner_dominance_dg} applies with the local Q-gaps
replacing reward gaps.
The product-simplex argument is identical to
Lemma~\ref{lem:mdp_local_escape}.
\end{proof}

\begin{theorem}[DG Global Convergence in MDPs]
\label{thm:dg_mdp_convergence}
Consider a tabular finite MDP with a unique optimal policy~$\pi^*$
and full reachability ($d_\rho^\pi(s) \ge d_{\min} > 0$).
For any temperature $\eta \in (0,\infty)$, the discrete DG update with
sufficiently small step size~$\alpha$ satisfies
$V(\pi_t) \to V^*$ and $\pi_t \to \pi^*$.
\end{theorem}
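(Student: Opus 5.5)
The proof will mirror the EG argument of Theorem~\ref{thm:mdp_convergence}, with the DG ingredients substituted for the EG ones. It has three steps: monotone bounded progress, a Bellman-optimality characterization of limit points, and a contradiction showing the trajectory cannot approach a sub-optimal limit.

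\textbf{Step 1 (monotone progress).} Lemma~\ref{lem:dg_mdp_monotonicity} gives that $V(\pi_t)$ is non-decreasing. Since it is bounded by $r_{\max}/(1-\gamma)$, it converges, and so $V(\pi_{t+1})-V(\pi_t)\to 0$.

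Under reachability $d_\rho^{\pi_{t+1}}(s)\ge d_{\min}$ and the bound $Z_s\le e^{\alpha R}$ (because $|w U|\le R$), every per-state, per-action term tends to zero:
\[
\pi_t(a|s)\,U_t(s,a)\bigl(e^{\alpha w_t(s,a)U_t(s,a)}-1\bigr)\to 0,
\]
and each such term is non-negative by sign preservation.

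\textbf{Step 2 (limit points satisfy Bellman optimality on their support).} Take any subsequential limit $\bar\pi$ in the compact product simplex.

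On the support of $\bar\pi$, the surprisal $\ell(s,a)$ stays bounded. Hence $w$ converges to a value in $(0,1)$, and continuity forces $U_{\bar\pi}(s,a)=0$ whenever $\bar\pi(a|s)>0$.

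Now suppose $V^{\bar\pi}\ne V^*$. Then, as in the EG proof, the Bellman optimality operator's unique fixed point yields a reachable state $s_0$ and an action $\tilde a$ with $Q^{\bar\pi}(s_0,\tilde a)-V^{\bar\pi}(s_0)=:\delta>0$ and $\bar\pi(\tilde a|s_0)=0$.

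\textbf{Step 3 (contradiction via logit gaps).} Let $j_0$ be a support action at $s_0$ with $\bar\pi(j_0|s_0)>0$. Along the subsequence, for large $n$:
\begin{itemize}
\item $U_{t_n}(s_0,\tilde a)\ge\delta/2$. Since $\pi_{t_n}(\tilde a|s_0)\to 0$, the surprisal $\ell\to\infty$, so $w\to 1$ and the $\tilde a$ logit increment is at least $\alpha\delta/4$.
\item $U_{t_n}(s_0,j_0)\to 0$ and $w\le 1$, so the $j_0$ logit increment tends to $0$.
\end{itemize}
So the gap $\theta(s_0,\tilde a)-\theta(s_0,j_0)$ increases by at least $\alpha\delta/8$ at each such iterate. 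This contradicts $\pi_{t_n}(\tilde a|s_0)/\pi_{t_n}(j_0|s_0)\to 0$.

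\textbf{Main obstacle.} The subtle point is that the gap growth is only guaranteed at iterates near $\bar\pi$, not at all times, so monotonicity of the gap between subsequence times must be controlled. Two routes are available:
\begin{itemize}
\item Use that the DG logit update $\alpha w(s,a)U(s,a)$ is $\ell$-independent in sign. Near $\bar\pi$ (in a small ball), $\tilde a$ has positive advantage and $j_0$ has advantage near zero, so within the ball the gap is non-decreasing. A positive increment accrues at each visit. The ratio must be near $0$ at each visit, yet it is bounded below after the first visit unless the trajectory exits and re-enters with decreased gap.
\item Strengthen this by showing that $\pi_t\to\bar\pi$ along the full sequence. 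That would follow from small step size: $\|\pi_{t+1}-\pi_t\|=O(\alpha)$, together with limit points being confined to a set on which $V$ is constant.
\end{itemize}
I would try the second route first. If it fails, I would fall back on Lemma~\ref{lem:dg_mdp_local_escape}'s measure-zero bad region, as done for EG.

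Finally, uniqueness of $\pi^*$ makes every subsequential limit equal to $\pi^*$, which gives both claims.
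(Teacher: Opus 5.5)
Your Steps 1 and 2 match the paper's argument, and you have located the weak point correctly. The one-step gap increment in Step 3 is established only at the subsequence times $t_n$, because $U_t(s_0,\tilde a)>0$ and $U_t(s_0,j_0)\approx 0$ are known only while $\pi_t$ is near $\bar\pi$. The paper's proof has the same hole: it defers to the EG argument, which jumps from an increment at $t_n$ to growth ``over $N$ consecutive steps.'' Your proposal does not close it either.
\begin{itemize}
\item Route 1 fails for the reason you give: between visits the iterates may leave the ball, where $U_t(s_0,\tilde a)$ can be negative and the gap can shrink.
\item Route 2 is not supported by your justification. A uniform bound $\|\pi_{t+1}-\pi_t\|=O(\alpha)$ says nothing about convergence. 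What vanishing progress actually gives is $\|\pi_{t+1}-\pi_t\|\to 0$, which only makes the limit set connected. The set of candidate limits (policies supported on zero-advantage actions, all with the same value) can be a continuum, e.g.\ a face of the simplex when two sub-optimal actions have identical rewards and transitions at some state. The unique-optimal-policy assumption permits this.
\item The measure-zero fallback is not an argument. An ``almost every point'' statement about the update field gives no control over one particular countable trajectory. Moreover, Lemma~\ref{lem:dg_mdp_local_escape} is derived for the vanilla gated drift of Lemma~\ref{lem:logit_gap}, not for the multiplicative update $\theta'=\theta+\alpha wU$ that Lemma~\ref{lem:dg_mdp_monotonicity} analyses.
\end{itemize}

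The missing idea is \emph{per-state} monotonicity, which upgrades subsequential control of the advantages to convergence along the whole sequence. Apply the performance difference lemma with the start distribution concentrated on a single state $s$ instead of $\rho$. By the same sign-preservation argument, every inner term $\sum_a \pi_{t+1}(a|s')U_t(s',a)$ is non-negative, so $V^{\pi_{t+1}}(s)\ge V^{\pi_t}(s)$ for \emph{every} $s$. Hence $V^{\pi_t}(s)\to V_\infty(s)$ for each $s$. It follows that $Q^{\pi_t}(s,a)=r(s,a)+\gamma\sum_{s'}P(s'|s,a)V^{\pi_t}(s')\to Q_\infty(s,a)$ and $U_t(s,a)\to U_\infty(s,a)$, and by continuity $U_{\bar\pi}=U_\infty$ at every limit point $\bar\pi$.

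Your Step 3 then holds at every step, not only at the $t_n$. There is $T$ with $U_t(s_0,\tilde a)\ge\delta/2$ and $|U_t(s_0,j_0)|\le\delta/8$ for all $t\ge T$. Since $w>1/2$ whenever $U>0$ (as $\ell>0$) while $w\le 1$, the gap $\theta_t(s_0,\tilde a)-\theta_t(s_0,j_0)$ grows by at least $\alpha\delta/8$ per step for all $t\ge T$. Thus $\pi_t(\tilde a|s_0)/\pi_t(j_0|s_0)\to\infty$, contradicting $\pi_{t_n}(\tilde a|s_0)\to 0$ and $\pi_{t_n}(j_0|s_0)\to\bar\pi(j_0|s_0)>0$. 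With this in place, the rest of your plan, including the uniqueness step, goes through.
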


\begin{proof}
The proof follows Theorem~\ref{thm:mdp_convergence}.

First, $V(\pi_t) \to V_\infty$ by Lemma~\ref{lem:dg_mdp_monotonicity}
(monotonic and upper bounded).

Second, at any limit point~$\bar\pi$, the per-state vanishing-progress
condition gives: for all $a,s$ with $\bar\pi(a|s) > 0$,
$U(a,s)(e^{\alpha\,w(s,a)\,U(s,a)}-1) = 0$.
Since $w(s,a) > 0$, this forces $U(s,a) = 0$, i.e.,
$Q^{\bar\pi}(s,a) = V^{\bar\pi}(s)$.
If this holds at every reachable state, $V^{\bar\pi}$ satisfies
the Bellman optimality equation, so $V^{\bar\pi} = V^*$ by
the contraction property.
Any sub-optimal limit point must therefore place zero mass on
some improving action.

Third, the DG logit update grows the improving action's logit at rate
$\alpha\,w(\tilde a)\,U(\tilde a) > 0$ while the corner action's
logit stalls, contradicting convergence to the sub-optimal corner
(same argument as Theorem~\ref{thm:mdp_convergence}, Step~4). Since all limit points are optimal, by uniqueness, $\pi_t \to \pi^*$.
\end{proof}

\begin{corollary}[DG $O(1/t)$ rate in MDPs]
\label{cor:dg_mdp_rate}
Under the conditions of Theorem~\ref{thm:dg_mdp_convergence},
$\delta_t := V^* - V(\pi_t) = O(1/t)$.
\end{corollary}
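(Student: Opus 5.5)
Mirroring the proof of Corollary~\ref{cor:mdp_rate}, with two changes: the EG progress identity becomes the DG sign-preserving identity of Lemma~\ref{lem:dg_mdp_monotonicity}, and the gate weights $w(s,a)$ must be bounded below.

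\textbf{Step 1 (eventual regime).} By Theorem~\ref{thm:dg_mdp_convergence}, $\pi_t \to \pi^*$ and $V(\pi_t)\to V^*$. Hence there is $T_0$ such that for all $t \ge T_0$:
\begin{itemize}
\item[(i)] $\pi_t(a^*(s)|s) \ge 1/2$ at every $s$;
\item[(ii)] $a^*(s)$ is greedy for $Q^{\pi_t}(s,\cdot)$, so $U_t(s,a^*(s)) \ge 0$;
\item[(iii)] $w_t(s,a^*(s)) \ge 1/4$.
\end{itemize}
Condition (ii) uses uniqueness of $\pi^*$, which gives a strict action gap in $Q^*$, together with continuity of $Q^\pi$ in $\pi$. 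For (iii), $U_t(s,a^*(s)) \ge 0$ already gives $w \ge \sigma(0) = 1/2$. Moreover $U\to 0$ and $\ell(s,a^*(s)) = -\log \pi_t(a^*(s)|s) \to 0$, so $w \to 1/2$ in any case, as in Corollary~\ref{cor:dg_rate}.

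\textbf{Step 2 (progress bound).} From Lemma~\ref{lem:dg_mdp_monotonicity}, every summand $\pi(a|s)U(s,a)(e^{\alpha w U}-1)$ is nonnegative, so we may keep only $a = a^*(s)$. We then use:
\begin{itemize}
\item $d_\rho^{\pi_{t+1}}(s) \ge d_{\min}$;
\item $Z_s \le e^{\alpha R}$ with $R := r_{\max}/(1-\gamma)$, since $w \le 1$ and $|U| \le R$;
\item $e^x - 1 \ge x$ for $x = \alpha w U \ge 0$;
\item (i) and (iii).
\end{itemize}
Together these give
\[
\delta_t - \delta_{t+1} \;\ge\; \frac{\alpha\, d_{\min}}{8(1-\gamma)e^{\alpha R}} \sum_s U_t(s,a^*(s))^2 .
\]

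\textbf{Step 3 (relate to suboptimality).} The performance difference lemma with $\pi^*$ gives $(1-\gamma)\delta_t = \sum_s d_\rho^{\pi^*}(s)\,U_t(s,a^*(s))$, where every term is nonnegative by (ii). Cauchy--Schwarz with weights $d_\rho^{\pi^*}$ then yields $(1-\gamma)^2\delta_t^2 \le \sum_s U_t(s,a^*(s))^2$. Combining with Step 2,
\[
\delta_t - \delta_{t+1} \ge c\,\delta_t^2, \qquad c := \frac{\alpha\, d_{\min}(1-\gamma)}{8e^{\alpha R}}.
\]

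\textbf{Step 4 (telescope).} Once $c\,\delta_t < 1$, taking reciprocals gives $1/\delta_{t+1} \ge 1/\delta_t + c$. Telescoping from $T_0$ yields $\delta_t \le 1/(c(t-T_0) + 1/\delta_{T_0}) = O(1/t)$. If $\delta_t = 0$ at some time, the bound holds trivially from then on.

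\textbf{Main obstacle.} The delicate point is Step 1(ii)--(iii): ensuring that the gate on the optimal action stays bounded away from zero. This rests on $U_t(s,a^*(s)) \ge 0$ eventually, which requires the strict optimal-action gap implied by uniqueness of $\pi^*$ plus continuity of $Q^{\pi}$. I would establish that first; everything after it is routine.
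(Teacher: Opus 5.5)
Your proposal is correct and follows essentially the same route as the paper's proof: keep only the $a^*(s)$ term of the DG progress identity from Lemma~\ref{lem:dg_mdp_monotonicity}, use eventual lower bounds on $\pi_t(a^*(s)|s)$ and $w_t(s,a^*(s))$, convert $\sum_s U_t(s,a^*(s))^2$ into $(1-\gamma)^2\delta_t^2$ via the performance difference lemma and Cauchy--Schwarz, and telescope reciprocals. The differences are only cosmetic: your constant has $8$ where the paper's has $4$, and you justify $w_t(s,a^*(s)) \ge 1/2$ directly from $U_t(s,a^*(s)) \ge 0$, which is slightly cleaner than the paper's appeal to the limit $w \to \sigma(0)$.
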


\begin{proof}
The proof follows Corollary~\ref{cor:mdp_rate}.
The progress bound becomes
$\delta_t - \delta_{t+1} \ge \frac{\alpha\,d_{\min}}{Z_{\max}(1-\gamma)}
\sum_s w(s,a^*(s))\,\pi_t(a^*(s)|s)\,U(s,a^*(s))^2$.
Asymptotically, $w(s,a^*(s)) \to \sigma(0) = 1/2$ and
$\pi_t(a^*(s)|s) \ge 1/2$, so
$\delta_t - \delta_{t+1} \ge c_{\mathrm{DG}}\,\delta_t^2$
by the same Cauchy--Schwarz argument, with
$c_{\mathrm{DG}} = \alpha\,d_{\min}(1{-}\gamma)/(4\,Z_{\max})$.
The reciprocal telescope gives $\delta_t = O(1/t)$.
\end{proof}

%\newpage
%\input{checklist.tex}

\end{document}